\author{Manish Raghavan\thanks{Harvard University
  (\email{mraghavan@seas.harvard.edu}). Most of this research has been done while the author was a graduate student at Cornell and an intern at Microsoft Research NYC.}
  \and
  Aleksandrs Slivkins\thanks{Microsoft Research, New York
  (\email{slivkins@microsoft.com}).}
 \and
 Jennifer Wortman Vaughan\thanks{Microsoft Research, New York
 (\email{jenn@microsoft.com}).}
 \and
 Zhiwei Steven Wu\thanks{Carnegie Mellon University
 (\email{zstevenwu@cmu.edu}). Most of this research has been done while the author was a postdoc at Microsoft Research NYC.}
 }
\def\citet{\cite}
\def\citep{\cite}
\newcommand{\rbr}[1]{\left(\,#1\,\right)}
\newcommand{\sbr}[1]{\left[\,#1\,\right]}
\newcommand{\cbr}[1]{\left\{\,#1\,\right\}}
\newcommand{\EqComment}[1]{\text{\emph{(#1)}}}
\newcommand{\OMIT}[1]{}
\newcommand{\ie}{{i.e.,~\xspace}}
\newcommand{\eg}{{e.g.,~\xspace}}
\newcommand{\xhdr}[1]{\vspace{2mm} \noindent{\bf #1}}
\newcommand{\polylog}{\operatornamewithlimits{polylog}}
\newcommand{\LDOTS}{\, ,\ \ldots\ ,}     
\newcommand{\Cel}[1]{{\lceil {#1} \rceil}}
\newcommand{\eps}{\varepsilon}
\newcommand{\mE}{\mathcal{E}}
\newcommand{\mF}{\mathcal{F}}
\newcommand{\mH}{\mathcal{H}}
\newcommand{\mN}{\mathcal{N}}
\newcommand{\initOneLiners}{%
 	\setlength{\itemsep}{0pt}
	\setlength{\parsep }{0pt}
  	\setlength{\topsep }{0pt}     	
}
\newenvironment{OneLiners}[1][\ensuremath{\bullet}]
    {\begin{list}
        {#1}
        {\initOneLiners}}
    {\end{list}}
\newenvironment{proofof}[1][]
	{\begin{proof}[Proof #1]}
    {\end{proof}}
\newcommand{\term}[1]{\ensuremath{\mathtt{#1}}\xspace}
\newcommand{\ALG}{\term{ALG}}
\newcommand{\perturb}{\eps^{\term{base}}}  
\newcommand{\Ebase}[1]{\mathbb{E}_{\term{base}}\sbr{ #1 }}
\newcommand{\Pbase}[1]{\mathbb{P}_{\term{base}}\sbr{ #1 }}
\newcommand{\KL}{\text{KL}}
\newcommand{\niceE}{\mE_{\term{nice}}} 
\newcommand{\simF}{\term{sim}} 
\newcommand{\con}{\term{con}} 
\newcommand{\CON}{\term{CON}} 
\newcommand{\PReg}{\term{PReg}} 
\newcommand{\rPReg}[1][{\rounds}]{\PReg^{#1}}  
\newcommand{\rReg}[1][{\rounds}]{R^{#1}} 
\newcommand{\bpreg}[1]{\PReg^{\rounds}(#1)}
\newcommand{\bpregi}[1]{\PReg^{#1}}
\newcommand{\basereg}[1]{R_0^{\rounds}\left(#1\right)}
\newcommand{\reg}[1]{R(#1)}
\newcommand{\regi}[1]{R^{#1}}
\newcommand{\BayesGreedy}{BatchBayesGreedy\xspace}
\newcommand{\FreqGreedy}{BatchFreqGreedy\xspace}
\newcommand{\GreedyStyle}{batch-greedy-style\xspace}
\newcommand{\Rew}[1][\theta]{\mathtt{Rew}_{#1}} 
\def\rounds{\mc T}
\newtheorem{remark}[theorem]{Remark}
\newtheorem{fact}[theorem]{Fact}
\def\R{\mathbb{R}}
\def\tran{^\top}
\newcommand{\p}[1]{\left(#1\right)}
\newcommand{\kl}[2]{\ensuremath{\text{KL}(#1 \, ||\, #2)}}
 \def\given{\;|\;}
\renewcommand{\b}[1]{\left[#1\right]}
\newcommand{\E}[1]{\mathbb{E}\b{#1}}
\newcommand{\D}{D}
\DeclareMathOperator*{\argmax}{arg\,max}
\DeclareMathOperator*{\argmin}{arg\,min}
\DeclareMathOperator{\Exp}{\mathbb{E}}
\def\R{\mathbb{R}}
\def\ty{{\lfloor t/Y \rfloor}}
\newcommand{\mc}[1]{\mathcal{#1}}
\def\tran{^{\top}}
\def\dx{\;dx}
\def\given{\;|\;}
\def\pmt{\overline \theta}
\def\pvt{\Sigma}
\def\bmt{\theta_t^{\textrm{bay}}}
\def\ab{a_t'}
\newcommand{\fmt}[1][t]{\theta_{#1}^{\textrm{fre}}}
\def\af{a_t}
\def\vrt{r_B}
\def\vrto{\mathbf{r}_{1:t_0-1}}
\def\Xto{X_{t_0-1}}
\def\Zto{Z_{t_0-1}}
\def\weights{w_B}
\def\prior{\mc P}
\def\thetahatt{\hat{\theta}_t}
\newcommand{\creg}[2]{\text{Regret}^{#1}(#2)}
\newcommand{\iR}[1]{R_{#1}} 
\def\hcat{\hat{c}_{a,t}}
\def\ZB{Z_B}
\def\WB{W_B}
\def\bg{\BayesGreedy}
\def\fg{\FreqGreedy}
\def\elt{\ensuremath{E_{\ell, t}}}
\begin{document} \title{Greedy Algorithm almost Dominates\\ in Smoothed Contextual Bandits%
\thanks{A version of our results have been announced in an extended abstract~\cite{externalities-colt18}, and fleshed out in the technical report~\citep{externalities-colt18-arxiv}. This paper is streamlined compared to \cite{externalities-colt18,externalities-colt18-arxiv}, focusing on the greedy algorithm. It has been available on {\tt arxiv.org/abs/2005.10624} since May 2020. The current version (since Dec'21) allows the random perturbations of context vectors to be correlated across actions, and adds a lower bound result.}}

\maketitle

\begin{abstract}
  Online learning algorithms, widely used to power search and content optimization on the web, must balance exploration and exploitation, potentially sacrificing the experience of current users in order to gain information that will lead to better decisions in the future. While necessary in the worst case, explicit exploration has a number of disadvantages compared to the greedy algorithm that always ``exploits'' by choosing an action that currently looks optimal. We ask under what conditions inherent diversity in the data makes explicit exploration unnecessary. We build on a recent line of work on the smoothed analysis of the greedy algorithm in the linear contextual bandits model. We improve on prior results to show that the greedy algorithm almost matches the best possible Bayesian regret rate of any other algorithm on the same problem instance whenever the diversity conditions hold. The key technical finding is that data collected by the greedy algorithm suffices to simulate a run of any other algorithm.  Further, we prove that Bayesian regret of the greedy algorithm is at most
$\tilde{O}(T^{1/3})$ in the worst case, where $T$ is the time horizon.

\end{abstract}

\begin{keywords}
Multi-armed bandits, linear bandits, greedy algorithm, smoothed analysis, data diversity, Bayesian regret
\end{keywords}

\section{Introduction}
\label{sec:intro}
Online learning algorithms are a key tool in web search and content optimization, adaptively learning what users want to see. In a typical application, each time a user arrives, the algorithm chooses among various content presentation options (\eg news articles to display), the chosen content is presented to the user, and an outcome (\eg a click) is observed. Such algorithms must balance \emph{exploration} (making potentially suboptimal decisions now for the sake of acquiring information that will improve decisions in the future) and \emph{exploitation} (using information collected in the past to make better decisions now). Exploration could degrade the experience of a current user, but improves user experience in the long run. This exploration-exploitation tradeoff is commonly studied in the online learning framework of \emph{multi-armed bandits}~\citep{Bubeck-survey12,slivkins-MABbook,LS19bandit-book}.

\normalmarginpar
Exploration is widely used, both in theory and in practice.
Yet, it has several important disadvantages.
%
First, exploration is wasteful and risky in the short term. It is undesirable for the current user, as something imposed only for the sake of the future users. Exploration may appear unfair, and may even be unethical or illegal in sensitive application domains such as medical decisions.
Second, exploration adds a layer of complexity to algorithm design (\eg see \citep{Langford-nips07,monster-icml14}), and its adoption at scale tends to require substantial systems support and buy-in from management \citep{MWT-WhitePaper-2016,DS-arxiv}. A system that only exploits would typically be cheaper to design and deploy.
Third, exploration runs into incentive issues when actions (\eg which product to buy) are controlled by users. In applications such as recommender systems, an algorithm can only encourage exploration via  recommendations and other provided information, but users would be reluctant to follow if it is not in their self-interest.%
\footnote{Making exploration compatible with users' incentives is possible, at least in theory, albeit costly; see \cite{IncentivizedExploration-chapter} for an overview of related research.}



\reversemarginpar
An algorithm without explicit exploration, a.k.a. the \emph{greedy algorithm}, always chooses the action that appears optimal according to current estimates of the problem parameters. Further, the greedy algorithm describes self-interested behavior of users in a recommendation system.%
\footnote{The formal model is as follows: users sequentially choose among available actions, after fully observing what happened with the previous users.}
The greedy algorithm it is known to perform poorly in a wide range of problem instances, yet it works remarkably well in some examples. A more detailed characterization for whether and when the greedy algorithm performs well --- put differently, whether and when exploration is not at all helpful --- is an important concern in the study of the exploration-exploitation tradeoff.



A recent line of work \citet{bastani2017exploiting,kannan2018smoothed}
analyzes conditions under which inherent diversity in the data makes explicit exploration unnecessary. They consider the \emph{linear contextual bandits}~\citep{Langford-www10,Reyzin-aistats11-linear,Csaba-nips11}, a standard variant of multi-armed bandits appropriate for content personalization scenarios. In particular, Kannan et al.~\cite{kannan2018smoothed} model data diversity via small perturbations of the context vectors, and focus on regret in expectation over these perturbations. They prove that the greedy algorithm achieves expected regret which scales as $\tilde{O}(\sqrt{T})$ in terms of the time horizon $T$.
This is the best regret rate that can be achieved in the worst case (\ie for all problem instances), even  without data diversity assumptions. However, this result does not resolve how the greedy algorithm compares to other algorithms under data diversity conditions, neither in the worst case nor for particular problem instances.



We expand on this line of work. We prove that under similar diversity conditions, the greedy algorithm almost matches the best possible Bayesian regret of \emph{any} algorithm \emph{on the same problem instance}. Known upper bounds on algorithms' Bayesian regret range from $\polylog(T)$ for some problem instances to $\tilde{O}(\sqrt{T})$ in the worst case, and each of them carries over to the greedy algorithm. Moreover, we prove that the Bayesian regret of the greedy algorithm scales as $\tilde{O}(T^{1/3})$ in the worst case, as long as there are at most $\polylog(T)$ feasible actions in each round.

The data diversity conditions in \citet{kannan2018smoothed} and this paper are inspired by the smoothed analysis framework of Spielman and Teng~\citet{SmoothedAnalysis-jacm04}, who proved that the expected running time of the simplex algorithm is polynomial for perturbations of any initial problem instance (whereas the worst-case running time has long been known to be exponential). Such disparity implies that very bad problem instances are brittle. We find a similar disparity for the greedy algorithm in our setting.

\subsection*{Our contributions}

We consider a Bayesian version of linear contextual bandits in which the latent weight vector $\theta$ is drawn from a known prior. In each round, an algorithm is presented several actions to choose from, each represented by a \emph{context vector}. The expected reward of an action is a linear product of $\theta$ and the corresponding context vector. The tuple of context vectors is drawn independently from a fixed distribution. In the spirit of smoothed analysis, we assume that this distribution has a small amount of jitter. Formally, in each round $t$ the tuple of context vectors is drawn from some fixed distribution, and then a small \emph{perturbation} $\eps_{a,t}$ is added to the context vector for each action $a$. The basic version adopted in \citet{kannan2018smoothed} is that each $\eps_{a,t}$ is an independent spherical Gaussian distribution; we call it the \emph{action-independent perturbation}. We allow a more general perturbation model, spelled out in Section~\ref{sec:model}, which can be correlated across actions, but independent across rounds and coordinates.
We are interested in Bayesian regret, i.e., regret in expectation over the Bayesian prior. Following the literature, we are primarily interested in the dependence on the time horizon $T$.


We focus on a batched version of the greedy algorithm, in which new data arrives to the algorithm's optimization routine in small batches, rather than every round. This property is essential for our analysis, and easy to implement in practice. As a restriction, it is well-motivated from a practical perspective: in high-volume applications data usually arrives to the ``learner" only after a substantial delay \citep{MWT-WhitePaper-2016,DS-arxiv}.

Our main result is that the greedy algorithm  matches the Bayesian regret \emph{of any algorithm} up to $\polylog(T)$ factors \emph{for each problem instance}, \ie fixing the Bayesian prior and the context distribution.
This holds for two natural versions of the batched greedy algorithm, Bayesian and frequentist, henceforth called \BayesGreedy and \FreqGreedy. For \BayesGreedy, the chosen action maximizes expected reward according to the Bayesian posterior. The regret bound holds for any Bayesian prior. \FreqGreedy estimates $\theta$ using ordinary least squares regression and chooses the best action according to this estimate. The regret bound and comes with an extra additive polylogarithmic factor, but is stronger in that the algorithm does not need to know the prior. This result requires a Gaussian prior, which can, however, be very concentrated.

The key insight is that the data collected with perturbed contexts can be used to simulate a run of any other algorithm $\ALG$, with the number of rounds scaled down by some factor $Y$. (This simulation arises only as a technique in the analysis.) It follows that \BayesGreedy at each round $t$ knows at least as much as $\ALG$ after $t/Y$ rounds, so  its selection is at least as good as that of $\ALG$. To handle the frequentist algorithm, we consider a hypothetical algorithm that receives the same data as \FreqGreedy, but chooses actions like \BayesGreedy. We analyze this hypothetical algorithm using a similar `simulation argument', and then upper-bound the difference in Bayesian regret compared to \FreqGreedy.


Next, we argue that our problem remains difficult despite perturbations. Specifically, we prove that any algorithm achieves Bayesian regret no better than $\tilde{\Omega}(\sqrt{T})$ in the worst case. This holds even if there are at most two feasible actions in each round, and even if perturbation size can be an absolute constant. For this  lower bound, perturbations on both actions are completely correlated (\ie identical).

Finally, we consider action-independent perturbation, and analyze Bayesian regret in the worst case over all Bayesian instances. We prove that LinUCB algorithm~\citep{Langford-www10,Reyzin-aistats11-linear,Csaba-nips11},
a standard algorithm for linear contextual bandits, achieves Bayesian regret $\tilde{O}(K^{2/3}\,T^{1/3})$ if there are at most $K$ feasible actions in each round. Consequently, a similar regret bound holds for \BayesGreedy and \FreqGreedy. The $\tilde{O}(T^{1/3})$ regret rate is a mathematical curiosity, as we are not aware of any published regret bounds between $\sqrt{T}$ and $\polylog(T)$; however, it is unclear if this regret rate is optimal. Regardless, we conclude that action-independent perturbation is substantially ``easier" compared to the general case, in light of the lower bound stated above.


\subsection*{Map of the paper}

The paper continues with related work (Section~\ref{sec:related-work}), model and preliminaries (Section~\ref{sec:model}), precise statements of the results (Sections~\ref{sec:bayesian_greedy}), and a detailed discussion of the techniques (Section~\ref{sec:bayesian_greedy-key}). The analysis is spelled out in Sections~\ref{app:pf_bg}-\ref{app:linucb}, for, resp., the greedy algorithms, the lower bound, and  LinUCB algorithm. Some tools are moved to the appendix so as not to interrupt the flow.

\section{Related Work}
\label{sec:related-work}

The greedy algorithm works well in some examples, and badly in some others. This has been a folklore knowledge for decades, and it has been confirmed in extensive recent experiments \citet{practicalCB-arxiv18}. One way to formalize a \emph{negative} result is to consider a Bayesian prior over 2-armed bandit instances. Then, with positive-constant probability over the prior, the greedy algorithm fails to explore the best arm, and therefore incurs a positive-constant regret in each round (see Chapter 11 in \cite{slivkins-MABbook}). This is a very general result, as it holds for any Bayesian prior.

\subsection*{Positive results on the greedy algorithm}
Most related to ours are papers by Kannan et al.~\cite{kannan2018smoothed} and Bastani et al.~\cite{bastani2017exploiting}.%
\footnote{An early version of Bastani et al.~\citet{bastani2017exploiting} (v2, Jun'17) is prior work relative to this paper. In particular, it focuses on the special case of two actions. Subsequent versions are concurrent work.}
Both study the greedy algorithm in linear contextual bandits with data diversity conditions. In particular, Kannan et al.~\cite{kannan2018smoothed} introduce the perturbation model adopted in our paper, focusing on the special case of action-independent perturbations. We provide a detailed comparison below.

We substantially improve over the $\tilde{O}(\sqrt{T})$ regret bound from Kannan et al.~\citet{kannan2018smoothed}: our main result applies per-instance rather than only in the worst-case, and allows perturbations to be correlated across actions. Going back to action-independent perturbations, as in \citet{kannan2018smoothed}, we also improve the worst-case bound on Bayesian regret to $\tilde{O}(T^{1/3})$ when there are only $\polylog(T)$ feasible actions in each round. However, these improvements come at the cost of some additional assumptions. First, we consider Bayesian regret, whereas their regret bound holds for each realization of $\theta$.
Second, they allow the context vectors to be chosen by an adversary before the perturbation is applied. Third, they extend their analysis to a somewhat more general model, in which there is a separate latent weight vector for every action (which amounts to a different model of perturbations). However, this extension relies on the greedy algorithm being initialized with a substantial amount of data.

Bastani et al.~\cite{bastani2017exploiting} show that the greedy algorithm achieves logarithmic regret in a version of linear contextual bandits that is incomparable to ours in several important ways. First, the actions share a common context
vector in each round, but are parameterized by different latent vectors. Then, playing a given arm reveals no information about the other arms, which makes their problem more difficult compared to ours.
To compensate for this difficulty, they posit a strong assumption on data diversity: essentially, that the distribution of contexts is approximately symmetric around the origin. It follows that for any pair of arms, each arm is better than the other for a constant
fraction of rounds. In contrast, our model allows the context distribution to be arbitrary, subject to a relatively small perturbation; in particular, the same action could be the best action in all rounds.
Third, a version of Tsybakov's \emph{margin condition} is assumed, which is
known to substantially reduce regret rates in bandit problems (see, \eg
\cite{Zeevi-colt10}).
Instead, we assume Gaussian perturbations, allowing us to make a finer-grained simulation argument that the
greedy algorithm is instance-optimal.



Acemoglu et al.~\cite{AcemogluMMO19} and Immorlica et al.~\cite{Jieming-unbiased18}%
\footnote{The early version of Acemoglu et al.~\cite{AcemogluMMO19} (from Nov'17) is prior work relative to this paper; subsequent versions are conrurrent work. Immorlica et al.~\cite{Jieming-unbiased18} is subsequent work.}
analyze the greedy algorithm from the economics perspective, providing positive results for ``greedy" self-interested behavior of users in a recommendation system. Acemoglu et al.~\cite{AcemogluMMO19} study heterogenous users with private types. In our terms, it is a version of contextual bandits in which the current context is not observed in the future rounds. Among other results, they prove that the greedy algorithm works well in this setting, under strong heterogeneity assumptions incomparable with yours. In particular, they postulate that a user arriving in each round inherently prefers each arm with (at least) a constant probability. Immorlica et al.~\cite{Jieming-unbiased18} constructs a data disclosure policy which reveals to each user the history for a predetermined subset of prior users. In our terms, they consider a bandit problem with constantly many arms, and a greedy algorithm operating on limited data as prescribed by this policy. They prove that such algorithm attains regret rates that are near-optimal for any bandit algorithm.

\subsection*{Technical aspects}

Any contextual bandit algorithm can be simulated using data collected by any other contextual bandit algorithm which independently randomizes actions in each round
\cite{Langford-wsdm11,Dudik-uai12}. Essentially, the required number of samples is inversely proportional to the smallest sampling probability across arms. While not very complicated technically, this approach works for contextual bandits (linear or not) without any additional assumptions. However, this approach fails in our setting because our data is collected by a \emph{deterministic} algorithm. Instead, our simulation uses a different approach, which relies on random perturbation of contexts.

The work on ``batched bandit algorithms" \cite{Perchet2015BatchedBP} assumes that the rounds are partitioned into ``batches" so that the algorithm cannot use the data from the current batch. The goal is to achieve efficient exploration despite this restriction. In contrast, we focus on the greedy algorithm rather than exploration, and invoke the batch property as a voluntary feature which helps in the analysis.

\subsection*{Linear contextual bandits}

The problem was introduced in \citet{Langford-www10}, motivated by personalized news recommendations. The non-contextual version stems from \citet{Auer-focs00}. Both versions have been studied extensively, see books \cite{Bubeck-survey12,LS19bandit-book} for background.

Algorithm LinUCB, which we discuss in Section~\ref{app:linucb}, implements `optimism under uncertainty', a common paradigm for problems with explore-exploit tradeoff. The algorithm was defined in \citet{Langford-www10}, and analyzed in \citep{Reyzin-aistats11-linear,Csaba-nips11}. (A non-contextual version of LinUCB was introduced earlier in \citet{Auer-focs00}, and analyzed in \cite{DaniHK-colt08}.)
The details of the algorithm differ subtly between the papers; we focus on the version from \citet{Csaba-nips11}.

LinUCB achieves regret
    $\tilde{O}(d\sqrt{T})$,
where $d$ is the dimension, for any number of actions \cite{Csaba-nips11}. Any algorithm suffers regret
    $\Omega(d\sqrt{T})$
in the worst case \cite{DaniHK-colt08}. LinUCB has been observed to perform well even when the problems are not linear \citep{semi-CB-nips16}.

\section{Our Model and Preliminaries}
\label{sec:model}

\normalmarginpar
We consider the model of \emph{linear contextual
  bandits}~\citep{Langford-www10,Reyzin-aistats11-linear,Csaba-nips11}. A learner operates over $T$ timesteps (a.k.a. rounds),
where $T$ is a known time horizon.%
\footnote{For intuition, each round typically corresponds to an interaction with a new user.}
Each round $t$ proceeds as follows. There are at most $K$ actions available, a.k.a. \emph{arms}. Denote the action set as
    $A_t \subset \cbr{ 1 \LDOTS K}$.
Each action $a\in A_t$ is associated with a \emph{context vector} $x_{a,t} \in \R^d$, which may contain features of the action and/or the round.
We assume that the tuple of context vectors
    $\rbr{ x_{a,t}: a\in A_t }$
is drawn independently from a fixed distribution $\D$.  The learner observes this tuple, selects an action $a_t\in A_t$, and observes reward $r_t$.
We assume that $r_t$ is drawn independently from some distribution determined by the chosen context vector $x_{a_t,t}$, and the expected reward is linear in this vector. More
precisely, we let $r_{a,t}$ be the reward of each action $a\in A_t$ if this action
is chosen in round $t$ (so that $r_t = r_{a_t,t}$), and posit an unknown vector $\theta\in\R^d$ such that
\[ \E{r_{a, t}\mid x_{a, t}} = \theta\tran x_{a, t} \quad\text{for any round $t$ and action $a\in A_t$}.\]
So far, it is a standard \emph{frequentist} formulation of linear contextual bandits. It is determined by time horizon $T$, dimension $d$, number $K$ of feasible actions per round, context distribution $\D$, and the latent vector $\theta$.

We consider a natural \emph{Bayesian} version, where $\theta$ drawn from a known Bayesian prior $\prior$. Thus, a problem instance consists of parameters $T,d,K$, context distribution $\D$, and Bayesian prior $\prior$. The prior can be arbitrary unless specified otherwise.

\OMIT{Throughout most of the paper, the realized rewards are
either in $\{0,1\}$ or are the expectation plus independent Gaussian
noise of variance at most $1$.
\mr{Are rewards ever $\{0, 1\}$ in this version?}}


The learner strives to maximize the expected total reward over $T$ rounds, or
    $\sum_{t=1}^T \E{\theta\tran x_{a, t}}$.
We focus on \emph{regret}, a standard performance measure which compares the learner to the all-knowing benchmark: a hypothetical algorithm that knows the best action in each round. Formally, we define the best context vector in round $t$ as
\[ x^*_t \in \argmax_{x \in \cbr{ x_{a, t}:\; a\in A_t}} \theta\tran x, \]
\ie a context vector which achieves the highest expected reward. Next,
\begin{align}\label{eq:regret-def}
\text{Regret}(T) = \textstyle
    \sum_{t=1}^T \theta\tran x_t^* -
\theta\tran x_{a_t, t}.
\end{align}
\emph{Expected regret} is defined as the expectation of \eqref{eq:regret-def} over the
context vectors, the rewards, and the algorithm's random seed. We are mainly interested in
\emph{Bayesian regret}, where the expectation is taken over all of the above and
the prior over $\theta$.

\subsection*{Data diversity}
We model data diversity via the following process, called \emph{perturbed context generation}. Fix round $t$, and recall that $A_t$ denotes the set of available actions. First, a tuple
    $\rbr{ \mu_{a,t}\in \R^d:\, a\in A_t}$
of \emph{mean context vectors} is drawn independently from some fixed distribution $\D_\mu$ over $(\R^d)^{|A_t|}$.
Then for each action $a\in A_t$, the context vector is
    $x_{a,t} = \mu_{a,t} + \eps_{a, t}$,
where $\eps_{a, t}\in \R^d$ is a zero-mean perturbation vector. Marginally, each perturbation  vector $\eps_{a, t}$ is distributed as $\mN(0,\rho_{a,t}\cdot I)$, a spherical Gaussian distribution over $\R^d$ with zero mean and per-coordinate standard deviation $\rho_{a,t}>0$. We consider two basic versions for correlation across actions:
\begin{itemize}
\item \emph{action-independent perturbation}:  each perturbation vector $\eps_{a,t}$ is an independent draw from $\mN(0,\rho I)$.

\item \emph{fully-action-correlated perturbation}: $\eps_{a,t} = \eps_t$ for all arms $a\in A_t$, where the (common) perturbation vector $\eps_t$ is an independent draw from $\mN(0,\rho I)$.
\end{itemize}
Our guarantees deteriorate if \emph{perturbation size} $\rho$ is very small.

We allow a more general model of action-correlation which interpolates between these two extremes defined above. We have a set covering $\mF_t$ of $A_t$, \ie a family $\mF_t$ of subsets of $A_t$ whose union equals $A_t$. For each subset $S\in \mF_t$, we have a \emph{base perturbation} $\perturb_{S,t}\in \R^d$, which is an independent draw from $\mN(0,\rho_{S,t}\cdot I)$, for some $\rho_{S,t}>0$. We sum up the base perturbations over all relevant subsets $S\in\mF_t$:
\begin{align}\label{eq:correlated-perturbation}
     \eps_{a,t} = \sum_{S\in \mF_t:\; a\in S} \perturb_{S,t} .
\end{align}
The paradigmatic case is that $\mF_t=\mF$ for all rounds $t$, but we allow it to change over time. Likewise, the paradigmatic case is that $\rho_{a,t} = \rho$ for all arms $a$ and rounds $t$, but we allow $\rho_{S,t}$ can vary for different subsets $S$ and rounds $t$. In the latter case, we summarize the dependence on the perturbations via the \emph{perturbation size}
    \[ \rho := \min_{t\in [T],\, a\in A_t}\; \max_{S\in \mF_t:\, a\in S} \rho_{S,t}. \]
Note that we ``use" the largest relevant perturbation for a given arm-round pair.



\OMIT{ 
\mredit{We will assume that $\eps_{a_1, t}$ and $\eps_{a_2, t}$ may be
correlated for $a_1 \ne a_2$, which leads to two natural special cases:
\begin{itemize}
  \item $\eps_{a_1, t}$ and $\eps_{a_2, t}$ are independent for
    all $a_1 \ne a_2$, i.e., each perturbation is an independent random Gaussian
    vector.
  \item $\eps_{a_1, t} = \eps_{a_2, t}$ for all $a_1, a_2$, i.e.,
    all contexts within a timestep $t$ are perturbed by the same random noise
    $\eps_t$.
\end{itemize}
}} 


We make several technical assumptions. First, the distribution $\D_\mu$ is such that each context vector has bounded $2$-norm, i.e., $\|\mu_{a,t}\|_2 \le 1$. It can be arbitrary otherwise. Second, the perturbation size needs to be sufficiently small compared to the dimension $d$,  $\rho\leq 1/\sqrt{d}$.
Third, the realized reward $r_{a,t}$ for each action $a$ and round $t$ is
    $r_{a,t} = x_{a,t}\tran \theta + \eta_{a,t}$,
the mean reward $x_{a,t}\tran \theta$ plus standard Gaussian noise $\eta_{a,t}\sim \mc N(0, 1)$.\footnote{Our analysis can be easily extended to handle reward noise of fixed variance, i.e.,
    $\eta_{a,t}\sim \mc N(0, \sigma^2)$.
\fg would not need to know $\sigma$. \bg would need to know either $\Sigma$ and $\sigma$ or just $\Sigma/\ \sigma^2$.}

\subsection*{Batched greedy algorithms}
We write $x_t$ for $x_{a_t,t}$, the context vector chosen at time $t$. The history up to round $t$ is the tuple $h_t = ((x_{1}, r_1) \LDOTS (x_t, r_t))$.

For the batch version of the greedy algorithm, time is divided in batches of $Y$ consecutive rounds each. When forming its estimate of the optimal action at round $t$, the algorithm may only use the history up to the last round of the previous batch, denoted $t_0$. We consider both Bayesian and frequentist versions, called \bg and \fg.

\bg forms a posterior over $\theta$ using prior $\prior$ and history $h_{t_0}$.  In round $t$ it chooses the action that maximizes reward in expectation over this posterior. This is equivalent to choosing
\begin{align}\label{eq:BG-est-defn}
 a_t = \argmax_a  x_{a,t}\tran \, \bmt, \quad
    \text{where $\bmt := \Exp[\theta \mid h_{t_0}] $ }.
\end{align}

\fg~does not rely on any knowledge of the prior. It chooses the best action according to the least squares estimate of $\theta$, denoted $\fmt$, computed with respect to history $h_{t_0}$:
\begin{align}\label{eq:FG-est-defn}
 \textstyle a_t = \argmax_a  x_{a, t}\tran \, \fmt, \quad
\text{where $\fmt = \argmin_{\theta'} \sum_{\tau = 1}^{t_0} ((\theta') \tran
x_{\tau} - r_{\tau})^2$}.
\end{align}

\subsection*{Empirical covariance matrix}
Fix round $t$. Let $X_t\in \R^{t\times d}$ be the \emph{context matrix}, a matrix whose rows are vectors $x_1 \LDOTS x_t\in \R^d $.
A $d\times d$ matrix
    \[ Z_t \textstyle  :=\sum_{\tau=1}^t x_\tau x_\tau\tran = X_t\tran X_t, \]
called the \emph{empirical covariance} matrix, is an important concept in some
of the prior work on linear contextual bandits (\eg
\cite{Csaba-nips11,kannan2018smoothed}), as well as in this paper.

\subsection*{A note on notation}  We adopt a common (albeit slightly non-standard) convention that $\tilde{O}(\cdot)$ hides $\polylog(T)$ factors, regardless of the expression in brackets. In particular, the expression in brackets is always interpreted as a function of $T$.

\section{Statement of the Results}
\label{sec:bayesian_greedy}

We prove that in expectation over the random perturbations, both greedy algorithms favorably compare to any other algorithm. For any specific problem instance, both algorithms match the Bayesian regret of any algorithm on that particular instance up to polylogarithmic factors. We  state  the theorem  in terms of the main relevant parameters $T$, $K$, $d$, $Y$, and $\rho$.

\begin{theorem}
With perturbed context generation, there is some
  $Y_0 = \polylog(d, T)/\rho^2$
such that with batch duration $Y\geq Y_0$, the following holds. Fix any bandit algorithm, and let $R_0(T)$ be its Bayesian regret on a particular problem instance. Then on that same instance,
\begin{itemize}
\item[(a)] \bg has Bayesian regret at most
    $Y \cdot R_0(T/Y) + \tilde O(1/T)$,
\item[(b)] Suppose prior $\prior$ is a multivariate Gaussian distribution with invertible covariance matrix $\pvt$, and the eigenvalues of $\pvt$ are at least $\rho^{-4}/T$. Then \fg has Bayesian regret at most
    $Y \cdot R_0(T/Y) + \tilde{O}(C_{\pvt}\,\sqrt{d}/\rho^2)$,
    where $C_{\pvt}$ is determined by $\pvt$.
\end{itemize}
\label{thm:main-greedy}
\end{theorem}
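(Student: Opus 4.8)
The plan is to compare \bg and \fg to the benchmark \emph{one batch at a time}, reducing the within-batch comparison to a statement about the \emph{one-round} Bayesian regret of the Bayes-greedy selection rule as a function of the current posterior covariance. For a p.s.d.\ matrix $\Sigma'\preceq\pvt$, let $G(\Sigma')$ denote the expected one-round Bayesian regret of playing $\argmax_a x_a\tran\widehat\theta$ against a fresh context tuple (drawn from $\D_\mu$ plus perturbation), when $\widehat\theta=\E{\theta\mid\mathcal G}$ for any information state $\mathcal G$ under which the posterior on $\theta$ is Gaussian with covariance $\Sigma'$. Since the prior $\prior=\mathcal N(\pmt,\pvt)$ and the reward noise are Gaussian, the posterior given a history $h_{t_0}$ is Gaussian with covariance $(\pvt^{-1}+Z_{t_0})^{-1}$, and---conditioned on the realized contexts---its mean is distributed as $\mathcal N(\pmt,\ \pvt-(\pvt^{-1}+Z_{t_0})^{-1})$ regardless of how those contexts arose (law of total variance for Gaussians), so $G$ is well defined. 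I would then record two properties: (i) \emph{one-round optimality}: for any algorithm, its round-$j$ Bayesian regret is at least $\E{G(\Sigma'_j)}$, where $\Sigma'_j$ is the covariance of its posterior at the start of round $j$, because the posterior-mean-maximizing action maximizes expected one-round reward given the information; and (ii) \emph{monotonicity}: $\Sigma'_1\preceq\Sigma'_2\implies G(\Sigma'_1)\le G(\Sigma'_2)$, proved by realizing $\Sigma'_1,\Sigma'_2$ as posterior covariances of nested information states and applying conditional Jensen to $\max_a\E{\theta\mid\mathcal G}\tran x_a$ (using that the fresh contexts are independent of $\theta$).

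For part (a): since \bg uses the fixed estimate $\bmt$ throughout batch $j$ and the $Y$ context tuples in that batch are i.i.d.\ and independent of the within-batch rewards, the total regret of \bg in batch $j$ equals $Y\cdot\E{G\p{(\pvt^{-1}+Z_{(j-1)Y})^{-1}}}$. The input from perturbed context generation is an anti-concentration lemma: for $Y\ge Y_0=\polylog(d,T)/\rho^2$, with probability $1-\tilde O(1/T^2)$ a single batch's empirical covariance satisfies $\sum_{\tau\in\text{batch}}x_\tau x_\tau\tran\succeq L^2 I$ (where $L=\tilde O(1)$ bounds the context norms, using $\rho\sqrt d\le 1$), so $Z_{(j-1)Y}\succeq(j-1)L^2 I$ and hence \bg's posterior covariance at the start of batch $j$ is $\preceq\widehat\Sigma_j:=(\pvt^{-1}+(j-1)L^2 I)^{-1}$. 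Conversely, the benchmark run for $T/Y$ rounds has seen at most $j-1$ contexts before round $j$, each of norm $\le L$, so its posterior covariance there is $\Sigma'_j\succeq\widehat\Sigma_j$. Combining with (ii), $\E{G((\pvt^{-1}+Z_{(j-1)Y})^{-1})}\le G(\widehat\Sigma_j)+\tilde O(1/T^2)$ (the low-probability event contributes at most the maximum per-round regret, which is $\tilde O(\sqrt d)$), and $G(\widehat\Sigma_j)\le\E{G(\Sigma'_j)}$, which by (i) is at most the benchmark's round-$j$ Bayesian regret on the $T/Y$-round instance. Summing over the $T/Y$ batches gives Bayesian regret $\le Y\cdot R_0(T/Y)+\tilde O(1/T)$, which is (a).

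For part (b): I would introduce the hypothetical algorithm that receives exactly the history \fg receives but, in batch $j$, selects with the Bayes-greedy rule using $\bmt=\E{\theta\mid h^{\fg}_{t_0}}$. Two bounds then suffice. First, \fg's selection rule is also a linear threshold in the perturbations, so the anti-concentration lemma applies to its history, and the $G$-comparison from part (a), applied to the hypothetical algorithm, bounds its Bayesian regret by $Y\cdot R_0(T/Y)+\tilde O(1/T)$. Second, the per-round regret gap between \fg and the hypothetical algorithm is at most $2\|x_{a,t}\|\cdot\|\fmt-\bmt\|$; the push-through identity $Z^{-1}-(\pvt^{-1}+Z)^{-1}=Z^{-1}\pvt^{-1}(\pvt^{-1}+Z)^{-1}$ yields $\|\fmt-\bmt\|=\tilde O\p{\sqrt d/\lambda_{\min}(Z_{t_0})}$ (the $\sqrt d$ from the norm of the least-squares noise term $Z_{t_0}^{-1}\sum_\tau x_\tau\eta_\tau$), with $\lambda_{\min}(Z_{t_0})\gtrsim(j-1)Y\rho^2$ in batch $j$ by the anti-concentration lemma. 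Hence the cumulative gap telescopes to $\sum_j Y\cdot\tilde O\p{\sqrt d/((j-1)Y\rho^2)}=\tilde O(\sqrt d/\rho^2)$ (the $Y$ cancels), and adding the two bounds gives (b).

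The step I expect to be the main obstacle is the anti-concentration lemma under \emph{adaptive, post-perturbation} action selection: because the greedy rule observes the perturbed contexts before choosing, conditioning on which action is played can skew the perturbation law, and one must control the cross-terms between the estimate direction and its orthogonal complement inside $x_\tau x_\tau\tran$. I would prove it via a matrix-martingale (Freedman-type) bound on $\sum_\tau\p{x_\tau x_\tau\tran-\E{x_\tau x_\tau\tran\mid\mathcal F_{\tau-1}}}$ together with the observation that the selection is a low-complexity (pairwise-linear-threshold) function of the fresh perturbations, so that $\E{x_\tau x_\tau\tran\mid\mathcal F_{\tau-1}}\succeq\Omega(\rho^2/\polylog(K))\,I$; this is precisely what forces $Y_0=\polylog(d,T)/\rho^2$, and the place where I would expect to adapt tools from \citet{kannan2018smoothed} and possibly lose extra logarithmic factors.
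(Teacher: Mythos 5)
Your high-level architecture (batch-by-batch comparison, a hypothetical Bayes-greedy ``prediction'' companion for \fg, and a $\tilde O(1/t)$ bound on $\|\bmt-\fmt\|_2$ that telescopes to $\tilde O(\sqrt d/\rho^2)$) parallels the paper's, and your data-diversity lemma is essentially Lemma~\ref{lem:min_ev_bg}. But the load-bearing step of your argument has a genuine gap. You reduce everything to a function $G(\Sigma')$ of the posterior covariance alone, justified by the claim that, conditional on the realized design, the posterior mean $\E{\theta\mid h_{t_0}}$ is distributed as $\mc N\p{\pmt,\ \pvt-(\pvt^{-1}+Z_{t_0})^{-1}}$ ``regardless of how those contexts arose.'' That is false for adaptively chosen designs, which is precisely the situation for both the greedy algorithms and the arbitrary benchmark $\ALG_0$: the contexts $x_\tau$ are functions of past rewards, so conditioning on the realized design matrix $X_{t_0}$ is conditioning on reward information, which tilts the law of $\theta$ away from $\prior$ and destroys both the Gaussianity and the claimed covariance of the posterior mean. (A one-dimensional example: if the round-2 context is $1$ when $r_1>0$ and $2$ otherwise, then conditional on the design $(1,2)$ the posterior mean is skewed negative, not $\mc N(0,\cdot)$.) The law of total variance only gives $\pvt = \E{\mathrm{Cov}(\theta\mid h)}+\mathrm{Cov}\p{\E{\theta\mid h}}$ \emph{unconditionally}, and for adaptive designs $\mathrm{Cov}(\theta\mid h)$ is itself random. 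Since the one-round regret of the Bayes-greedy rule equals $\E{\max_a x_a\tran\theta}-\E{\max_a x_a\tran \E{\theta\mid h}}$ and the second term depends on the full law of the posterior mean, not just on the posterior covariance, both directions of your chain --- writing \bg's batch regret as $Y\,\E{G(\cdot)}$ and lower-bounding $\ALG_0$'s round regret by $\E{G(\Sigma'_j)}$ --- lose their justification. The monotonicity step (ii) is fine for \emph{nested} information states, but a Loewner comparison of covariances does not by itself produce nested information states for two different adaptive algorithms.

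This is exactly the difficulty the paper's proof is built to avoid: instead of summarizing an information state by its covariance, it shows (Lemmas~\ref{lem:lin_sim} and~\ref{lm:bg-simulation}) that each batch history of the greedy algorithm can \emph{simulate} one round of $\ALG_0$'s history exactly --- $g(x,h_B)=w_B\tran r_B+\mc N(0,1-\|w_B\|_2^2)$ with $w_B=X_B Z_B^{-1}x$ reproduces the law of $\Rew(x)$ conditional on the batch contexts --- so that $\ALG_0$'s simulated action is a measurable function of the greedy algorithm's information plus independent noise. Conditional Jensen on genuinely nested $\sigma$-algebras then yields $\E{x'_t\tran\theta}\ge\E{x^0_\tau\tran\theta}$ with no Gaussianity assumption on the posterior mean. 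To repair your proof you would need either to replace $G$ by this simulation/coupling argument, or to prove a correct stochastic-dominance statement for posterior means under adaptive designs, which I do not see how to do. Your secondary concern (the minimum-eigenvalue lemma under post-perturbation adaptive selection) is real but is handled by adapting \citet{kannan2018smoothed}; it is not the main obstacle here.
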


\begin{remark}\label{rem:cov-matrix}
The dependence on the covariance matrix $\pvt$ in Theorem~\ref{thm:main-greedy}(b) is
        $C_{\pvt}= \sqrt{\lambda_{\max}} + 1/\sqrt{\lambda_{\min}}$,
where $\lambda_{\max}$, $\lambda_{\max}$ are, resp., the largest and smallest eigenvalues of $\pvt$. (This comes from Theorem~\ref{thm:bg_fg}.) The dependence on $\lambda_{\min}$ captures the deterioration in Bayesian regret if the prior is very concentrated. For example, if prior $\prior$ is independent over the components of $\theta$, with variance $\sigma^2\leq 1$ in each component, then
    $\lambda_{\max}\leq 1$ and $\lambda_{\min}=\sigma$,
so that
    $C_{\pvt}= 1 + 1/\sqrt{\sigma}$.
\end{remark}


Next, we prove that $\tilde{\Omega}(\sqrt{T})$ lower bound on Bayesian regret holds even under perturbed context generation. We posit the most difficult regime: $K=d=2$ and constant $\rho$. The lower bound focuses on fully-action-correlated perturbation.

\begin{theorem}\label{thm:LB}
Consider perturbed context generation with fully-action-correlated perturbation. Any algorithm achieves Bayesian regret no better than $\tilde{\Omega}(\sqrt{T})$ for some problem instance. This holds even if there are only $d=2$ dimensions, $K=2$ feasible actions in each round, and perturbation size $\rho$ is an absolute constant.
\end{theorem}

Finally, we focus on worst-case Bayesian regret. We consider action-independent perturbation, and posit that prior $\prior$ is a multivariate Gaussian with mean vector $\pmt$ and invertible covariance matrix $\pvt$.



\begin{theorem}
Consider perturbed context generation with action-independent perturbation.
Assume that all eigenvalues of the covariance matrix $\pvt$ are at most $1$,%
\footnote{In particular, if the prior $\prior$ is independent across the coordinates of
$\theta$, then the variance in each coordinate is at most $1$.}
and the mean vector satisfies
    $\|\pmt\|_2\geq 1+\sqrt{3 \log T} $.
Then
\begin{itemize}
\item[(a)] LinUCB algorithm, with appropriate parameter settings, has Bayesian regret
    $\term{reg} := \tilde O(d^2\,K^{2/3}/\rho^2)\times T^{1/3}$.
  \item[(b)] (Follows from Theorem~\ref{thm:main-greedy}.) Let $Y_0$ from Theorem~\ref{thm:main-greedy} be the batch duration. Then \bg has Bayesian regret at most $\tilde O(\term{reg})$. Likewise, \fg has Bayesian regret at most
    $\tilde O\rbr{\term{reg} + \sqrt{d}\,\rho^{-2}/\sqrt{\lambda_{\min}(\pvt)}}$,
    provided that
    $\lambda_{\min}(\pvt)\geq \rho^{-4}/T$.
\end{itemize}
\label{thm:main-worst-case}
\end{theorem}

\begin{remark}
The assumption $\|\pmt\|_2\geq 1+\sqrt{3 \log T} $ in Theorem~\ref{thm:main-worst-case} can be replaced with an assumption that the dimension $d$ is sufficiently large: $d \ge \log T/\log \log T$.
\end{remark}

\OMIT{ 
\begin{remark}
The regret bound for \bg in Theorem~\ref{thm:main-worst-case} does not depend on the covariance matrix $\pvt$ (as long as $\lambda_{\max}(\pvt)\leq 1$). The regret bounds for $\fg$ has an additive factor
\asedit{    $\tilde{O}(C_{\pvt}\,\sqrt{d}/\rho^2)$,
as per Theorem~\ref{thm:main-greedy}(b).}
\end{remark}
\mrcomment{Need to be careful wherever we use $C_{\Sigma}$ because of the bug
fix.}
} 

\OMIT{ 
As a corollary, we derive an ``approximately prior-independent" regret bound for \fg. A prior-independent regret bound is one that holds for all point priors (and therefore for all priors). Such result would be significant for \fg because it does not require any knowledge of the prior.  We provide an approximate version which does not hold for point priors, but allows the prior to be very sharp. For clarity, we state this result for independent priors. Assuming that the prior $\prior$ is independent over the components of $\theta$, we observe that the regret bound in Theorem~\ref{thm:main-worst-case}(b) holds whenever $\prior$ has standard deviation on the order of (at least) $T^{-2/3}$ in each component.

\begin{corollary}
Assume that the prior $\prior$ is independent over the components of $\theta$, with standard deviation $\kappa_i \in [T^{-2/3},1]$ in each component $i$. Suppose the mean vector satisfies
    $\|\pmt\|_2\geq 1+\sqrt{3 \log T} $.
With perturbed context generation, if $Y\geq Y_0$ as in Theorem~\ref{thm:main-greedy}, then \fg has Bayesian regret at most
    $\tilde O(d^2\,K^{2/3}\;T^{1/3}/\rho^2)$.
\label{cor:sharp-priors}
\end{corollary}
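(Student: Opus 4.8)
The plan is to show that Corollary~\ref{cor:sharp-priors} is precisely the \fg case of Theorem~\ref{thm:main-worst-case}(b), and that the proof of that theorem already applies to the stated family of priors once one tracks the single quantity in it that is sensitive to how concentrated the prior is: the smallest eigenvalue of $\Sigma$. First I would check the hypotheses of Theorem~\ref{thm:main-worst-case}. For an independent prior, $\Sigma = \mathrm{diag}(\kappa_1^2 \LDOTS \kappa_d^2)$, so every eigenvalue of $\Sigma$ equals some $\kappa_i^2 \le 1$, which is the eigenvalue assumption; the mean-norm assumption $\|\pmt\|_2 \ge 1+\sqrt{3\log T}$ and $Y \ge Y_0$ are assumed directly. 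Hence Theorem~\ref{thm:main-worst-case}(a) gives, on this instance, a LinUCB variant with Bayesian regret $R_0(T) = \tilde O(d^2 K^{2/3}/\rho^2)\,T^{1/3}$; feeding this algorithm into Theorem~\ref{thm:main-greedy}(b) bounds \fg's Bayesian regret by $Y\cdot R_0(T/Y)$ plus an additive term, and at $Y = Y_0$ the first summand is $\tilde O(d^2 K^{2/3}/\rho^2)\,T^{1/3}$, matching the target.

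It remains to control the additive term, and this is where the assumption $\kappa_i \ge T^{-2/3}$ enters. As indicated in the Remark following Theorem~\ref{thm:main-worst-case} and made precise in Theorem~\ref{thm:bg_fg}, the \fg additive term is polylogarithmic in $T$ apart from a factor of the form $\lambda_{\min}(\Sigma)^{-c}$ for a fixed constant $c>0$: this factor arises from the step comparing \fg's least-squares estimate $\fmt$ with the posterior mean $\bmt$, where $\|\fmt - \bmt\|$ is controlled through the operator $(\Sigma^{-1}+Z_{t_0})^{-1}\Sigma^{-1}$, whose norm deteriorates as $\lambda_{\min}(\Sigma) \to 0$, combined with the anti-concentration of the perturbed contexts that bounds how often \fg departs from the posterior-greedy rule in terms of $\|\fmt - \bmt\|$. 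Since $\lambda_{\min}(\Sigma) = \min_i \kappa_i^2 \ge T^{-4/3}$, this factor is at most $T^{4c/3}$, which is $\tilde O(T^{1/3})$ for the relevant value $c \le 1/4$; hence the additive term is $\tilde O(d^2 K^{2/3}\,T^{1/3}/\rho^2)$ and is absorbed into the main term, giving the claim. The threshold $T^{-2/3}$ is precisely the point below which this factor would start to dominate.

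The main obstacle is pinning down the exponent $c$ of $\lambda_{\min}(\Sigma)$ in the additive term of Theorem~\ref{thm:bg_fg}. This requires carefully combining: the per-round bound $\|\fmt - \bmt\| \le \|(\Sigma^{-1}+Z_{t_0})^{-1}\Sigma^{-1}\|\cdot\|\fmt - \pmt\|$, which mixes an OLS-error factor of order $1/(t_0\rho^2)$ with the regularization factor; the anti-concentration estimate bounding the probability that \fg and posterior-greedy disagree on round $t$ by $\tilde O(K^2\|\fmt - \bmt\|/(\rho\|\bmt\|))$, which also requires a lower bound $\|\bmt\| = \tilde\Omega(1)$ and so uses the $\|\pmt\|_2$ hypothesis; and the sum of the resulting per-round regret over all rounds, split around $t_0 \asymp 1/(\lambda_{\min}(\Sigma)\rho^2)$ where the regularization factor changes behavior. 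Confirming that the power of $\lambda_{\min}(\Sigma)$ surviving this sum is consistent with the $\kappa_i \ge T^{-2/3}$ threshold is the crux; the remaining steps — verifying the eigenvalue hypothesis, substituting $Y = Y_0$ and the LinUCB benchmark — are routine.
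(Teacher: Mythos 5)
Your skeleton is exactly the paper's route: the corollary is obtained by instantiating Theorem~\ref{thm:main-worst-case}(b) for \fg, which in turn is Theorem~\ref{thm:bg} (with LinUCB as the benchmark $\ALG_0$) plus the additive term of Theorem~\ref{thm:bg_fg}, and the only prior-sensitive quantity is indeed $\lambda_{\min}(\Sigma)$. The problem is that the one step you defer as ``the crux'' --- pinning down the exponent $c$ in the $\lambda_{\min}(\Sigma)^{-c}$ factor --- is the entire content of the corollary, and the value you would need, $c\le 1/4$, is not what the paper's lemma supplies. Theorem~\ref{thm:bg_fg} bounds the gap between regret and prediction regret by $\tilde O(\sqrt{d}/\rho^2)\bigl(\sqrt{\lambda_{\max}(\Sigma)} + \lambda_{\min}(\Sigma)^{-1/2}\bigr)$, i.e., $c=1/2$. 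Plugging $c=1/2$ into your own (literal, and correct) reading $\lambda_{\min}(\Sigma)=\min_i \kappa_i^2 \ge T^{-4/3}$ yields an additive term of order $\sqrt{d}\,T^{2/3}/\rho^2$, which does not fit under the claimed $\tilde O(d^2 K^{2/3} T^{1/3}/\rho^2)$. The arithmetic closes only if the $T^{-2/3}$ threshold is applied to the eigenvalues of $\Sigma$ (the variances $\kappa_i^2$), equivalently if the standard deviations satisfy $\kappa_i\ge T^{-1/3}$; this is evidently how the corollary is meant to be read, but your proposal neither notices the tension nor resolves it, and as written it rests on an exponent ($c\le 1/4$) that contradicts the lemma it would have to invoke. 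Your closing claim that $T^{-2/3}$ is ``precisely the point'' where the additive term starts to dominate is off by the same square.

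A secondary inaccuracy: your account of where the $\lambda_{\min}(\Sigma)$ factor comes from does not match the paper's mechanism. The proof of Theorem~\ref{thm:bg_fg} uses no anti-concentration bound on the probability that \fg and the posterior-greedy rule disagree, and no lower bound on $\|\bmt\|_2$ (the hypothesis on $\|\pmt\|_2$ enters only through the LinUCB benchmark). Instead, the per-round gap is bounded deterministically by $2R\,\|\bmt-\fmt\|_2$ using the greedy-choice inequality $x_{a_t,t}\tran\fmt \ge x_{a'_t,t}\tran\fmt$, and then one shows
$\E{\|\bmt-\fmt\|_2}=\tilde O\p{\sqrt{d}/(\rho^2\, t\,\sqrt{\lambda_{\min}(\Sigma)})}$
via the identity $\bmt-\fmt=(Z_{t_0-1}+\Sigma^{-1})^{-1}\Sigma^{-1}(\pmt-\fmt)$ together with the linear growth of $\lambda_{\min}(Z_{t_0-1})$ (Lemmas~\ref{lem:fg_big_cov} and~\ref{lem:fmt_close}); summing over $t$ produces the $\log T$ and leaves the $\lambda_{\min}(\Sigma)^{-1/2}$ intact. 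You do correctly identify the operator $(\Sigma^{-1}+Z_{t_0})^{-1}\Sigma^{-1}$ as the source of the prior dependence, but the surrounding argument you sketch is not the one that would let you verify the exponent, which is the step your proof cannot do without.
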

} 

\section{Overview: Key Techniques}
\label{sec:bayesian_greedy-key}

The key idea is to show that, with perturbed context generation, \BayesGreedy collects data that is informative enough to
``simulate'' the history of contexts and rewards from the run of any other algorithm \ALG over fewer rounds. This implies that it remains competitive with \ALG since it has at least as much information and makes myopically optimal decisions.

Let us formulate what we mean by ``simulation". We want to use the data collected from a single batch $B$ in order to simulate the reward for any one context $x$, and we want to accomplish this without knowing the latent vector $\theta$. More formally, we use the tuple
     $h_B = ((x_t,\,r_t):\; t\in B)$,
which we call the \emph{batch history}.
We are interested in the randomized function $\Rew(\cdot)$ that takes a context vector $x$ and outputs an independent random sample from $\mathcal{N}(\theta\tran x, 1)$; this is the realized reward for an action with context vector $x$. So, the function $\Rew(\cdot)$ is what we want to simulate using batch history $h_B$. To do so, we construct a fixed function $g$ such that $g(x,h_B)$ is distributed identically to $\Rew(x)$, for any fixed context vector $x$; the randomness in $g(x,h_B)$ comes from $h_B$.

This definition needs to be refined, so as to simulate independent noise in rewards. Indeed, randomness in $h_B$ comes from several sources: context arrivals, algorithm's decisions, realization of $\theta$, and observed rewards. Relying on the first three sources introduces dangerous correlations. To rule them out, we require our simulation $g(x,h_B)$ to have the same distribution as $\Rew(x)$, even if we condition on the context vectors $x_t$ previously chosen by the algorithm during this batch, \ie on the tuple $(x_t:t\in B)$.

\begin{definition}
Consider batch $B$ in the execution of \bg. Batch history
$h_B$ can simulate $\Rew(\cdot)$ up to radius $R>0$ if there exists a function
    $g: \{\text{context vectors}\}\times \{ \text{batch histories $h_B$}\} \to \R$
such that $g(x,h_B)$ is distributed identically to $\Rew(x)$, conditional on
the tuple $(x_t: t\in B)$, for all $\theta$ and all context vectors $x\in \R^d$ with $\|x\|_2\leq R$.
\label{def:simulation}
\end{definition}

Let us comment on how it may be possible to simulate $\Rew(x)$. For intuition, suppose that
    $x = \tfrac12\, x_1 + \tfrac12\, x_2$.
Then $(\tfrac12\, r_1 + \tfrac12\, r_2 + \xi)$ is distributed
as $\mathcal{N}(\theta\tran x, 1)$ if $\xi$ is drawn independently
from $\mathcal{N}(0, \tfrac12)$. Thus, we can define
    $g(x,h) = \tfrac12\, r_1 + \tfrac12\, r_2 + \xi$
in Definition~\ref{def:simulation}. We generalize this idea and
show that a batch history can simulate $\Rew$
with high probability as long as the batch size $Y$ is sufficiently large.

\begin{lemma}
 With perturbed context generation,
  there is some $Y_0 = \polylog(d, T)/\rho^2$ and
  $R = O(\rho \sqrt{d\log(TKd)}) $ such that with probability at least
  $1-T^{-2}$ any batch history from \bg can
  simulate $\Rew(\cdot)$ up to radius $R$, as long as $Y\geq Y_0$.
    \label{lem:simulation}
\end{lemma}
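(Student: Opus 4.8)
The plan is to reduce the simulation property of Definition~\ref{def:simulation} to a spectral lower bound on the per-batch empirical covariance $\ZB := \sum_{t\in B} x_t x_t\tran$, and then establish that lower bound by matrix concentration, exploiting the special form of \bg's selection rule to control the adaptivity of the chosen contexts.

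For the reduction, I claim $h_B$ can simulate $\Rew(\cdot)$ up to radius $R$ whenever $\lambda_{\min}(\ZB) \ge R^2$. Let $X_B$ be the $|B|\times d$ matrix with rows $(x_t : t\in B)$. Given $x$ with $\|x\|_2 \le R$, take $\weights := X_B \ZB^{-1} x$, the minimum-norm solution of $X_B\tran w = x$; then $\|\weights\|_2^2 = x\tran \ZB^{-1} x \le \|x\|_2^2/\lambda_{\min}(\ZB) \le 1$. Set $g(x,h_B) := \weights\tran \vrt + \xi$ with $\xi \sim \mathcal{N}(0,\, 1-\|\weights\|_2^2)$ an independent draw, exactly as in the intuition preceding the lemma. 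Conditionally on $(x_t : t\in B)$ --- hence on $X_B$, $\ZB$, $\weights$ --- the reward noise $(r_t - x_t\tran\theta : t\in B)$ is an i.i.d.\ standard Gaussian vector independent of $\theta$, so $\weights\tran\vrt \sim \mathcal{N}(x\tran\theta,\, \|\weights\|_2^2)$ and therefore $g(x,h_B) \sim \mathcal{N}(x\tran\theta,1) = \Rew(x)$, for every $\theta$ and every such $x$. Hence it suffices to show $\Pr[\lambda_{\min}(\ZB) < R^2] \le T^{-3}$ for each fixed batch $B$, with $R := \Theta(\rho\sqrt{d\log(TKd)})$; a union bound over the at most $T/Y \le T$ batches (together with the auxiliary failure events below) then gives the stated $1-T^{-2}$.

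For the spectral bound, fix the history before $B$, so $\bmt$ is a fixed vector; write $\bar v := \bmt/\|\bmt\|_2$ and $P^\perp := I - \bar v\bar v\tran$. Condition in addition on all mean contexts $(\mu_{a,t})$ for $t\in B$ and on all \emph{parallel perturbation components} $\bar v\tran\varepsilon_{a,t}$. Since \bg's choice $a_t = \argmax_a x_{a,t}\tran\bmt$ depends on the perturbations only through these parallel components, after conditioning every $a_t$ --- and the parallel part $c_t := \bar v\tran x_t$ of the chosen context --- is deterministic, while the orthogonal parts $y_t := P^\perp x_t$ ($t\in B$) are independent, each a fixed vector plus an $\mathcal{N}(0,\rho^2 P^\perp)$ term living in the $(d-1)$-dimensional space $\bar v^\perp$. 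Viewing the $y_t y_t\tran$ as PSD operators on $\bar v^\perp$, their expectations are $\succeq \rho^2 I$, while on the high-probability event that $\|\varepsilon_{a,t}\|_2^2 = O(\rho^2(d+\log(TKd)))$ for all $a,t$ (a $\chi^2$ tail plus a union over the $\le KT$ contexts, using $\rho^2 d \le 1$) they have operator norm $\le L = O(\log(TKd))$. Matrix Chernoff then yields $\lambda_{\min}\big(\sum_{t\in B} y_t y_t\tran\,\big|_{\bar v^\perp}\big) \ge \tfrac12 Y\rho^2$ with failure probability $\le d\,e^{-Y\rho^2/(8L)} \le T^{-3}$ once $Y \ge Y_0 := \Theta(L\log(Td))/\rho^2 = \polylog(d,T)/\rho^2$. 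To pass from $\bar v^\perp$ to all of $\R^d$, decompose $x_t = c_t\bar v + y_t$: for a unit vector $v = \alpha\bar v + \beta\hat u$, $\hat u \perp \bar v$, the quantity $v\tran\ZB v = \sum_t(\alpha c_t + \beta\,\hat u\tran y_t)^2$ is the $2\times2$ quadratic form with diagonal entries $s := \sum_t c_t^2$ and $q := \sum_t(\hat u\tran y_t)^2 \ge \tfrac12 Y\rho^2$ and off-diagonal $p := \sum_t c_t(\hat u\tran y_t)$ with $p^2 \le sq$ by Cauchy--Schwarz; so $\lambda_{\min}(\ZB) \ge \min_{\hat u}(sq - p^2)/(s+q) \ge R^2$ provided $\min(s,q) \gtrsim R^2$. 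Since $R^2 = O(\rho^2 d\log(TKd)) = O(\log(TKd)) \le \tfrac12 Y_0\rho^2$, the $q$-side is immediate.

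The hard step is the $\bar v$-direction, i.e.\ showing $s = \sum_{t\in B} c_t^2 \gtrsim R^2$; this is where \bg's adaptivity genuinely bites, since $c_t = \max_a(\bar v\tran\mu_{a,t} + \bar v\tran\varepsilon_{a,t})$ is a maximum of Gaussians whose argmax is the selected action, and the (adversarial) mean contexts could in principle pin it near $0$. I would argue that, conditionally on the pre-batch history, $\mathbb{E}[c_t^2] \ge \mathrm{Var}(c_t) = \Omega(\rho^2/\log K)$ using a lower bound on the variance of a maximum of $K$ independent $\mathcal{N}(\cdot,\rho^2)$ random variables (balanced means being the worst case), and then apply a Bernstein bound to the independent, sub-exponential sum $\sum_{t\in B} c_t^2$, so that $s \gtrsim Y\rho^2/\polylog \ge Y_0\rho^2/\polylog \ge R^2$ with failure probability $\le T^{-3}$ after choosing the $\polylog$ factor in $Y_0$ large enough (this is the one place a $\log K$ enters, so $Y_0$ is most naturally read as $\polylog(d,T,K)/\rho^2$). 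Collecting the $O(1)$ low-probability failure events and union-bounding over the batches completes the proof.
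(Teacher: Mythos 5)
Your first step --- reducing simulation up to radius $R$ to the spectral condition $\lambda_{\min}(\ZB)\ge R^2$ via the weights $\weights = X_B\ZB^{-1}x$ and the noise top-up $\mc N(0,1-\|\weights\|_2^2)$ --- is exactly the paper's Lemma~\ref{lem:lin_sim}, construction \eqref{eq:lem:lin_sim:defn-g} included, and is correct. For the spectral bound itself you depart from the paper: the paper imports a per-round diversity lemma from \citet{kannan2018smoothed} (Lemma~\ref{lem:eig_increase}), which says that with probability at least $\tfrac14$ each round the conditional covariance $\E{x_tx_t\tran\given h_{t-1},\mE}$ has minimum eigenvalue at least $\rho^2/(2\log T)$ \emph{in every direction}, and then combines a tail bound on a sum of geometric random variables with Tropp's matrix Chernoff inequality (Lemma~\ref{lem:min_ev_bg}). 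Your alternative --- condition on the components of the perturbations along $\bar v=\bmt/\|\bmt\|_2$ so that the greedy selection becomes deterministic, and exploit the fresh orthogonal randomness --- is a legitimate and arguably more self-contained idea for the subspace $\bar v^\perp$, but it forces you to treat the $\bar v$-direction and the cross terms separately, and that is where the argument breaks.

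The concrete gap is the step combining the two directions. For a unit vector $v=\alpha\bar v+\beta\hat u$ you reduce to the $2\times 2$ form with entries $s,q,p$ and invoke Cauchy--Schwarz, $p^2\le sq$, to conclude $\lambda_{\min}(\ZB)\ge (sq-p^2)/(s+q)\ge R^2$ whenever $\min(s,q)\gtrsim R^2$. This is a non sequitur: Cauchy--Schwarz only gives $sq-p^2\ge 0$, which is consistent with $\lambda_{\min}(\ZB)=0$ (it is attained with equality exactly when the vectors $(c_t)_{t\in B}$ and $(\hat u\tran y_t)_{t\in B}$ are parallel). What you actually need is the quantitative anti-concentration $p^2\le (s-R^2)(q-R^2)$ uniformly over $\hat u$, i.e., control of $\|\sum_{t\in B}c_t\,y_t\|_2$; the deterministic part of that sum, $\sum_t c_t P^\perp\mu_{a_t,t}$, can be as large as order $\sqrt{Ys}$ when the mean contexts are aligned, which is far larger than the $\sqrt{s\cdot Y\rho^2}$ budget your bound would require, so the step does not just need more care --- it fails as stated and must be replaced by a genuinely different argument in that two-dimensional slice. (This joint control over all directions, including the direction of the current estimate, is precisely the content of the Kannan et al.\ diversity condition that the paper leans on.) Separately, the variance lower bound $\mathrm{Var}\bigl(\max_a(\bar v\tran\mu_{a,t}+\bar v\tran\eps_{a,t})\bigr)=\Omega(\rho^2/\log K)$ for \emph{arbitrary} means is asserted without proof; it is a nontrivial lemma (and note $\E{c_t^2}\ge \mathrm{Var}(c_t)$ is only a second-moment bound, which happens to be what $s$ needs, so that part of the logic is fine). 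Until the cross-term issue is resolved, the proposal does not establish $\lambda_{\min}(\ZB)\ge R^2$ and hence does not prove the lemma.
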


To prove this, we ensure that the data collected in batch $B$ are sufficiently diverse. To define ``sufficiently diverse", let the \emph{batch context matrix}, denoted $X_B$, be a matrix which comprises the context vectors $(x_t: t\in B)$. Namely, $X_B$ is the $Y\times d$ matrix whose rows are vectors $x_t$, $t\in B$, in the order of increasing $t$. Similarly to the ``empirical covariance matrix", we define the \emph{batch covariance matrix} as
\begin{align}\label{eq:ZB-defn}
  \ZB := \textstyle  X_B\tran\, X_B = \sum_{t\in B} x_t\, x_t\tran.
\end{align}
We think of data diversity in terms of the minimal eigenvalue of $\ZB$: the larger it is, the more diverse is the data. And we prove that the minimal eigenvalue of $\ZB$ is sufficiently large whenever $Y\geq Y_0$.

If the batch history of an algorithm can simulate $\Rew$, the algorithm has enough information to simulate the outcome of a fresh round of any other algorithm
$\ALG$. We use a coupling argument in which
we couple a run of \bg with a slowed-down run of $\ALG$, and prove
that the former accumulates at least as much information as the
latter, and therefore the Bayesian-greedy action choice is, in
expectation, at least as good as that of $\ALG$. This yields the regret bounds for \bg in Theorems~\ref{thm:main-greedy} and \ref{thm:main-worst-case}.

We use the same technique to handle \FreqGreedy. To treat both greedy algorithms at once, we define a template that unifies them. A bandit algorithm is called \emph{\GreedyStyle} if it divides the timeline in batches of Y consecutive rounds each, in each round $t$ chooses some estimate $\theta_t$ of $\theta$, based only on the data from the previous batches, and then chooses the best action according to this estimate, so that
   $a_t = \argmax_a \theta_t\tran x_{a,t}$.
Lemma~\ref{lem:simulation} extends to any \GreedyStyle algorithm.

The analysis of \FreqGreedy requires an additional step. We consider a hypothetical \GreedyStyle algorithm which separates data collection and reward collection: it receives feedback based on the actions of \fg, but collects rewards based on the (batched) Bayesian-greedy selection rule. We analyze this hypothetical algorithm using Lemma~\ref{lem:simulation}, and then argue that its Bayesian regret cannot be much smaller than that of \fg. Intuitively, this is because the two algorithms form
very similar estimates of $\theta$, differing only in the fact that the hypothetical algorithm uses the prior $\prior$ as well as the data. Due to this similarity, we show that the numerical difference between the two estimates at time $t$ is at most $\tilde{O}(1/t)$, even though either estimate is typically $\Omega(1/\sqrt{t})$ away from $\theta$.
This adds up to a maximal difference of $O(\log T)$ in Bayesian regret between the two algorithms, and completes our regret bounds for \fg.

\section{Analysis: Greedy Algorithms}
\label{app:pf_bg}
We present the proofs for all results on greedy algorithms. This section is structured as follows. In Section~\ref{app:pf_bg:diversity}, we quantify the diversity of data collected by \GreedyStyle algorithms, assuming perturbed context generation. In Section~\ref{app:pf_bg:simulation}, we show that a sufficiently ``diverse" batch history suffices to simulate the reward for any given context vector, in the sense of Definition~\ref{def:simulation}. Jointly, these two subsections imply that any batch history generated by a \GreedyStyle algorithm can simulate rewards with high probability, as long as the batch size is sufficiently large. Section~\ref{sec:bg-proofs-bg} builds on this foundation to derive regret bounds for \bg. The crux is that the history collected by \bg suffices to simulate a ``slowed-down" run of any other algorithm. This analysis extends to a version of \fg equipped with a Bayesian-greedy prediction rule (and tracks the performance of the prediction rule). Finally, Section~\ref{sec:bg-proofs-fg} derives the regret bounds for \fg, by comparing the prediction-rule version of \fg with \fg itself.

\xhdr{Preliminaries.}
We assume perturbed context generation in this section, without further mention. We use definitions for batch greedy algorithms from Section~\ref{sec:bayesian_greedy-key}: batch-greedy-style algorithm, batch history, batch context matrix, and batch covariance matrix. Throughout, we will use the following parameters as a shorthand:
\begin{align*}
\delta_R &= T^{-2} \\
\hat R  & = \rho\sqrt{2 \log(2 TKd/\delta_R)} \\
R   &= 1 + \hat{R} \sqrt{d}.
\end{align*}
Recall that $\rho$ denotes perturbation size, and $d$ is the dimension. The meaning of $\hat R$ and $R$ is that they are high-probability upper bounds on the perturbations and the contexts, respectively. More formally, by Lemma~\ref{lem:subgaussian_max} we have:
\begin{align}
\Pr\left[ \|\eps_{a,t}\|_\infty \leq \hat R:\;
    \text{ for all arms $a$ and all rounds $t$ } \right] &\geq 1-\delta_R
    \label{eq:bg-proofs-highprob-R-hat}\\
\Pr\left[ \|x_{a,t}\|_2 \leq R:\;
    \text{ for all arms $a$ and all rounds $t$ } \right] &\geq 1-\delta_R
    \label{eq:bg-proofs-highprob-R}
\end{align}

\OMIT{ 
Let us recap some of the key definitions from Section~\ref{sec:bayesian_greedy-key}. We consider \GreedyStyle algorithms, a template that unifies \BayesGreedy and \FreqGreedy. A bandit algorithm is called \emph{\GreedyStyle} if it divides the timeline in batches of Y consecutive rounds each, in each round $t$ chooses some estimate $\theta_t$ of $\theta$, based only on the data from the previous batches, and then chooses the best action according to this estimate, so that
 $a_t = \argmax_a \theta_t\tran x_{a,t}$.

For a batch $B$ that starts at round $t_0+1$, the \emph{batch history} $h_B$ is the tuple
    $((x_{t_0+\tau},\,r_{t_0+\tau}):\; \tau\in [Y])$,
and the \emph{batch context matrix} $X_B$ is the matrix whose rows are vectors
    $(x_{t_0+\tau}:\; \tau\in [Y])$.
Here and elsewhere, $[Y] = \{1, \cdots, Y\}$. The \emph{batch covariance matrix} is defined as
\begin{align}\label{eq:ZB-defn}
\ZB := X_B\tran\, X_B = \sum_{t=t_0+1}^{t_0+Y} x_t\, x_t\tran.
\end{align}
} 

\subsection{Data Diversity under Perturbations}
\label{app:pf_bg:diversity}

We are interested in the diversity of data collected by \GreedyStyle algorithms, assuming perturbed context generation. Informally, the observed contexts
    $x_1, x_2,\,\ldots $
should cover all directions in order to enable good estimation of the latent vector $\theta$. Following Kannan et al.~\cite{kannan2018smoothed}, we quantify data diversity via the minimal eigenvalue of the empirical covariance matrix $Z_t$. More precisely, we are interested in proving that $\lambda_{\min}(Z_t)$ is sufficiently large. We adapt some tools from \cite{kannan2018smoothed}, extending them from to action-correlated perturbations, and then derive some improvements for \GreedyStyle algorithms.

\subsubsection{Tools from~\citet{kannan2018smoothed}}

Kannan et al.~\cite{kannan2018smoothed} prove that for action-independent perturbations, $\lambda_{\min}(Z_t)$ grows linearly in time $t$, assuming $t$ is sufficiently large.

\begin{lemma}[implicit in \citet{kannan2018smoothed}]
Consider action-independent perturbations.
Fix any \GreedyStyle algorithm. Consider round $t \geq \tau_0$, where
    $\tau_0 =160 \frac{R^2}{\rho^2} \log \frac{2d}{\delta} \cdot \log T$.
Then for any realization of $\theta$, with probability $1-\delta$
  \[
    \lambda_{\min}(Z_t) \ge \frac{\rho^2 t}{32 \log T}.
  \]
  \label{lem:fg_big_cov}
\end{lemma}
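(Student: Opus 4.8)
This is the diversity lemma of \citet{kannan2018smoothed}; the plan is to reconstruct its argument. The target is a uniform lower bound $v^\top Z_t v = \sum_{\tau\le t}(v^\top x_\tau)^2 \ge \Omega(\rho^2 t/\log T)$ over all unit vectors $v\in\R^d$. I would get this from three pieces: (i) a per-round conditional anti-concentration bound for the \emph{selected} context; (ii) a matrix concentration step passing from conditional expectations to the realized sum; and (iii) a union bound over directions, using the high-probability context bound $\|x_{a,t}\|_2\le R$ from~\eqref{eq:bg-proofs-highprob-R}.

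The crux is step (i): for every round $\tau$, conditioning on the history $\mathcal F_{\tau-1}$ (which fixes the \GreedyStyle estimate $\theta_\tau$) but not on the round-$\tau$ perturbations, I want $\Exp[x_\tau x_\tau^\top \mid \mathcal F_{\tau-1}] \succeq c\,\tfrac{\rho^2}{\log T}\,I$ for an absolute constant $c>0$ and every realization of $\theta$. Split each perturbation as $\varepsilon_{a,\tau} = \beta_a\hat\theta_\tau + \varepsilon^\perp_{a,\tau}$, with $\hat\theta_\tau = \theta_\tau/\|\theta_\tau\|_2$, $\beta_a=\hat\theta_\tau^\top\varepsilon_{a,\tau}\sim\mathcal N(0,\rho^2)$, and $\varepsilon^\perp_{a,\tau}$ the component orthogonal to $\theta_\tau$. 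The greedy choice $a_\tau = \argmax_a(\theta_\tau^\top\mu_{a,\tau} + \|\theta_\tau\|_2\beta_a)$ depends only on the means and the $\beta_a$'s, hence is independent of $\{\varepsilon^\perp_{a,\tau}\}_a$; therefore in every direction orthogonal to $\theta_\tau$ the selected context still carries an independent $\mathcal N(0,\rho^2)$ jitter, giving conditional variance $\ge\rho^2$ there. In the remaining direction $\hat\theta_\tau$, the selected context equals $\max_a(\hat\theta_\tau^\top\mu_{a,\tau}+\beta_a)$, a maximum of at most $K$ shifted Gaussians; anti-concentration of such a maximum gives conditional variance $\Omega(\rho^2/\log K)=\Omega(\rho^2/\log T)$, which is where the $\log T$ enters. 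Combining the two, and handling the mixed directions (where the greedy selection can correlate the chosen arm's off-$\theta_\tau$ mean component with its coordinate along $\hat\theta_\tau$, but only by a constant factor), yields the claimed PSD lower bound.

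Given (i), $W_t := \sum_{\tau\le t}\Exp[x_\tau x_\tau^\top\mid\mathcal F_{\tau-1}] \succeq c\,\tfrac{\rho^2 t}{\log T}I$. On the event of~\eqref{eq:bg-proofs-highprob-R} each summand has $\|x_\tau x_\tau^\top\|\le R^2$, so a matrix Chernoff/Freedman inequality for the martingale $Z_t - W_t$ shows $\lambda_{\min}(Z_t)\ge\tfrac12\lambda_{\min}(W_t)$ except with probability $d\exp(-\Omega(\lambda_{\min}(W_t)/R^2)) = d\exp(-\Omega(\rho^2 t/(R^2\log T)))$; demanding this to be at most $\delta$ forces precisely the threshold $\tau_0 = \Theta\!\big(\tfrac{R^2}{\rho^2}\log\tfrac{d}{\delta}\cdot\log T\big)$, and bookkeeping the constants gives $\lambda_{\min}(Z_t)\ge\rho^2 t/(32\log T)$ for $t\ge\tau_0$. (A scalar alternative: fix $v$, apply a multiplicative Chernoff bound to the non-negative, $R^2$-bounded summands $(v^\top x_\tau)^2$, and union-bound over a fine net of the sphere, upgrading to all $v$ via the standard net-to-sphere comparison; this route is a bit lossier in the $d$-dependence of $\tau_0$.)

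\textbf{Main obstacle.} Step (i), specifically the $\hat\theta_\tau$-direction: the greedy rule is by design adversarially aligned with the perturbation exactly there, so the orthogonal-jitter argument does not apply and one must instead prove anti-concentration for a maximum of shifted Gaussians, and then argue that the mixed directions cannot cancel more than a constant fraction of the variance. Everything after step (i) is routine matrix (or scalar) concentration plus a net argument, and the $\log T$ factors in both $\tau_0$ and the conclusion trace back to the argmax anti-concentration (via $\log K = O(\log T)$) and to the concentration step.
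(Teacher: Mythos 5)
There is a genuine gap, and it sits exactly where you flagged it. Your overall architecture (a per-round lower bound on the conditional covariance of the \emph{selected} context, followed by matrix concentration) is the right skeleton, and it matches the argument this lemma actually rests on --- the paper itself does not reprove the statement but imports it from the proof of Lemma B.1 of \citet{kannan2018smoothed} with $\lambda_0=\rho^2/(2\log T)$. But your step (i) claims something strictly stronger than what that argument establishes: a \emph{deterministic} bound $\mathbb{E}[x_\tau x_\tau^\top\mid \mathcal F_{\tau-1}]\succeq c\,\tfrac{\rho^2}{\log T}I$ holding at every round. The actual per-round statement (reproduced in this paper as Lemma~\ref{lem:eig_increase}) is only that this PSD lower bound holds \emph{with probability at least $1/4$}, after conditioning on a ``good round'' event defined via the competing threshold $\hat{c}_{a,t}=\max_{a'\neq a}\theta_\tau^\top x_{a',t}$ being within $\rho\sqrt{2\log T}\,\|\theta_\tau\|_2$ of $\theta_\tau^\top\mu_{a,t}$. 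That device is not cosmetic: it is precisely what resolves the ``mixed direction'' problem you identify as the main obstacle. Without it, the worry is real --- for a direction $v=\alpha\hat\theta_\tau+\sqrt{1-\alpha^2}\,u$ with $1-\alpha^2\asymp\rho^2/\log K$, the argmax value and the selected arm's mean component along $u$ can anti-correlate, and the only unconditional floor you retain is the independent orthogonal jitter $(1-\alpha^2)\rho^2\asymp\rho^4/\log K$, which is far below the claimed $\rho^2/(2\log T)$ for small $\rho$. Asserting the cancellation is ``only by a constant factor'' is exactly the unproved crux, and I do not believe the deterministic form of (i) is recoverable in general.

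Accepting the probabilistic per-round bound also changes your step (ii): one must first control the \emph{number} of good rounds (a binomial/Chernoff argument --- this is the source of the $32\log\tfrac{2}{\delta}$ term in $\tau_0'$ inside the cited proof, and the quantity this paper later sharpens via geometric random variables in Lemma~\ref{lem:min_ev_bg}), and only then apply the matrix Chernoff bound of \citet{tropp2012user} to pass from $W_t$ to $Z_t$. Your concluding matrix-Freedman step and the optional net argument are fine as stated, and correctly trace the $\log T$ and the $R^2/\rho^2$ scaling of $\tau_0$; but as written the proof does not go through because its foundation, the deterministic round-by-round eigenvalue increment, is unestablished and likely false in the mixed directions.
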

\begin{proof}
  The claimed conclusion follows from an argument inside the proof  of
    Lemma B.1 from \citet{kannan2018smoothed},
  plugging in
  $  \lambda_0 = \frac{\rho^2}{2\log T}$.
  This argument applies for any $t\geq \tau'_0$, where
    $\tau'_0 = \max\p{32 \log \frac{2}{\delta}, 160 \frac{R^2}{\rho^2}
  \log \frac{2d}{\delta} \cdot \log T}$.
We observe that $\tau'_0=\tau_0$ since $R \ge \rho$.
\end{proof}

Rather than use Lemma~\ref{lem:fg_big_cov} directly, we extract a key portion in its proof, encapsulate it as a standalone lemma, and extend it to our model of action-correlated perturbations. Specifically, recalling that
$Z_t :=\sum_{\tau=1}^t x_\tau x_\tau\tran$,
we zero in on the expected contribution of a single round $t$.


\begin{lemma}[implicit in \citet{kannan2018smoothed} for action-independent perturbations]\label{lem:eig_increase}
Fix any \GreedyStyle algorithm, and the latent vector $\theta$. Fix round $t$. Assume $T \ge 4K$. Condition on the event that all perturbations $\eps_{a,t}$ are at most $\hat R$, denote it with $\mE$.
Fix round $t$. Then with probability at least $\nicefrac14$,
  \[
    \lambda_{\min}\p{\E{x_t\, x_t\tran \given h_{t-1}, \mE}} \ge \frac{\rho^2}{2\log T}.
  \]
\end{lemma}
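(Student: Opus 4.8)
The plan is to lower-bound $\lambda_{\min}$ of the conditional expectation $\E{x_t x_t\tran \mid h_{t-1}, \mathcal{E}}$ by exhibiting, for an arbitrary fixed unit direction $v \in \R^d$, a lower bound on $\E{(v\tran x_t)^2 \mid h_{t-1}, \mathcal{E}}$ that holds with probability at least $1/4$ over the randomness in $h_{t-1}$. Since the algorithm is \GreedyStyle, the action $a_t$ chosen at round $t$ depends only on data from \emph{previous} batches, not on the realizations of the current-round perturbations $\eps_{a,t}$; in particular, conditioned on $h_{t-1}$ and on the mean contexts $(\mu_{a,t})_a$, the index $a_t$ is determined by $(\mu_{a,t})_a$ together with the part of the perturbation already ``seen'' — but crucially, because batching delays feedback, $a_t$ is a function of $(\mu_{a,t})_a$ and the estimate $\theta_t$ only, and is independent of the fresh Gaussian noise $\eps_{a_t,t}$ in the relevant decomposition. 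This is exactly the observation that lets us treat $x_t = \mu_{a_t,t} + \eps_{a_t,t}$ as ``a fixed mean plus independent Gaussian noise'' once we condition appropriately.

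First I would fix a unit vector $v$ and write $v\tran x_t = v\tran \mu_{a_t,t} + v\tran \eps_{a_t,t}$. Conditioned on $h_{t-1}$, on the selected arm's mean context $\mu_{a_t,t}$, and on the event $\mathcal{E}$, the quantity $v\tran \eps_{a_t,t}$ is (a truncation of) a zero-mean Gaussian with variance $\rho^2 \|v\|_2^2 = \rho^2$; truncation to the box $\|\eps_{a,t}\|_\infty \le \hat R$ only shrinks the variance by a controlled constant factor, and since $\hat R = \rho\sqrt{2\log(2TKd/\delta_R)} \gg \rho$, the truncated variance is still at least, say, $\rho^2 / 2$ times a factor very close to $1$; a crude bound such as $\mathrm{Var}(v\tran\eps_{a_t,t} \mid \mathcal{E}) \ge \rho^2/2$ suffices. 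Hence $\E{(v\tran x_t)^2 \mid h_{t-1}, \mu_{a_t,t}, \mathcal{E}} \ge (v\tran\mu_{a_t,t})^2 + \rho^2/2 \ge \rho^2/2$ deterministically. That already gives the bound $\lambda_{\min} \ge \rho^2/2 \ge \rho^2/(2\log T)$ — but wait, the difficulty is that this conditions on the selected arm's mean, which depends on the perturbations of \emph{other} arms, so one must be careful that the noise $\eps_{a_t,t}$ is genuinely independent of the conditioning; the ``with probability at least $1/4$'' in the statement is precisely the slack needed to handle the event that the greedy selection correlates the chosen arm's noise with the direction $v$. I would therefore follow the structure of the proof of Lemma~B.1 in \citet{kannan2018smoothed}: partition on whether the chosen arm's mean-context component along $v$ is large; when it is large we are done directly, and when it is small we invoke the Gaussian anti-concentration of the (independent, pre-selection) perturbation coordinates of the candidate arms, showing that with probability $\ge 1/4$ the realized $x_t$ has $(v\tran x_t)^2$ bounded below by $\Omega(\rho^2)$ in conditional expectation; since $1/2 \ge 1/(2\log T)$ for $T \ge 3$, the constant in the statement is loose and leaves room for this case analysis.

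The main obstacle is the dependence of the greedy choice $a_t$ on the perturbations: the adversary-like selection rule could in principle always pick the arm whose perturbation is \emph{least} aligned with $v$, degrading the variance along $v$. The resolution — and the reason the lemma only claims probability $1/4$ rather than probability $1$ — is that the selection rule is fixed \emph{before} seeing the current round's perturbations (it is a function of $\theta_t$, computed from earlier batches, and of the mean contexts), so it cannot adaptively suppress the $v$-component of the noise for all arms simultaneously; a union/anti-concentration argument over the $K$ candidate arms, exactly as carried out inside \citet{kannan2018smoothed}'s Lemma~B.1, then yields that with probability at least $1/4$ the conditional second moment along every direction $v$ is at least $\rho^2/(2\log T)$. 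I would cite that argument, plug in their $\lambda_0 = \rho^2/(2\log T)$ as in the proof of Lemma~\ref{lem:fg_big_cov}, and verify the hypothesis $T \ge 4K$ is used only to make the relevant anti-concentration probabilities add up correctly.
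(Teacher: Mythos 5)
There is a genuine gap. Your plan rests on the claim that, because of batching, ``$a_t$ is a function of $(\mu_{a,t})_a$ and the estimate $\theta_t$ only, and is independent of the fresh Gaussian noise $\eps_{a_t,t}$.'' That is false: batching only ensures that the \emph{estimate} $\theta_t$ is independent of the current round's perturbations. The selection itself is $a_t = \argmax_a \theta_t\tran(\mu_{a,t}+\eps_{a,t})$, so conditioning on $\{a_t=a\}$ conditions on the event $\theta_t\tran\eps_{a,t} \ge \hcat - \theta_t\tran\mu_{a,t}$, a half-space restriction on the chosen arm's own perturbation. This selection bias is the entire difficulty of the lemma, and it defeats your ``deterministic'' bound $\E{(v\tran x_t)^2 \mid \cdot} \ge \rho^2/2$: if arm $a$ is chosen only when its perturbation lies in an extreme tail, the conditional law of $\eps_{a,t}$ can be nearly degenerate. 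Relatedly, the factor $1/(2\log T)$ is not slack you can spend on a case analysis --- it is exactly the variance loss from conditioning a Gaussian on a half-space whose boundary may sit $\sqrt{2\log T}$ standard deviations from the mean.

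The paper's proof (assembled from \citet{kannan2018smoothed}) handles this as follows, and your sketch of ``Lemma B.1's structure'' does not match it. Define $\hcat=\max_{a'\ne a}\thetahatt\tran x_{a',t}$, which depends only on the \emph{other} arms' perturbations, and call it ``good'' for $a$ if $\hcat \le \thetahatt\tran\mu_{a,t}+\rho\sqrt{2\log T}\,\|\thetahatt\|_2$. Step one shows $\Pr[\hcat\text{ good}\mid a_t=a,\mE]\ge \tfrac14$: arms with a priori selection probability at least $2/T$ are good with probability $\ge\tfrac12$ when chosen, and the assumption $T\ge 4K$ bounds by $\tfrac12$ the chance that the chosen arm is one of the remaining ``rare'' arms --- this is where $T \ge 4K$ enters, not in ``making anti-concentration probabilities add up.'' Step two shows that \emph{conditioned on goodness}, the half-space restriction on $\eps_{a,t}$ cuts off at most $\sqrt{2\log T}$ standard deviations, so the conditional covariance retains minimum eigenvalue $\rho^2/(2\log T)$ in every direction (their Lemmas 3.2 and 3.6). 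Your partition ``on whether the chosen arm's mean-context component along $v$ is large'' is not the right dichotomy; the case split is on the goodness of the threshold set by the competing arms, uniformly over directions $v$.
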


The proof is this lemma is assembled from several pieces in the analysis in  \citet{kannan2018smoothed}, which extend naturally to our perturbation model.
Qualitatively, our goal is as follows: we need to argue that the
  context vector of the chosen arm at each round has sufficient variance in
  expectation (or equivalently, $\lambda_{\min}(\E{x_t x_t\tran})$ is large) for
  us to ``learn'' about all components of $\theta$. Because of the
  perturbations, each context vector independently has high variance; however,
  we need to show that this remains true even conditioning on an arm being
  chosen. Note that this conditioning should intuitively \emph{reduce} variance, since
  it selects for arms that have been perturbed in the direction of $\thetahatt$,
  all else equal. One way to view this conditioning is to consider the rewards
  of the best arm $a$ and the second-best arm $a'$: if $a'$ has a \textbf{high}
  expected reward, then the perturbation applied to $a$ must have a large
  component in the direction of $\thetahatt$ in order for $a$ to be chosen over
  $a'$, and so conditioned on the realized context vector $x_{a',t}$, we would
  expect $x_{a,t}$ to have little variance in the direction of $\thetahatt$. On
  the other hand, if $a'$ has \textbf{low} expected reward, then the
  perturbation applied to $a$ is less constrained, allowing it to have more
  variance. Our analysis will argue that the latter case is sufficiently common:
  we'll define $\hcat$ to be the expected reward of the second-best arm $a'$,
  and argue that $\hcat$ is ``low'' with constant probability, and as a result,
$x_{a,t}$ has high variance in expectation.

\begin{proof}[Proof of Lemma~\ref{lem:eig_increase}]
Let $\thetahatt$ be the algorithm's estimate for $\theta$ at time $t$.
For ease of exposition, assume that
$\thetahatt = \sbr{\|\thetahatt\|_2 ~ 0 ~\dots ~ 0}\tran$.
This is w.l.o.g. because we can just rotate the space.
%
%

Fix arm $a$, and let $\varepsilon_{a,t}^*$ be a largest-size perturbation affecting this arm. Formally, let $S = S^*_a$ be a subset $a\in S\in \mF_t$ which maximizes perturbation size $\rho_{S,t}$, and let $\varepsilon_{a,t}^* = \perturb_{S,t}$. Let
    $ \mu^*_{a,t} := x_{a,t} - \eps^*_{a,t}  $
be the result of applying all base perturbations to arm $a$, except $\eps^*_{a,t}$. Further, let
$\varepsilon_{a,t}^{-*}$ be the tuple of all \emph{other} base perturbations, including those not affecting arm $a$:
\begin{align*}
\eps_{a,t}^{-*} &:= \rbr{\perturb_{S,t}:\,S\in\mF_t\setminus\cbr{S^*_a} }.
\end{align*}
%
Similarly to~\citet{kannan2018smoothed}, define the ``estimated best arm" among those not affected by $\eps^*_{a,t}$:
  \[
    \hcat = \max_{\text{arms $a' \in A_t\setminus S^*_a$}} \;\thetahatt\tran x_{a',t}.
  \]
Let us say that round $t$ is ``good'' (meaning the expected reward for any other
arm isn't too large) for arm $a$ if
\begin{align}\label{eq:chat-good}
  \hcat \le \thetahatt\tran \mu_{a,t}^* + \rho\, \sqrt{2 \log T}\; \|\thetahatt\|_2.
\end{align}
Our argument from here will take two steps:
\begin{enumerate}
  \item Given that arm $a$ was selected, $\hcat$ is good with constant
    probability
  \item Given that $\hcat$ is good, $x_{a,t}$ has sufficient variance
    (equivalently, $\lambda_{\min}(\E{x_{a,t}x_{a,t}\tran})$ is large).
\end{enumerate}
First, we argue that for each arm $a\in A_t$,
  \begin{align}\label{eq:pf:lem:eig_increase-1}
    \Pr\b{\hcat \text{ is good for } a \given a_t = a, \mE} \ge \tfrac14.
  \end{align}

To do so, we adapt the proof of Lemma 3.4 from~\cite{kannan2018smoothed}
to handle our general perturbation model.
  Let $A_a = \{t : \Pr[a_t \text{ is good}] \ge
  \frac{1}{2}\}$, and let $B_a = \{t : \Pr_{\varepsilon}[a_t = a] \ge
\frac{2}{T}\}$. Let $S_a$ be the rounds at which $a$ was chosen, i.e., $S_a =
\{t : a_t = a\}$. We'll argue that if $t \in B_a \cap S_a$, then $t \in A_a$. As
a result, $\{t \in S_a : t \notin A_a\} \subseteq \{t \in S_a : t \notin B_a\}$.
Since our goal is to upper-bound $\{t \in S_a : t \notin A_a\}$, it suffices to
upper-bound $\{t \in S_a : t \notin B_a\}$. Consider some $t \in B_a$. Then,
taking all probabilities over all of the perturbations, and denoting the right-hand side of \eqref{eq:chat-good} with $\Lambda$, we have:
\begin{align*}
  \Pr[\hcat \text{ is not good for } a \given a_t = a]
  &= \Pr[\hcat > \Lambda \given a_t = a] \\
  &= \frac{\Pr[\hcat > \Lambda \;\text{and}\; a_t = a]}{\Pr[a_t = a]} \\
  &\le \frac{\Pr[\hcat > \thetahatt\tran \mu_{a,t}^* + \rho \sqrt{2 \log T}
\|\thetahatt\|_2 \;\text{and}\; a_t = a]}{\nicefrac{2}{T}} \tag{$t \in B_a$} \\
  &= \frac{\Pr[\hcat > \Lambda \;\text{and}\; \hcat < \thetahatt\tran (\mu_{a,t}^* +
  \varepsilon_{a,t}^*)]}{\nicefrac{2}{T}} \\
  &= \frac{\Pr[\Lambda < \hcat < \thetahatt\tran (\mu_{a,t}^* +
  \varepsilon_{a,t}^*)]}{\nicefrac{2}{T}} \\
  &\le \frac{\Pr[\Lambda < \thetahatt\tran (\mu_{a,t}^* +
  \varepsilon_{a,t}^*)]}{\nicefrac{2}{T}} \\
  &= \frac{\Pr[\rho \sqrt{2 \log T} \|\thetahatt\|_2
  < \thetahatt\tran \varepsilon_{a,t}^*]}{\nicefrac{2}{T}} \\
  &\le \nicefrac{1}{2},
\end{align*}
where the last step follows from standard tail bounds on a Gaussian.
Thus, if $t \in B_a$, then $\Pr[\hcat \text{ is good for } a \given a_t = a] \ge
\frac{1}{2}$.

Finally, let $\Pbase{\cdot}$ be the probability taken over the randomness in all
base perturbations for round $t$ (including those not affecting arm $a$). Let
$C_t$ be the set of arms at round $t$ with probability at most $2/T$ of being
chosen over the randomness of the perturbation, i.e., $C_t = \{a \given
\Pbase{a_t = a} \le \nicefrac{2}{T}\}$. Then,
\begin{align*}
  \Pbase{t \notin B_a \given a_t = a}
  = \Pbase{a_t \in C_t}
  \le \sum_{a \in
  C_t} \Pbase{a_t = a}
  \le
  \tfrac{2}{T} |C_t| \le \tfrac{2K}{T} \le \nicefrac{1}{2}.
\end{align*}
  Since by assumption $T \ge 4K$, \eqref{eq:pf:lem:eig_increase-1} follows.

Second, we argue that for each arm $a\in A_t$,
\begin{align}\label{eq:pf:lem:eig_increase-2}
    \lambda_{\min}\p{\E{x_{a,t}x_{a,t}\tran \given a_t=a \;\text{and}\; \hcat \text{ is
    good}}} \ge \frac{\rho^2}{2\log T}.
\end{align}

To prove \eqref{eq:pf:lem:eig_increase-2}, we adapt the proof of Lemma 3.2
from~\cite{kannan2018smoothed} to handle action-correlated perturbations.
Fix arm $a\in A_t$. For brevity, let's use notation for the matrix $M_{a,t} = x_{a,t}\,x_{a,t}\tran$ and the event
$\mE_t = \cbr{ a_t=a \;\text{and}\; \hcat \text{ is good} } $. Let $\Ebase{\cdot}$ be the expectation over the randomness in all base perturbations for round $t$ (including those not affecting arm $a$). Then
\begin{align*}
  \lambda_{\min}\p{\Ebase{M_{a,t} \given \mE_t}}
  &= \lambda_{\min}\p{\Ebase{\Ebase{M_{a,t} \given \mE_t,\, \eps_{a,t}^{-*}} \given \mE_t}} \\
  &\ge \Ebase{\lambda_{\min}\rbr{\Ebase{M_{a,t}\given \mE_t,\, \eps_{a,t}^{-*}}} \given \mE_t}
\end{align*}
by superadditivity of the minimum eigenvalue.

Thus, it suffices to fix $\hcat$ and show a lower bound on
\begin{align*}
  &\lambda_{\min}\p{\Ebase{M_{a,t} \given \mE_t,\,\eps_{a,t}^{-*}}} \\
  &= \lambda_{\min}\p{\Ebase{M_{a,t} \given
       \eps_{a,t}^{-*} \;\text{and}\;
      {\thetahatt}\tran x_{a,t} \ge \hcat \;\text{and}\; \hcat \le {\thetahatt}\tran
  \mu_{a,t}^* + \rho \sqrt{2 \log T} \|\thetahatt\|_2}} \\
  &= \lambda_{\min}\p{\Ebase{M_{a,t} \given
      {\thetahatt}\tran \varepsilon_{a,t}^* \ge \hcat - {\thetahatt}\tran
      \mu_{a,t}^* \;\text{and}\; \hcat - {\thetahatt}\tran \mu_{a,t}^* \le \rho \sqrt{2 \log
  T} \|\thetahatt\|_2}} \\
  &= \lambda_{\min}\p{\Ebase{M_{a,t} \given
      {\thetahatt}\tran \varepsilon_{a,t}^* \ge b \;\text{and}\; b \le \rho \sqrt{2 \log
  T} \|\thetahatt\|_2}} \\
  &\ge \frac{\rho^2}{2 \log T},
\end{align*}
where $b = \hcat - {\thetahatt}\tran \mu_{a,t}^*$. The second line follows from
the third because conditioned on $\hcat$, $\varepsilon_{a,t}^{-*}$ provides no
additional information about $M_{a,t}$. The final inequality is referred to as
the ``diversity condition'' in~\cite{kannan2018smoothed}, and they prove that
truncated Gaussian noise (recall that we are conditioning on the event that all
the perturbations are component-wise bounded by $\hat R$) satisfies this
condition in Lemma 3.7 of~\cite{kannan2018smoothed}, with parameters $r = \rho
\sqrt{2 \log T}$ and $\lambda_0 = \frac{\rho^2}{2 \log T}$.

This completes the proof of \eqref{eq:pf:lem:eig_increase-2}. The lemma follows from \eqref{eq:pf:lem:eig_increase-1} and \eqref{eq:pf:lem:eig_increase-2}.
\end{proof}

Let $\fmt$ be the \FreqGreedy estimate for $\theta$ at time $t$, as defined in \eqref{eq:FG-est-defn}. We are interested in quantifying how the quality of this estimate improves over time. \citet{kannan2018smoothed} prove, essentially, that the distance between $\fmt$ and $\theta$ scales as
    $\sqrt{t}/\lambda_{\min}(Z_t)$.
\begin{lemma}[\citet{kannan2018smoothed}]
Consider any round $t$ in the execution of \FreqGreedy. Let $t_0$ be the last round of the previous batch. For any $\theta$ and any $\delta>0$, with probability $1-\delta$,
  \[
    \|\theta - \fmt\|_2 \le
    \frac{\sqrt{t_0 \cdot 2dR \log \tfrac{d}{\delta}}}{\lambda_{\min}(Z_{t_0})}.
  \]
  \label{lem:fmt_close}
\end{lemma}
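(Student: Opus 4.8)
The plan is to use the closed form of the ordinary least squares estimator and split the error into a signal term that cancels and a noise term controlled by martingale concentration. Assume $Z_{t_0}=X_{t_0}\tran X_{t_0}$ is invertible; otherwise $\lambda_{\min}(Z_{t_0})=0$ and the claimed bound is vacuous. Then the least-squares estimate $\fmt$ from \eqref{eq:FG-est-defn} equals $\fmt = Z_{t_0}^{-1}X_{t_0}\tran\vec r$, where $\vec r = (r_1 \LDOTS r_{t_0})\tran$ is the vector of observed rewards. Writing $\vec r = X_{t_0}\theta + \vec\eta$ with $\vec\eta = (\eta_1 \LDOTS \eta_{t_0})\tran$ the reward-noise vector, the signal term reduces to $Z_{t_0}^{-1}X_{t_0}\tran X_{t_0}\theta = \theta$, and therefore
\[
  \theta - \fmt \;=\; -\,Z_{t_0}^{-1}X_{t_0}\tran\vec\eta \;=\; -\,Z_{t_0}^{-1}\textstyle\sum_{\tau=1}^{t_0} x_\tau\,\eta_\tau .
\]

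Since $Z_{t_0}$ is symmetric positive semidefinite, $\|Z_{t_0}^{-1}\|_{\mathrm{op}} = 1/\lambda_{\min}(Z_{t_0})$, so $\|\theta-\fmt\|_2 \le \bigl\|\sum_{\tau=1}^{t_0} x_\tau\eta_\tau\bigr\|_2 \big/ \lambda_{\min}(Z_{t_0})$, and it remains to bound the numerator. I would do this coordinate by coordinate: fix $i\in[d]$ and let $v_i = \sum_{\tau=1}^{t_0}(x_\tau)_i\,\eta_\tau$. The key observation is that, although the contexts are adaptive — the action at round $\tau$ uses an estimate formed from rewards in earlier batches, hence depends on $\eta_1 \LDOTS \eta_{\tau-1}$ — the context $x_\tau$ is fixed before $\eta_\tau$ is drawn, and $\eta_\tau \sim \mc N(0,1)$ is independent of the past. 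Hence $\bigl((x_\tau)_i\,\eta_\tau\bigr)_{\tau\le t_0}$ is a martingale difference sequence whose $\tau$-th increment is, conditionally on the past, $\mc N\bigl(0,(x_\tau)_i^2\bigr)$. On the high-probability event \eqref{eq:bg-proofs-highprob-R} that $\|x_{a,t}\|_2 \le R$ for all $a$ and $t$, we have $\sum_{\tau}(x_\tau)_i^2 \le \operatorname{tr}(Z_{t_0}) = \sum_\tau\|x_\tau\|_2^2 \le t_0R^2$ deterministically, so an Azuma-type sub-Gaussian tail bound (or a self-normalized martingale bound, which would avoid truncating the contexts) gives $|v_i| \le \sqrt{2\,t_0R^2\log(2d/\delta)}$ with probability $1-\delta/d$. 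A union bound over $i\in[d]$ then yields $\bigl\|\sum_\tau x_\tau\eta_\tau\bigr\|_2 = \bigl(\sum_i v_i^2\bigr)^{1/2} = O\bigl(R\sqrt{d\,t_0\log(d/\delta)}\bigr)$, which combined with the previous display is the claimed bound. This is essentially the regression-error estimate inside the proof of Lemma~B.1 of \citet{kannan2018smoothed}.

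The main obstacle is the interaction between the adaptively chosen contexts and the reward noise: one cannot simply condition on the full context matrix $X_{t_0}$ and treat $\vec\eta$ as a fixed-covariance Gaussian vector, since $X_{t_0}$ encodes information about the noise that shaped the earlier batch estimates. This is precisely why the argument must go through a martingale-difference (equivalently, self-normalized) concentration inequality rather than a one-line Gaussian bound, and also why the deterministic trace bound $\operatorname{tr}(Z_{t_0}) \le t_0R^2$ is invoked on the good event for the contexts rather than taken for granted. The rest — the OLS formula, the operator-norm split, and the union bound over coordinates — is routine, as is the bookkeeping that merges the context event and the noise event into a single probability-$(1-\delta)$ statement.
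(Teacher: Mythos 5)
The paper offers no proof of this lemma; it is imported from \citet{kannan2018smoothed} and used as a black box. Your reconstruction is the standard argument behind it and is essentially correct: the OLS identity $\theta - \fmt = -Z_{t_0}^{-1}\sum_{\tau\le t_0} x_\tau \eta_\tau$, the bound $\|Z_{t_0}^{-1}\|_{\mathrm{op}} = 1/\lambda_{\min}(Z_{t_0})$, and a coordinatewise Azuma/self-normalized bound on $\sum_\tau (x_\tau)_i\eta_\tau$ followed by a union bound over $i\in[d]$. You also correctly identify the one genuine subtlety---the contexts are chosen adaptively, so one cannot condition on $X_{t_0}$ and treat $\vec\eta$ as an independent Gaussian vector; the martingale-difference structure (each $\eta_\tau$ is $\mc N(0,1)$ conditionally on the past and on $x_\tau$) is exactly what rescues the argument.

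The only discrepancy is quantitative: your numerator comes out as $R\sqrt{2d\,t_0\log(2d/\delta)}$, while the lemma states $\sqrt{2dR\,t_0\log(d/\delta)}$, i.e., a factor of $R$ rather than $\sqrt{R}$. With the paper's parameters ($R = 1+\hat R\sqrt{d}$ and $\rho\le 1/\sqrt{d}$, so $R=\tilde O(1)$) this is a polylogarithmic difference that changes nothing downstream---every use of the lemma only needs $\tilde O(\sqrt{d\,t_0})/\lambda_{\min}(Z_{t_0})$---and the placement of $R$ inside the square root is inherited from the citation (it arguably should read $R^2$). Your remaining caveats (truncating or stopping the martingale on the event $\|x_{a,t}\|_2\le R$, and folding that event's failure probability into the overall $\delta$) are the right bookkeeping and are handled the same way elsewhere in the paper.
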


\subsubsection{Some improvements}

We focus on batch covariance matrix $\ZB$ of a given batch in a \GreedyStyle algorithm. We would like to prove that $\lambda_{\min}(\ZB)$ is sufficiently large with high probability, as long as the batch size $Y$ is large enough. The analysis from \citet{kannan2018smoothed} (a version of Lemma~\ref{lem:fg_big_cov}) would apply, but only as long as the batch size is least as large as the $\tau_0$ from the statement of Lemma~\ref{lem:fg_big_cov}. We derive a more efficient version, essentially shaving off a factor of $8$.%
\footnote{Essentially, the factor of $160$ in Lemma~\ref{lem:fg_big_cov} is replaced with factor $\tfrac{8e^2}{(e-1)^2}<20.022$ in \eqref{eq:lem:min_ev_bg-Y}.}

\begin{lemma}
Fix a \GreedyStyle algorithm and any batch $B$ in the execution of this algorithm.  Fix $\delta>0$ and assume that the batch size $Y$ is at least
\begin{align}\label{eq:lem:min_ev_bg-Y}
    Y_0 :=  (\tfrac{R}{\rho})^2 \,
            \tfrac{8e^2}{(e-1)^2}\,
            \p{1 + \log \tfrac{2d}{\delta}}\, \log(T)
    + \tfrac{4e}{e-1} \log \tfrac{2}{\delta}.
\end{align}
Condition on the event that all perturbations in this batch are upper-bounded by $\hat R$, more formally:
\[ \mE_B = \{ \|\eps_{a,t}\|_\infty \leq \hat R:\;
    \text{ for all arms $a$ and all rounds $t$ in $B$} \}. \]
Further, condition on the latent vector $\theta$ and the history $h$ before batch $B$. Then
\begin{align}\label{eq:lem:min_ev_bg}
    \Pr\left[\;  \lambda_{\min}(\ZB) \ge R^2 \given \mE_B,h,\theta \right] \geq 1-\delta.
\end{align}
The probability in \eqref{eq:lem:min_ev_bg} is over the randomness in context arrivals and rewards in batch $B$.
  \label{lem:min_ev_bg}
\end{lemma}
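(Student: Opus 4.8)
The plan is to prove \eqref{eq:lem:min_ev_bg} as a matrix concentration statement for the sum $\ZB=\sum_{t\in B}x_tx_t\tran$ of rank-one terms, conditionally on $h,\theta,\mE_B$ (note that $\lambda_{\min}(\ZB)$ in fact depends only on the contexts selected during $B$, not on the rewards or on $\theta$). The structural observation that makes this clean is that, for a \GreedyStyle algorithm, the estimate used throughout batch $B$ is a function of $h$ alone and hence is frozen once we condition on $h$; together with the facts that the context tuples $(\mu_{\cdot,t},\eps_{\cdot,t})$, $t\in B$, are drawn i.i.d.\ across rounds and that $\mE_B$ is a product event over those rounds, this implies that the summands $x_tx_t\tran$, $t\in B$, are conditionally independent. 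So it suffices to establish a uniform upper bound and a lower bound on the per-round expectation, and then apply a matrix Chernoff inequality.

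First I would record the two ingredients. \emph{Upper bound:} on $\mE_B$ every perturbation satisfies $\|\eps_{a,t}\|_\infty\le\hat R$, hence $\|x_{a,t}\|_2\le 1+\hat R\sqrt d=R$, so $0\preceq x_tx_t\tran\preceq R^2 I$ for all $t\in B$. \emph{Lower bound:} I would reuse the pieces assembled in the proof of Lemma~\ref{lem:eig_increase}. Fix a round $t$ and condition on $h,\theta,\mE_B$ and on that round's mean contexts $(\mu_{\cdot,t})$. The ``good'' event of that proof --- the chosen arm being selected with probability at least $2/T$ and having a ``good'' perturbation --- occurs with probability at least $\tfrac14$ over that round's perturbations: this follows from $\Pr_{\eps}[a_t\in B_t]\le 2K/T\le\tfrac12$ (using $T\ge 4K$, as in Lemma~\ref{lem:eig_increase}) combined with the ``good with probability $\ge\tfrac12$ whenever the arm's selection probability is $\ge 2/T$'' fact of \citet{kannan2018smoothed}; and conditioned on the good event the diversity condition of \citet{kannan2018smoothed} gives $\E{x_tx_t\tran\mid\text{good}}\succeq\tfrac{\rho^2}{2\log T}I$. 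Splitting on the good event (its complement contributes a positive semidefinite matrix) and then averaging over $(\mu_{\cdot,t})$ yields the \emph{deterministic} bound $\E{x_tx_t\tran\mid h,\theta,\mE_B}\succeq\tfrac{\rho^2}{8\log T}I$, valid for every $t\in B$. Consequently $\lambda_{\min}\!\big(\sum_{t\in B}\E{x_tx_t\tran}\big)\ge\mu_{\min}:=\tfrac{Y\rho^2}{8\log T}$.

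It then remains to apply a standard matrix Chernoff lower-tail bound to the independent, positive semidefinite, $R^2$-bounded summands $x_tx_t\tran$: for $0\le\tau\le\mu_{\min}$,
\[
  \Pr\b{\lambda_{\min}(\ZB)\le\tau\;\big|\;h,\theta,\mE_B}\;\le\; d\cdot\p{\frac{e^{-(1-\tau/\mu_{\min})}}{(\tau/\mu_{\min})^{\tau/\mu_{\min}}}}^{\mu_{\min}/R^2}.
\]
Taking $\tau=R^2$ (legitimate because $\mu_{\min}\ge R^2$ whenever $Y\ge Y_0$) and substituting $\mu_{\min}=\tfrac{Y\rho^2}{8\log T}$, one solves for the smallest $Y$ making the right-hand side at most $\delta$; this is where the precise constants in \eqref{eq:lem:min_ev_bg-Y} come from --- the $d$ from the union over eigendirections, the factor $1+\log\tfrac{2d}{\delta}$ and the coefficient $\tfrac{8e^2}{(e-1)^2}$ from the sharp form of the tail, and the additive $\tfrac{4e}{e-1}\log\tfrac2\delta$ absorbing the low-order slack (equivalently, a separate scalar Chernoff controlling the number of ``good'' rounds).

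The main obstacle, I expect, is the per-round lower bound: Lemma~\ref{lem:eig_increase} is stated as holding ``with probability $\ge\tfrac14$'', so converting it into a \emph{deterministic} lower bound on the conditional covariance of each round requires checking that the goodness argument of \citet{kannan2018smoothed} is genuinely pointwise --- in the frozen estimate and in the realized mean contexts --- rather than merely true on average over the history. The rest, namely verifying that $Y_0$ with its exact constants pushes the matrix Chernoff tail below $\delta$, is routine but must be done with the sharp (not Bernstein-type) version of the inequality.
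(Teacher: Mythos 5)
Your proof is correct, but it takes a genuinely different route from the paper's. The paper keeps Lemma~\ref{lem:eig_increase} as a probabilistic statement and runs a two-stage argument: it first shows that $\lambda_{\min}(\WB)$, where $\WB=\sum_{t\in B}\E{x_tx_t\tran\given h_{t-1}}$, exceeds a threshold $m$ with probability $1-\delta/2$ by dominating the number of rounds needed by a sum of geometric random variables (via a tail bound of \citet{janson2017tail}), and only then invokes Tropp's matrix Chernoff bound for \emph{adapted} sequences to pass from $\WB$ to $\ZB$. You instead average over the ``good'' event inside Lemma~\ref{lem:eig_increase} to obtain a \emph{deterministic} per-round bound $\E{x_tx_t\tran\given h,\theta,\mE_B}\succeq\tfrac{\rho^2}{8\log T}\,I$ (valid since $x_tx_t\tran\succeq0$ and the good event has conditional probability at least $\tfrac14$ for any frozen estimate and realized means), note that the within-batch summands are conditionally independent because a \GreedyStyle estimate is frozen for the whole batch, and apply the independent-sum matrix Chernoff bound once. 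Both arguments are sound; yours is shorter, dispenses with the geometric tail bound entirely, and in fact certifies a slightly smaller admissible batch size (a leading coefficient of roughly $\tfrac{8e}{e-1}$ in place of $\tfrac{8e^2}{(e-1)^2}$, with no additive $\log\tfrac{2}{\delta}$ term), so the lemma with the paper's larger $Y_0$ follows a fortiori. The one place your accounting is off is in attributing the coefficient $\tfrac{8e^2}{(e-1)^2}$ and the additive $\tfrac{4e}{e-1}\log\tfrac{2}{\delta}$ in \eqref{eq:lem:min_ev_bg-Y} to the sharp form of the matrix tail: those constants actually arise from the paper's geometric-sum stage, which your argument renders unnecessary. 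The pointwise validity of the per-round lower bound, which you rightly flag as the step requiring care, is exactly what the proof of Lemma~\ref{lem:eig_increase} establishes via Lemmas~3.2, 3.4 and~3.6 of \citet{kannan2018smoothed}.
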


The improvement over Lemma~\ref{lem:fg_big_cov} comes from two sources: we use a tail bound on the sum of
  geometric random variables instead of a Chernoff bound on a binomial random
  variable, and we derive a tighter application of the eigenvalue concentration
inequality of~\citet{tropp2012user}.

\begin{proof}
Let $t_0$ be the last round before batch $B$. Recalling \eqref{eq:ZB-defn}, let
\[ \WB = \sum_{t=t_0+1}^{t_0+Y} \E{x_t x_t\tran \given h_{t-1}} \]
be a similar sum over the expected per-round covariance matrices. Assume $Y\geq Y_0$

The proof proceeds in two steps: first we lower-bound $\lambda_{\min}(\ZB)$, and then we show that it implies \eqref{eq:lem:min_ev_bg}. Denoting
    $ m = R^2\,\tfrac{e}{e-1}\,(1+\log \tfrac{2d}{\delta})$,
we claim that
\begin{align}\label{eq:matrix_conc}
    \Pr\b{\lambda_{\min}(\WB) < m \given \mE_B,h} \le \tfrac{\delta}{2}.
\end{align}

To prove this, observe that $\WB$'s minimum eigenvalue increases by at least $\lambda_0
  = \rho^2/(2\log T)$ with probability at least $1/4$ each round by
  Lemma~\ref{lem:eig_increase}, where the randomness is over the history, \ie the
  sequence of (context, reward) pairs. If we want it to go up to $m$, this
  should take $4m/\lambda_0$ rounds in expectation. However, we need it to go to
  $m$ with high probability. Notice that this is dominated by the sum of
  $m/\lambda_0$ geometric random variables with parameter $\frac{1}{4}$. We'll
  use the following bound from~\citet{janson2017tail}: for $X = \sum_{i=1}^n X_i$
  where $X_i \sim \text{Geom}(p)$ and any $c \ge 1$,
  \[
    \Pr[X \ge c\E{X}] \le \exp\p{-n(c - 1 - \log c)}.
  \]
  Because we want the minimum eigenvalue of $\WB$ to be $m$, we need $n =
  m/\lambda_0$, so $\E{X} = 4m/\lambda_0$. Choose  $c =
  \p{1+\frac{\lambda_0}{m} \log \tfrac{2}{\delta}} \tfrac{e}{e-1}$. By
  Corollary~\ref{cor:e1ex},
  \begin{align*}
    c - 1 - \log c &\ge \tfrac{e-1}{e} \cdot c - 1 = \tfrac{\lambda_0}{m} \log \tfrac{2}{\delta}.
  \end{align*}
  Therefore,
  \[
    \Pr\b{X \ge c\E{X}} \le \exp\p{-n \cdot \tfrac{\lambda_0}{m} \log
    \tfrac{2}{\delta}} = \p{\tfrac{\delta}{2}}^{n \cdot \lambda_0/m} =
    \tfrac{\delta}{2}
  \]
  Thus, with probability $1 - \frac{\delta}{2}$, $\lambda_{\min}(\WB) \ge m$ as long as the batch size $Y$ is at least
  \[
    \frac{e}{e-1} \p{1 + \frac{\lambda_0}{m} \log \frac{2}{\delta}} \cdot
    \E{X} = \frac{4e}{e-1} \p{\frac{m}{\lambda_0} + \log \frac{2}{\delta}} = Y_0.
  \]
  This completes the proof of \eqref{eq:matrix_conc}.

To derive \eqref{eq:lem:min_ev_bg} from \eqref{eq:matrix_conc}, we proceed as follows. Consider the event
\[ \mE = \left\{\;  \lambda_{\min}(\ZB) \le R^2 \text{ and }
             \lambda_{\min}(\WB) \ge m \; \right\}. \]
Letting $\alpha = 1 - R^2/m$ and rewriting $R^2$ as $(1-\alpha)m$, we use a
concentration inequality from \citet[Theorem 1.1]{tropp2012user} (following
\citet[Lemma A.3]{kannan2018smoothed}) to guarantee that
  \[
    \Pr[\mE \given \mE_B,h]
        \le d \p{e^\alpha (1-\alpha)^{1-\alpha}}^{-m/R^2}.
  \]
  Then, using the fact that $x^x \ge e^{-1/e}$ for all $x > 0$, we
  have
\begin{align*}
\Pr[\mE \given \mE_B,h]
    &\le d\p{e^{1-R^2/m-1/e}}^{-m/R^2}
    = d\, e^{-(m-R^2-m/e)/R^2} \\
    &= d \exp\p{-\frac{\p{\frac{e-1}{e}}m}{R^2} + 1}
    \le \tfrac{\delta}{2},
  \end{align*}
  since $m \ge \frac{e}{e-1} R^2 \p{1+\log \frac{2d}{\delta}}$.
  Finally, observe that, omitting the conditioning on $\mE_B,h$, we have:
  \[
    \Pr\b{\lambda_{\min}(\ZB) \leq R^2 }
        \le \Pr\b{\mE} + \Pr\b{\lambda_{\min}(\WB) < m} \le
    \tfrac{\delta}{2} + \tfrac{\delta}{2} = \delta.
  \]
\end{proof}


\subsection{Reward Simulation with a Diverse Batch History}
\label{app:pf_bg:simulation}

We consider reward simulation with a batch history, in the sense of
Definition~\ref{def:simulation}. We show that a sufficiently ``diverse" batch
history suffices to simulate the reward for any given context vector. Coupled
with the results of Section~\ref{app:pf_bg:diversity}, it follows that batch history generated by a \GreedyStyle algorithm can simulate rewards as long as the batch size is sufficiently large.

Let us recap the definition of reward simulation (Definition~\ref{def:simulation}).  Let $\Rew(\cdot)$ be a randomized function that takes a context $x$ and outputs an independent random sample from $\mathcal{N}(\theta\tran x, 1)$. In other words, this is the realized reward for an action with context vector $x$.

\begin{definition}
Consider batch $B$ in the execution of a \GreedyStyle algorithm. Batch history $h_B$ can simulate $\Rew(\cdot)$ up to radius $R>0$ if there exists a function
    $g: \{\text{context vectors}\}\times \{ \text{batch histories $h_B$}\} \to \R$
such that $g(x,h_B)$ is identically distributed to $\Rew(x)$ conditional on the batch context matrix, for all $\theta$ and all context vectors $x\in \R^d$ with $\|x\|_2\leq R$.
\label{def:simulation-app}
\end{definition}

Note that we do not require the function $g$ to be efficiently computable. We do not require algorithms to compute $g$; a mere existence of such function suffices for our analysis.

The result in this subsection does not rely on the ``greedy" property. Instead, it applies to all ``batch-style" algorithms, defined as follows: time is divided in batches of $Y$ consecutive rounds each, and the action at each round $t$ only depends on the history up to the previous batch. The data diversity condition is formalized as $\{\lambda_{\min}(Z_B) \ge R^2 \}$; recall that it is a high-probability event, in a precise sense defined in Lemma~\ref{lem:min_ev_bg}. The result is stated as follows:

\begin{lemma}
Fix a batch-style algorithm and any batch $B$ in the execution of this algorithm.
Assume the batch covariance matrix $Z_B$ satisfies $\lambda_{\min}(Z_B) \ge R^2$. Then batch history $h_B$ can simulate $\Rew$ up to radius $R$.
  \label{lem:lin_sim}
\end{lemma}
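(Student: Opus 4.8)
The plan is to construct the simulation function $g(x, h_B)$ explicitly as a linear combination of the observed rewards plus independent Gaussian noise. Given a target context vector $x$ with $\|x\|_2 \le R$, the key observation is that since $\lambda_{\min}(Z_B) \ge R^2 > 0$, the matrix $Z_B$ is invertible, and the row space of the batch context matrix $X_B$ is all of $\R^d$. Hence there exists a weight vector $w = w(x, X_B) \in \R^Y$ such that $X_B\tran w = x$; a natural canonical choice is the minimum-norm solution $w = X_B Z_B^{-1} x$ (equivalently $w = X_B (X_B\tran X_B)^{-1} x$). Then the linear statistic $w\tran \mathbf{r}_B = \sum_{t \in B} w_t r_t$ has conditional mean $w\tran (X_B \theta) = (X_B\tran w)\tran \theta = x\tran\theta$ (using $r_t = x_t\tran\theta + \eta_t$ with independent standard Gaussian noise), and conditional variance $\|w\|_2^2$, given the batch context matrix $X_B$ and $\theta$. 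So $w\tran \mathbf{r}_B \sim \mathcal{N}(x\tran\theta, \|w\|_2^2)$ conditional on $X_B$.

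Next I would control the variance term $\|w\|_2^2$ to ensure we have \emph{room to add noise}, i.e. that $\|w\|_2^2 \le 1$ so that we can define
\[
  g(x, h_B) = w\tran \mathbf{r}_B + \xi, \qquad \xi \sim \mathcal{N}\!\left(0,\; 1 - \|w\|_2^2\right) \text{ independent},
\]
which is then distributed exactly as $\mathcal{N}(x\tran\theta, 1) = \Rew(x)$ conditional on $X_B$, for every $\theta$ and every $x$ with $\|x\|_2 \le R$. For the minimum-norm solution, $\|w\|_2^2 = w\tran w = x\tran Z_B^{-1} X_B\tran X_B Z_B^{-1} x = x\tran Z_B^{-1} x \le \|x\|_2^2 / \lambda_{\min}(Z_B) \le R^2 / R^2 = 1$. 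This is precisely where the data-diversity hypothesis $\lambda_{\min}(Z_B) \ge R^2$ is used, and it is exactly matched to the radius $R$ in the conclusion. It remains to note the conditioning is of the right form: $g$ depends on $h_B$ only, the randomness in $w$ comes entirely from $X_B$ (which is part of what we condition on in Definition~\ref{def:simulation-app}), and the only remaining randomness in $g(x,h_B)$ given $X_B$ is the reward noise $(\eta_t : t \in B)$ together with the fresh noise $\xi$ — both independent of $\theta$ — so the batch-style (non-anticipating) structure guarantees $g$ is a legitimate function of the batch history and the stated conditional distribution holds.

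The main subtlety — more a point requiring care than a genuine obstacle — is the conditioning structure: we must verify that conditioning on the batch context matrix $X_B$ does \emph{not} distort the reward noise, i.e. that $(\eta_t : t\in B)$ remains i.i.d. $\mathcal{N}(0,1)$ and independent of $\theta$ given $X_B$ and the history $h$ before the batch. This holds because in a batch-style algorithm the actions $a_t$ within batch $B$ depend only on pre-batch data and on the contexts realized in $B$, so the choice of which context becomes $x_t$ is measurable with respect to $X_B$ and the earlier randomness, leaving $\eta_t$ untouched; and the reward-noise model posits $\eta_{a,t}$ independent of everything else. One should also double-check the edge case where fewer than $d$ linearly independent contexts could arise — but $\lambda_{\min}(Z_B) \ge R^2 > 0$ rules this out, so $Z_B^{-1}$ exists and the construction goes through. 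No appeal to the greedy property is needed anywhere, consistent with the remark preceding the lemma.
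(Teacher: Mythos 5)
Your construction is exactly the paper's: the same weight vector $w = X_B Z_B^{-1}x$, the same added Gaussian noise of variance $1-\|w\|_2^2$, and the same use of $\lambda_{\min}(Z_B)\ge R^2$ to get $\|w\|_2^2 = x\tran Z_B^{-1}x \le 1$ (you bound this directly by $\|x\|_2^2/\lambda_{\min}(Z_B)$ where the paper goes through a short chain of PSD inequalities, but the substance is identical). The proposal is correct and matches the paper's proof, with your discussion of the conditioning structure being a welcome extra point of care.
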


\begin{proof}
Let us construct a suitable function $g$ for Definition~\ref{def:simulation-app}. Fix a context vector $x\in \R^d$ with $\|x\|_2\leq R$. Let $r_B$ be the vector of realized rewards in batch $B$, \ie
    $r_B = (r_t: \text{rounds $t$ in $B$})\in \R^Y$. Define
\begin{align}\label{eq:lem:lin_sim:defn-g}
     g(x, h_B) = w_B\tran\, r_B + \mc N\left(0,1-\|w_B\|_2^2\right),
     \text{where $w_B = X_B\, Z_B^{-1}\, x\in \R^Y$}.
\end{align}

Recall that the variance of the reward noise is $1$. (We can also handle a more general version in which the variance of the reward noise is $\sigma^2$. Then the noise variance in \eqref{eq:lem:lin_sim:defn-g} should be $\sigma^2\,(1-\|w_B\|_2^2)$, with essentially no modifications throughout the rest of the proof.)

Note that $w_B$ is well-defined: indeed, $Z_B$ is invertible since
  $\lambda_{\min}(Z_B) \ge R^2>0$.
In the rest of the proof we show that $g$ is as needed for Definition~\ref{def:simulation-app}.

  First, we will show that for any $x \in \R^d$ such that $\|x\|_2 \le R$, the
  weights $\weights \in \R^t$ as defined above satisfy $X_B\tran \weights = x$
  and $\|\weights\|_2 \le 1$.
  Then, we'll show that if each $r_\tau \sim \mc N(\theta \tran x_\tau, 1)$,
  then $\vrt\tran \weights + \mc N(0, 1 - \|\weights\|_2^2)
  \sim \mc N(\theta\tran x, 1)$.

  Trivially, we have
  \[
    X_B\tran \weights = X_B\tran X_B (X_B\tran X_B)^{-1} x = x
  \]
  as desired. We must now show that $\|\weights\|^2_2 \le 1$. Note that
  \[
    \|\weights\|_2^2 = \weights\tran \weights = \weights\tran X_B Z_B^{-1} x =
    x\tran Z_B^{-1} x = \|x\|_{Z_B^{-1}}^2
  \]
  where $\|v\|_M^2$ simply denotes $v\tran M v$. Thus, it is sufficient to show
  that $\|x\|_{Z_B^{-1}}^2 \le 1$. Since
  $\|x\|_2 \le R$ and $\lambda_{\min}\p{Z_B} \ge R^2$, we have by
  Lemma~\ref{lem:norm_eigen}
  \[
    Z_B \succeq R^2 I \succeq xx\tran.
  \]
  By Lemma~\ref{lem:conj_succ}, we have
  \[
    I \succeq Z_B^{-1/2} xx\tran Z_B^{-1/2}.
  \]
  Let $z = Z_B^{-1/2} x$, so $I \succeq zz\tran$. Again by
  Lemma~\ref{lem:norm_eigen}, $\lambda_{\max}(zz\tran) = z\tran z$. This means
  that
  \[
    1 \ge z\tran z = (Z_B^{-1/2}x)\tran Z_B^{-1/2} x = x\tran Z_B^{-1} x =
    \|x\|_{Z_B^{-1}}^2 = \|\weights\|_2^2
  \]
  as desired.
  Finally, observe that
  \[
    \vrt\tran \weights = (X_B\theta + \eta)\tran \weights = \theta\tran X_B\tran
    \weights + \eta \tran \weights = \theta\tran x + \eta\tran \weights
  \]
  where $\eta \sim \mc N(0, I)$ is the noise vector. Notice that
  $\eta\tran \weights \sim \mc N(0, \|\weights\|_2)$, and therefore,
  $\eta\tran \weights + \mc N(0, 1-\|\weights\|_2^2) \sim \mc N(0,
  1)$. Putting this all together, we have
  \[
    \vrt\tran \weights + \mc N(0, 1-\|\weights\|_2^2) \sim \mc
    N(\theta\tran x, 1)
  \]
  and therefore $D$ can simulate $E$ for any $x$ up to radius $R$.
\end{proof}

\subsection{Regret Bounds for \BayesGreedy}
\label{sec:bg-proofs-bg}

We apply the tools from Sections~\ref{app:pf_bg:diversity} and~\ref{app:pf_bg:simulation} to derive regret bounds for \bg. On a high level, we prove that the history collected by \bg suffices to simulate a ``slowed-down" run of any other algorithm $\ALG_0$. Therefore, when it comes to choosing the next action, \bg has at least as much information as $\ALG_0$, so its Bayesian-greedy choice cannot be worse than the choice made by $\ALG_0$.

Our analysis extends to a more general scenario which is useful for the analysis of \fg. We formulate and prove our results for this scenario directly. We consider an extended bandit model which separates data collection and reward collection. Each round $t$ proceeds as follows: the algorithm observes available actions and the context vectors for these actions, then it chooses \emph{two} actions, $a_t$ and $a'_t$, and observes the reward for the former but not the latter. We refer to $a'_t$ as the ``prediction" at round $t$. We will refer to an algorithm in this model as a bandit algorithm (which chooses actions $a_t$) with ``prediction rule" that chooses the predictions $a'_t$. More specifically, we will be interested in an arbitrary \GreedyStyle algorithm with prediction rule given by \bg, as per
\eqref{eq:BG-est-defn} on \pageref{eq:BG-est-defn}. We assume this prediction rule henceforth. We are interested in \emph{prediction regret}: a version of regret \eqref{eq:regret-def} if actions $a_t$ are replaced with predictions $a'_t$:
\begin{align}\label{eq:pred-regret-def}
\PReg(T) = \textstyle
    \sum_{t=1}^T \theta\tran x_t^* -
\theta\tran x_{a'_t, t}
\end{align}
 where $x^*_{t}$ is the context vector of the best action at round $t$, as in \eqref{eq:regret-def}.
More precisely, we are interested in \emph{Bayesian prediction regret}, the expectation of \eqref{eq:pred-regret-def} over everything: the context vectors, the rewards, the algorithm's random seed, and and the prior over $\theta$.

Thus, the main theorem of this subsection is formulated as follows:

\begin{theorem}
  Consider perturbed context generation. Let $\ALG$ be an arbitrary \GreedyStyle
  algorithm whose batch size is at least $Y_0$ from \eqref{eq:lem:min_ev_bg-Y}.
Fix any bandit algorithm $\ALG_0$, and let
    $\rReg_0(T)$
be regret of this algorithm on a particular problem
instance $\mc I$. Then on the same instance, $\ALG$ has Bayesian prediction regret
\begin{align}\label{eq:thm:bg}
  \E{\rPReg(T)} \leq Y \cdot \E{\rReg_0(T/Y)} + \tilde O(1/T).
\end{align}
\label{thm:bg}
\end{theorem}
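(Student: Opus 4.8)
The plan is to set up a coupling between the execution of $\ALG$ (with its \bg prediction rule) and a ``slowed-down'' copy of $\ALG_0$ that receives one fresh round of data only once per batch of $\ALG$. First I would fix a problem instance $\mc I$ and condition on the high-probability event $\mE^\star$ that (i) all perturbations are bounded by $\hat R$ and (ii) for every batch $B$ in the run of $\ALG$ we have $\lambda_{\min}(Z_B)\ge R^2$. By \eqref{eq:bg-proofs-highprob-R-hat}, Lemma~\ref{lem:min_ev_bg}, and a union bound over the at most $T/Y\le T$ batches, this event holds with probability at least $1-\tilde O(1/T)$; since per-round regret is trivially $O(R)$ deterministically (contexts have norm $\le R$ on $\mE^\star$, and $\|\theta\|$ concentrates), the contribution of the complement to $\E{\rPReg(T)}$ is $\tilde O(1/T)$, which is where the additive error term comes from. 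So it suffices to bound $\E{\rPReg(T)\mid \mE^\star}$ by $Y\cdot\E{\rReg_0(T/Y)}$.

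Next I would build the coupling. Partition the $T$ rounds of $\ALG$ into batches $B_1,B_2,\dots$ of size $Y$. We run a virtual copy of $\ALG_0$ over $T/Y$ rounds; virtual round $j$ of $\ALG_0$ is matched to batch $B_{j+1}$ of $\ALG$ (there is a one-batch offset because $\ALG$'s batch greedy rule uses only data through the end of the previous batch). The key move, using Lemma~\ref{lem:lin_sim} (applicable since $\lambda_{\min}(Z_{B_j})\ge R^2$ on $\mE^\star$), is that the batch history $h_{B_j}$ of $\ALG$ lets us synthesize, via the function $g$, a reward sample for \emph{any} context vector of norm $\le R$ that is distributed exactly as a true fresh reward $\Rew(x)$, conditionally on the batch context matrix and hence independently of $\theta$ given that matrix. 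Concretely: when the virtual $\ALG_0$ is about to take its $j$-th action in virtual round $j$, it sees the \emph{same} tuple of contexts that $\ALG$ sees in the first round of batch $B_{j+1}$ (contexts are drawn i.i.d. across rounds, so we couple them); $\ALG_0$ picks an action with some context $x^{(j)}$, and we feed it the simulated reward $g(x^{(j)}, h_{B_{j+1}})$. Because $g$'s output is a genuine $\mc N(\theta\tran x^{(j)},1)$ sample given the batch contexts, the virtual $\ALG_0$'s trajectory has exactly the distribution of a real run of $\ALG_0$ over $T/Y$ rounds, so $\E{\sum_{j=1}^{T/Y}\theta\tran x_j^* - \theta\tran x^{(j)}\mid \mE^\star}=\E{\rReg_0(T/Y)}$ (up to the same $\tilde O(1/T)$ slack from dropping $\mE^\star$ on $\ALG_0$'s side).

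Finally I would transfer this bound to $\ALG$'s prediction regret. The point is that $\ALG$'s \bg prediction rule in any round $t$ of batch $B_{j+1}$ conditions on the history $h_{t_0}$ through the end of batch $B_j$ — and this history contains (a superset of) the information the virtual $\ALG_0$ used to reach virtual round $j$, since that virtual run was a deterministic-plus-exogenous-randomness function of $h_{B_1},\dots,h_{B_j}$ and the shared context draws. By the tower property and the fact that $\bmt=\E{\theta\mid h_{t_0}}$ maximizes expected reward given everything measurable w.r.t.\ $h_{t_0}$, the expected reward of $\ALG$'s prediction $a'_t$ is at least the expected reward of $\ALG_0$'s action in the matched virtual round, conditionally. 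Summing over the $Y$ rounds of each batch $B_{j+1}$ and over $j$, and comparing against the common benchmark $\sum_t\theta\tran x_t^*$ (whose per-batch expectation equals $Y$ times the per-virtual-round expectation of $\theta\tran x^*_j$, again by i.i.d.\ contexts), gives $\E{\rPReg(T)\mid\mE^\star}\le Y\cdot\E{\rReg_0(T/Y)\mid\mE^\star}$, and undoing the conditioning yields \eqref{eq:thm:bg}.

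I expect the main obstacle to be making the coupling and the conditioning precise: one must be careful that the simulated rewards fed to the virtual $\ALG_0$ are independent \emph{of $\theta$} in the right conditional sense (this is exactly why Definition~\ref{def:simulation-app} demands the distributional identity to hold conditionally on the batch context matrix and for all $\theta$), and that the information ordering ``$\ALG$'s conditioning $\sigma$-field $\supseteq$ the virtual $\ALG_0$'s $\sigma$-field'' genuinely holds despite $\ALG_0$ being an arbitrary, possibly exploring, possibly randomized algorithm. Handling the one-batch offset cleanly — including the first batch $B_1$, during which $\ALG$ has no data and the matched virtual round does not yet exist — requires absorbing one batch's worth of regret, which is $O(YR)=\tilde O(1)$ and folds into the error term.
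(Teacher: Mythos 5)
Your overall strategy is the same as the paper's: condition on the two high-probability events (perturbations bounded by $\hat R$, and $\lambda_{\min}(Z_B)\ge R^2$ for every batch), use the simulation function $g$ of Lemma~\ref{lem:lin_sim} batch-by-batch to couple a slowed-down run of $\ALG_0$ with $\ALG$'s history, invoke the optimality of the Bayesian-greedy prediction given a (richer) history, use the i.i.d.\ context distribution both to extend the per-round comparison to all $Y$ rounds of a batch and to cancel the benchmark $\theta\tran x_t^*$, and charge the failure events to the additive error term. This is exactly the structure of the paper's proof of \eqref{eq:bg-cond} via Lemma~\ref{lm:bg-simulation}.

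The one step that does not work as written is your treatment of the first batch. You leave $B_1$ unmatched and propose to absorb ``one batch's worth of regret'' into the error term, calling it $\tilde O(1)$. But the additive error in \eqref{eq:thm:bg} is $\tilde O(1/T)$, and a discarded batch contributes expected prediction regret of order $Y\geq Y_0=\polylog(d,T)/\rho^2$, which cannot be hidden there; as written you would only prove the weaker bound $Y\cdot\E{\rReg_0(T/Y)}+\tilde O(Y)$. The fix is to not discard $B_1$: match batch $B_j$ of $\ALG$'s predictions to round $j$ of $\ALG_0$ for every $j\ge 1$, feeding round $j$ of $\ALG_0$ rewards simulated from batches $B_1,\dots,B_{j-1}$. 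For $j=1$ both sides act on no data, and the prior-only Bayesian-greedy prediction is still at least as good in expectation as $\ALG_0$'s first action, so the comparison covers every batch with no leftover; this is how the paper starts the per-batch argument (with $\tau=1+t_0/Y$ and $\Cel{T/Y}$ rounds of $\ALG_0$). A smaller point: on the complement of your event $\mE^\star$ the contexts need not have norm at most $R$, so the failure-event contribution should be bounded via Lemma~\ref{lem:exp_reg_ub_er} (with $\ell=\hat R$) and Lemma~\ref{lem:gaus_norm} rather than a deterministic $O(R)$ per-round bound.
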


\begin{proof}[Proof sketch] We use a $t$-round history of \ALG to simulate a $(t/Y)$-round history of $\ALG_0$. More specifically, we use each batch in the history of \ALG to simulate one round of $\ALG_0$. We prove that the simulated history of $\ALG_0$ has exactly the same distribution as the actual history, for any $\theta$. Since $\ALG$ predicts the Bayesian-optimal action  given the history (up to the previous batch), this action is at least as good (in expectation over the prior) as the one chosen by $\ALG_0$ after $t/Y$ rounds. 
\end{proof}

As a corollary, we obtain regret bounds for \bg in Theorems~\ref{thm:main-greedy} and~\ref{thm:main-worst-case}. We take $\ALG$ to be \bg. For Theorem~\ref{thm:main-worst-case}(b), we take $\ALG_0$ to be LinUCB. Thus:

\begin{corollary}
In the setting of Theorem~\ref{thm:bg}, \bg has Bayesian regret at most
    $Y \cdot \E{R_0(T/Y)} + \tilde O(1/T)$
on problem instance $\mc I$. Further, under the assumptions of Theorem~\ref{thm:main-worst-case}, \bg has Bayesian regret at most
    $\tilde O(d^2\,K^{2/3}\;T^{1/3}/\rho^2)$
on all instances.
\end{corollary}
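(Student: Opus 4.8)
I first dispatch the corollary, which is a short deduction from Theorem~\ref{thm:bg}, and then describe how I would prove that theorem, which carries all the content. For part~1, take $\ALG=\bg$ in Theorem~\ref{thm:bg}: the Bayesian-greedy action rule \eqref{eq:BG-est-defn} coincides with the prediction rule assumed there, so $a_t=a'_t$ and the prediction regret in \eqref{eq:thm:bg} \emph{is} the Bayesian regret of \bg; as \bg has batch size $Y\ge Y_0$ by hypothesis, \eqref{eq:thm:bg} gives the bound. For part~2, apply part~1 with $\ALG_0$ equal to LinUCB, whose Bayesian regret on horizon $n$ is $\tilde O(d^2K^{2/3}/\rho^2)\,n^{1/3}$ by Theorem~\ref{thm:main-worst-case}(a); then \bg has regret at most $Y\cdot\tilde O(d^2K^{2/3}/\rho^2)(T/Y)^{1/3}+\tilde O(1/T)=\tilde O(d^2K^{2/3}/\rho^2)\,Y^{2/3}T^{1/3}+\tilde O(1/T)$, and taking the smallest admissible batch size $Y=Y_0=\polylog(d,T)/\rho^2$ turns this into a bound of the form $\tilde O(\mathrm{poly}(d,K,1/\rho))\cdot T^{1/3}$, which is Theorem~\ref{thm:main-worst-case}(b).

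To prove Theorem~\ref{thm:bg} I would realize the informal ``simulation'' as an explicit coupling. Write $m=\lfloor T/Y\rfloor$, let $B_1,B_2,\dots$ be the batches of $\ALG$, and let $t_0(\ell)$ be the last round of $B_\ell$. On the same probability space as the run of $\ALG$, build a \emph{virtual run} of $\ALG_0$ over $m$ rounds: draw fresh context tuples $\hat x_{\cdot,1},\dots,\hat x_{\cdot,m}$ independently from the perturbed context distribution, and in virtual round $\ell$ let $\ALG_0$ pick $\hat a_\ell$ from its virtual history and set its reward to $\hat r_\ell=g(\hat x_{\hat a_\ell,\ell},\,h_{B_\ell})$, where $g$ is the simulator of Lemma~\ref{lem:lin_sim} applied to the batch history $h_{B_\ell}$ (with its own fresh Gaussian noise). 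The central claim is that, conditional on $\theta$, the virtual history $(\hat x_{\cdot,\ell},\hat a_\ell,\hat r_\ell)_{\ell\le m}$ has exactly the law of a genuine $m$-round run of $\ALG_0$ on the instance, and is a measurable function of $h_{t_0(m)}$ together with auxiliary randomness (the fresh contexts, the fresh noise inside $g$, and $\ALG_0$'s internal seed) that is independent of $\theta$.

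Establishing this claim is the step I expect to be the main obstacle. By Lemma~\ref{lem:lin_sim}, on the event $\{\lambda_{\min}(Z_{B_\ell})\ge R^2\}$ and for any $\hat x$ with $\|\hat x\|_2\le R$, the sample $g(\hat x,h_{B_\ell})$ has law $\mathcal N(\theta\tran\hat x,1)$ \emph{conditional on the batch context matrix} $X_{B_\ell}$. The delicate part is the independence bookkeeping: one checks that (i)~the batch-$B_\ell$ reward noise $\eta_{B_\ell}$, which is the only part of $h_{B_\ell}$ beyond $X_{B_\ell}$ that enters $g$, is independent, given $\theta$, of $X_{B_\ell}$ and of everything used to build the virtual history before round $\ell$, and (ii)~distinct batches carry independent reward noise given $\theta$ while the extra noise inside $g$ is fresh; together these make each $\hat r_\ell$, conditionally on the chosen virtual context, a fresh $\mathcal N(\theta\tran\hat x_{\hat a_\ell,\ell},1)$ sample that is conditionally independent of the virtual past and of the auxiliary randomness. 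That is exactly the data-generating recipe of a real run of $\ALG_0$, whence the distributional identity; the ``function of $h_{t_0(m)}$ plus $\theta$-independent randomness'' statement is immediate from the construction.

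It then remains to turn the coupling into the regret bound. Fix a round $t$ in batch $B_{i+1}$, so that $\ALG$'s prediction $a'_t=\argmax_a x_{a,t}\tran\E{\theta\mid h_{t_0(i)}}$ uses only $h_{t_0(i)}$. The contexts arriving at round $t$ are a fresh draw, independent of $(\theta,h_{t_0(i)})$; hand that same tuple to the virtual $\ALG_0$ as its round-$(i+1)$ contexts and let $\hat a_{i+1}$ be its choice. Then $\hat a_{i+1}$ is measurable with respect to $h_{t_0(i)}$, the auxiliary randomness, and the round-$t$ contexts, all of which are independent of $\theta$ given $h_{t_0(i)}$ and the round-$t$ contexts; since $a'_t$ maximizes $x_{a,t}\tran\E{\theta\mid h_{t_0(i)}}$, the tower rule gives $\E{\theta\tran x_{a'_t,t}}\ge\E{\theta\tran x_{\hat a_{i+1},t}}$. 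By the distributional identity, $\E{\theta\tran x^*_t-\theta\tran x_{\hat a_{i+1},t}}$ equals the incremental Bayesian regret $\E{\rReg_0(i+1)}-\E{\rReg_0(i)}$ of $\ALG_0$ ($=\E{\rReg_0(1)}$ when $i=0$), and $\E{\theta\tran x^*_t}$ is the same fixed quantity for every round, so $\E{\theta\tran x^*_t-\theta\tran x_{a'_t,t}}\le\E{\rReg_0(i+1)}-\E{\rReg_0(i)}$. Summing over the $Y$ rounds of each batch and telescoping over batches gives $\E{\rPReg(T)}\le Y\cdot\E{\rReg_0(m)}\le Y\cdot\E{\rReg_0(T/Y)}$. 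The simulation is licensed only on the event $\mathcal G$ that every context has norm $\le R$ and every batch has $\lambda_{\min}(Z_{B_\ell})\ge R^2$; since $Y\ge Y_0$, Lemma~\ref{lem:min_ev_bg} (with $\delta$ a small power of $1/T$), \eqref{eq:bg-proofs-highprob-R}, and a union bound over batches give $\Pr[\mathcal G^c]=\tilde O(T^{-2})$, and on $\mathcal G^c$ the prediction regret is bounded by $\mathrm{poly}(d,T)\cdot\|\theta\|_2$, whose expectation over the Gaussian prior is polynomially bounded, so its total contribution is $\tilde O(1/T)$. Adding the two cases gives \eqref{eq:thm:bg}. (An incomplete last batch only adds one further telescoping term, which is absorbed.)
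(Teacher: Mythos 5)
Your proposal is correct and follows essentially the same route as the paper: the corollary is obtained exactly as you say, by instantiating $\ALG=\bg$ in Theorem~\ref{thm:bg} (so that prediction and action coincide) and taking $\ALG_0$ to be LinUCB for the worst-case bound, and your sketch of Theorem~\ref{thm:bg} itself — batchwise reward simulation via Lemma~\ref{lem:lin_sim}, the coupling with a slowed-down run of $\ALG_0$, the Bayesian-optimality of the greedy choice given the simulated history, and the separate treatment of the failure event — mirrors the paper's Section~\ref{sec:thm-bg-pf}. The only minor point worth noting is that to land on the stated $\rho^{-2}$ dependence after multiplying by $Y_0^{2/3}=\polylog\cdot\rho^{-4/3}$ one should use the sharper $\rho^{-1/3}$ form of the LinUCB bound from Theorem~\ref{thm:LunUCB-main} rather than the looser $\rho^{-2}$ form, but this is bookkeeping the paper itself elides.
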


We also obtain a similar regret bound on the Bayesian prediction regret of \fg, which is essential for Section~\ref{sec:bg-proofs-fg}.

\begin{corollary}\label{cor:thm-bg-fg}
In the setting of Theorem~\ref{thm:bg}, \fg has Bayesian prediction regret \eqref{eq:thm:bg}.
\end{corollary}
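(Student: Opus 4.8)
The plan is to observe that \FreqGreedy is itself a \GreedyStyle algorithm in the precise sense used in Theorem~\ref{thm:bg}, and then to invoke that theorem verbatim. Concretely, I would apply Theorem~\ref{thm:bg} with $\ALG := \fg$, equipped with the Bayesian-greedy prediction rule \eqref{eq:BG-est-defn} that is assumed throughout this subsection, and with $\ALG_0$ the arbitrary bandit algorithm fixed in the statement. Then \eqref{eq:thm:bg} reads $\E{\rPReg(T)} \le Y\cdot\E{\rReg_0(T/Y)} + \tilde O(1/T)$, which is exactly the asserted bound on the Bayesian prediction regret of \fg.

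The only things that need checking are that \fg fits the \GreedyStyle template and that no step in the proof of Theorem~\ref{thm:bg} relied on a property of the \emph{action-selection} rule of \bg. For the first point: by \eqref{eq:FG-est-defn}, \fg divides the timeline into batches of $Y$ consecutive rounds, in round $t$ it forms the least-squares estimate $\fmt$ using only the history $h_{t_0}$ through the last round of the previous batch, and it plays $a_t=\argmax_a \fmt\tran x_{a,t}$; this is precisely the definition of a \GreedyStyle algorithm, with estimate $\theta_t=\fmt$. (If $Z_{t_0}$ happens to be singular, \fg may break ties arbitrarily; this is still allowed by the template and is harmless, since Lemma~\ref{lem:min_ev_bg} conditions on the history before each batch and makes no use of how past actions were chosen.) Moreover, ``the setting of Theorem~\ref{thm:bg}'' includes the hypothesis that the batch size of $\ALG$ is at least $Y_0$ from \eqref{eq:lem:min_ev_bg-Y}, so this requirement is in force for $\fg$.

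For the second point, I would recall that the proof of Theorem~\ref{thm:bg} uses only: (i) the data-diversity and reward-simulation machinery of Sections~\ref{app:pf_bg:diversity}--\ref{app:pf_bg:simulation}, which applies to \emph{any} batch-style algorithm regardless of how it selects $a_t$ (Lemma~\ref{lem:min_ev_bg} together with Lemma~\ref{lem:lin_sim}), so the batch histories produced by \fg are just as good for simulating a slowed-down run of $\ALG_0$; and (ii) the Bayesian-optimality of the \emph{prediction} rule given the simulated history, which is a property of the \bg selection rule used only for the predictions $a'_t$ and which holds here by assumption. Hence the coupling argument behind Theorem~\ref{thm:bg}---use each batch of $\ALG=\fg$ to simulate one round of $\ALG_0$, note the simulated history has the same law as a genuine $(T/Y)$-round history of $\ALG_0$, and conclude that the Bayesian-greedy prediction after that history is at least as good in expectation as $\ALG_0$'s choice---goes through unchanged, and the corollary follows.

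There is essentially no obstacle here beyond this bookkeeping: all the content is carried by Theorem~\ref{thm:bg}, and the purpose of isolating this corollary is to prepare Section~\ref{sec:bg-proofs-fg}, where the gap between the Bayesian prediction regret of \fg (bounded here) and its actual Bayesian regret is controlled by comparing the two estimates $\fmt$ and $\bmt$.
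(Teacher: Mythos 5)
Your proposal is correct and matches the paper's intended argument exactly: Theorem~\ref{thm:bg} is already stated for an arbitrary \GreedyStyle algorithm equipped with the Bayesian-greedy prediction rule, and \fg fits that template, so the corollary is an immediate instantiation. The additional checks you perform (that the simulation machinery in Lemmas~\ref{lem:min_ev_bg} and~\ref{lem:lin_sim} is agnostic to the action-selection rule, and that only the prediction rule needs to be Bayesian-greedy) are precisely the reasons the paper states the theorem at that level of generality.
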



In the remainder of this subsection, we prove Theorem~\ref{thm:bg}.

Throughout the proof, we condition on the event that all perturbations are bounded by $\hat{R}$, more precisely, on the event
\begin{align}\label{eq:thm:bg-pf-E1}
\mE_1 = \left\{ \|\eps_{a,t}\|_\infty \leq \hat R:\;
    \text{ for all arms $a$ and all rounds $t$ } \right\}.
\end{align}
Recall that $\mE_1$ is a high-probability event, by \eqref{eq:bg-proofs-highprob-R-hat}.
We also condition on the event
\[
  \mE_2 = \left\{ \lambda_{\min}(Z_B) \ge R^2
  : \; \text{for each batch $B$},\right\}
\]
where $Z_B$ is the batch covariance matrix, as usual. Conditioned on $\mE_1$, this too is a high-probability event (this follows by Lemma~\ref{lem:min_ev_bg}, plugging in $\delta/T$ and taking a union bound over
all batches).

We will prove that \ALG satisfies
\begin{align}\label{eq:bg-cond}
\E{\rPReg(T) \given \mE_1, \mE_2}
    \leq Y \cdot \E{\rReg_0(\Cel{T/Y}) \given \mE_1, \mE_2},
\end{align}
where the expectation is taken over everything: the context vectors, the rewards, the algorithm's random seed, and the prior over $\theta$. Then we take care of the ``failure event"
    $\overline{\mE_1 \cap \mE_2}$.

Before we prove \eqref{eq:bg-cond}, let us argue about using the history of $\ALG$ to simulate a (shorter) run of $\ALG_0$. Fix round $t$. We use a $t$-round history of $\ALG$ to simulate a $\ty$-round run of $\ALG_0$, where $Y$ is the batch size in $\ALG$. Stating this formally requires some notation.  Let $A_t$ be the set of actions available in round $t$, and let
    $\con_t = (x_{a,t}:\, a\in A_t)$
be the corresponding tuple of contexts. Let $\CON$ be the set of all possible context tuples, more precisely, the set of all finite subsets of $\R^d$. Let $h_t$ and $h^0_t$ denote, resp., the $t$-round history of $\ALG$ and $\ALG_0$. Let $\mH_t$  denote the set of all possible $t$-round histories. Note that $h_t$ and $h^0_t$ are random variables which take values on $\mH_t$.  We want to use history $h_t$ to simulate history $h^0_\ty$. Thus, the simulation result is stated as follows:

\begin{lemma}\label{lm:bg-simulation}
Fix round $t$ and let $\sigma = (\con_1 \LDOTS \con_\ty)$ be the sequence of context arrivals up to and including round $\ty$. Then there exists a ``simulation function"
    \[ \simF = \simF_t: \mH_t\times \CON_{\ty} \to \mH_{\ty} \]
such that the simulated history $\simF(h_t,\sigma)$ is distributed identically to $h^0_{\ty}$, conditional on sequence $\sigma$, latent vector $\theta$, and events $\mE_1,\mE_2$.
\end{lemma}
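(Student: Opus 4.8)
\emph{Plan and setup.} The plan is to build $\simF$ by running $\ALG_0$ \emph{inside} the recorded history of $\ALG$: each complete batch of $\ALG$ supplies one $(\text{context},\text{reward})$ pair for $\ALG_0$, and I then verify by induction on rounds that the manufactured history has the correct sequential law. Let $B_1\LDOTS B_{\ty}$ be the first $\ty$ complete batches of $\ALG$; they occupy rounds $1\LDOTS \ty\cdot Y\le t$, so the $t$-round history $h_t$ determines the batch histories $h_{B_1}\LDOTS h_{B_{\ty}}$ and the batch covariance matrices $Z_{B_1}\LDOTS Z_{B_{\ty}}$. On $\mE_2$ we have $\lambda_{\min}(Z_{B_i})\ge R^2>0$ for all $i$, so each $Z_{B_i}$ is invertible and Lemma~\ref{lem:lin_sim} applies to batch $B_i$; on $\mE_1$ every context that ever arrives satisfies $\|x_{a,t}\|_2\le 1+\hat R\sqrt{d}=R$ (from $\|\mu_{a,t}\|_2\le 1$ and $\|\eps_{a,t}\|_2\le\sqrt{d}\,\|\eps_{a,t}\|_\infty$), so in particular every context appearing in $\sigma$ lies within the simulation radius $R$.

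\emph{Construction.} Define $\simF(h_t,\sigma)$ recursively, starting from the empty history $\hat h^0_0$. For $i=1\LDOTS \ty$: present the $i$-th context tuple $\con_i$ of $\sigma$ to $\ALG_0$ run on $\hat h^0_{i-1}$, obtaining an action $\hat a^0_i$ with context $\hat x^0_i\in\con_i$; set $\hat r^0_i=g(\hat x^0_i,h_{B_i})$, where $g$ is the function from \eqref{eq:lem:lin_sim:defn-g} associated to batch $B_i$ (this draws one fresh Gaussian $\xi_i$), and append $(\hat x^0_i,\hat r^0_i)$ to obtain $\hat h^0_i$. Output $\hat h^0_{\ty}$. (Formally $\simF$ also consumes the auxiliary noises $\xi_1\LDOTS \xi_{\ty}$; this is harmless, or one may feed $\simF$ an extra random seed.)

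\emph{Correctness.} The law of $h^0_{\ty}$ given $\sigma$ and $\theta$ is the sequential one: for $i=1\LDOTS \ty$, $\ALG_0$ picks an action from $\con_i$ as a function of its seed and the partial history $h^0_{i-1}$, and then $r^0_i$ is an independent draw from $\mc N(\theta\tran x^0_i,1)$; conditioning further on $\mE_1,\mE_2$ does not change this, since those events concern only the context-generation randomness and the run of $\ALG$. I claim the simulated process obeys the same sequential law, proved by induction on $i$ (the case $i=0$ is trivial). It is convenient to condition in addition on all the context-generation randomness and on the seeds of $\ALG$ and $\ALG_0$, restricting to the sub-event $\mE_1\cap\mE_2$, and to average this out at the end. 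Because $\ALG$ is \GreedyStyle with batch size $\ge Y_0$ (from \eqref{eq:lem:min_ev_bg-Y}), the contexts it selects during $B_i$ are a function of $h_{B_1}\LDOTS h_{B_{i-1}}$ and the fixed data only; unrolling, $X_{B_i}$ is a function of the reward noises of batches $B_1\LDOTS B_{i-1}$, whereas conditional on $X_{B_i}$ and $\theta$ the rewards $r_{B_i}$ are fresh per-round $\mc N(\theta\tran x_\tau,1)$ draws, independent of $h_{B_1}\LDOTS h_{B_{i-1}}$ and of $\xi_1\LDOTS \xi_{i-1}$. By the inductive hypothesis, running the same $\ALG_0$ on $\con_i$ produces $(\hat h^0_{i-1},\hat x^0_i)$ with the same law as $(h^0_{i-1},x^0_i)$; and since $\hat x^0_i$ is a function of $h_{B_1}\LDOTS h_{B_{i-1}}$, $\xi_1\LDOTS \xi_{i-1}$ and the fixed data, it is, conditional on $X_{B_i}$ and $\theta$, independent of $r_{B_i}$. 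Hence by Lemma~\ref{lem:lin_sim} applied with query vector equal to the realized value of $\hat x^0_i$ (legitimate since $\|\hat x^0_i\|_2\le R$ and $\lambda_{\min}(Z_{B_i})\ge R^2$), $\hat r^0_i=g(\hat x^0_i,h_{B_i})$ is, conditionally on $\hat x^0_i$ and $\theta$, an independent draw from $\mc N(\theta\tran\hat x^0_i,1)$, independent of $\hat h^0_{i-1}$. This reproduces the sequential law; taking $i=\ty$ and averaging out the extra conditioning yields the lemma.

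\textbf{Expected main obstacle.} The crux is the independence bookkeeping in the inductive step: one must justify that the fresh reward noise entering $r_{B_i}$ (hence $\hat r^0_i$) is independent of the simulated query context $\hat x^0_i$ and of the simulated partial history $\hat h^0_{i-1}$. This is exactly where batching is essential --- the contexts of batch $B_i$ chosen by a \GreedyStyle algorithm are measurable with respect to $h_{B_1}\LDOTS h_{B_{i-1}}$, so batch $B_i$'s rewards are not ``used up'' before the simulation query is formed --- combined with Lemma~\ref{lem:lin_sim} (which needs $\mE_2$ for invertibility of $Z_{B_i}$, so that $g$ is well defined) and the uniform context-norm bound from $\mE_1$. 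Everything else is routine.
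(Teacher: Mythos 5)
Your construction is exactly the paper's: each batch $B_i$ of $\ALG$ supplies one simulated reward for round $i$ of $\ALG_0$ via the function $g$ from Lemma~\ref{lem:lin_sim}, with $\mE_2$ guaranteeing invertibility of $Z_{B_i}$ and $\mE_1$ keeping every query context within the simulation radius $R$. The paper dismisses the distributional claim as ``immediate from the construction,'' whereas you spell out the inductive independence bookkeeping (batch $B_i$'s reward noise is fresh relative to the simulated query $\hat x^0_i$, precisely because a \GreedyStyle algorithm fixes $X_{B_i}$ from earlier batches only); this is a faithful and somewhat more careful rendering of the same argument.
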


\begin{proof}
Throughout this proof, condition on events $\mE_1$ and $\mE_2$. Generically,
$\simF(h_t,\sigma)$ outputs a sequence of pairs
    $\{(x_\tau, r_\tau)\}_{\tau=1}^{\lfloor t/Y \rfloor}$,
where $x_\tau$ is a context vector and $r_\tau$ is a simulated reward for this
context vector. We define $\simF(h_t,\sigma)$ by induction on $\tau$ with base case $\tau=0$. Throughout, we maintain a run of algorithm $\ALG_0$. For each step $\tau\geq 1$, suppose $\ALG_0$ is simulated up to round $\tau-1$, and the corresponding history is recorded as
    $((x_1,r_1) \LDOTS (x_{\tau-1},r_{\tau-1}))$.
Simulate the next round in the execution of $\ALG_0$ by presenting it with the action set $A_\tau$ and the corresponding context tuple $\con_\tau$. Let $x_\tau$ be the context vector chosen by $\ALG_0$. The corresponding reward $r_\tau$ is constructed using the $\tau$-th batch in $h_t$, denote it with $B$. By Lemmas~\ref{lem:min_ev_bg} and~\ref{lem:lin_sim}, the batch history
$h_B$ can simulate a single reward, in the sense of
Definition~\ref{def:simulation-app}. In particular, there exists a function
$g(x,h_B)$ with the required properties (recall that it is explicitly defined in
\eqref{eq:lem:lin_sim:defn-g}). Thus, we define $r_\tau = g(x_\tau,h_B)$, and
return $r_\tau$ as a reward to $\ALG_0$. This completes the construction of
$\simF(h_t,\sigma)$. The distribution property of $\simF(h_t,\sigma)$ is immediate from the construction.
\end{proof}

\begin{proofof}[of Equation~\eqref{eq:bg-cond}]
We argue for each batch separately, and then aggregate over all batches in the very end. Fix batch $B$, and let $t_0 = t_0(B)$ be the last round in this batch. Let $\tau = 1+t_0/Y$, and consider the context vector $x^0_\tau$ chosen by $\ALG_0$ in round $\tau$. This context vector is a randomized function $f$ of the current context tuple $\con_\tau$ and the history $h^0_{\tau-1}$:
    \[ x^0_\tau = f(\con_\tau; h^0_{\tau-1}).\]
By Lemma~\ref{lm:bg-simulation}, letting
     $\sigma = (\con_1 \LDOTS \con_\ty)$,
it holds that
\begin{align}\label{eq:bg-proof-tau}
 \E{ x^0_\tau \cdot \theta \given \sigma,\theta,\mE_1,\mE_2}
    = \E{ f(\con_\tau;\,\simF(h_{t_0},\sigma)) \cdot \theta \given \sigma,\theta,\mE_1,\mE_2}
\end{align}

Let $t$ be some round in the next batch after $B$, and let
    $x'_t = x_{a'_t,t}$,
be the context vector predicted by $\ALG$ in round $t$. Recall that $x'_t$ is a Bayesian-greedy choice from the context tuple $\con_t$, based on history $h_{t_0}$.
Observe that the Bayesian-greedy action choice from a given context tuple based on history $h_{t_0}$ cannot be worse, in terms of the Bayesian-expected reward, than any other choice from the same context tuple and based on the same history. Using \eqref{eq:bg-proof-tau}, we obtain:
\begin{align}\label{eq:bg-proof-MII-cond}
 \E{ x'_t \cdot \theta \given \con_t = \con,\mE_1,\mE_2  }
    \geq \E{ x^0_\tau \cdot \theta  \given \con_\tau = \con,\mE_1,\mE_2},
 \end{align}
for any given context tuple $\con\in\CON$ that has a non-zero arrival probability given $\mE_1 \cap\mE_2$.

Observe that $\con_t$ and $\con_\tau$ have the same distribution, even conditioned on event $\mE_1 \cap\mE_2$. (This is because the definitions of $\mE_1$ and $\mE_2$ treat all rounds in the same batch in exactly the same way.)
Therefore, we can integrate \eqref{eq:bg-proof-MII-cond} over the context tuples $\con$:
\begin{align}\label{eq:bg-proof-MII}
 \E{ x'_t \cdot \theta \given \mE_1,\mE_2  }
    \geq \E{ x^0_\tau \cdot \theta  \given \mE_1,\mE_2},
 \end{align}
Now, let us sum up \eqref{eq:bg-proof-MII} over all rounds $t$ in the next batch after $B$, denote it $\term{next}(B)$.
\begin{align}\label{eq:bg-proof-MII-B}
 \sum_{t\in \term{next}(B)} \E{ x'_t \cdot \theta \given \mE_1,\mE_2  }
    \geq Y\cdot \E{ x^0_\tau \cdot \theta  \given \mE_1,\mE_2}.
 \end{align}
Note that the right-hand side of \eqref{eq:bg-proof-MII} stays the same for all $t$, hence the factor of $Y$ on the right-hand side of \eqref{eq:bg-proof-MII-B}. This completes our analysis of a single batch $B$.

We obtain~\eqref{eq:bg-cond} by integrating over all batches $B$. Here it is essential that the expectation
    $\E{\theta\tran x_t^*}$
does not depend on round $t$, and therefore the ``regret benchmark" $\theta\tran x_t^*$ cancels out from~\eqref{eq:bg-cond}. In particular, it is essential that the context tuples $\con_t$ are identically distributed across rounds.
\end{proofof}

\begin{proofof}[of Theorem~\ref{thm:bg} given Equation~\eqref{eq:bg-cond}]
We must take care of the low-probability failure events $\overline{\mE}_1$ and
$\overline{\mE}_2$.
Specifically, we need to upper-bound the expression
  \[
    \Exp_{\theta \sim P} \b{\bpreg{T} \given \overline{\mc E}_1 \cup \overline{\mc
    E}_2} \cdot \Pr[\overline{\mc E}_1 \cup \overline{\mc E}_2].
  \]
We know that $\Pr[\overline{\mc E}_1 \cup \overline{\mc E}_2] \le \delta +
  \delta_R$.
  Lemma~\ref{lem:exp_reg_ub_er} with $\ell = \hat R$
  gives us that the instantaneous regret of every round is at most
  \begin{align*}
    2\Exp_{\theta \sim (\prior \given h_{t-1})} & \b{\|\theta\|_2\p{1 + \rho(2 +
      \sqrt{2 \log K}) + \hat R}} \\
    &\le 2\b{\p{\|\pmt\|_2 + \sqrt{d\lambda_{\max}(\pvt)}}\p{1 + \rho(2 +
      \sqrt{2 \log K}) + \hat R}}
  \end{align*}
  by Lemma~\ref{lem:gaus_norm}. Letting $\delta = \delta_R = \frac{1}{T^2}$, we
  verify that our definition of $Y$ means that Lemma~\ref{lem:min_ev_bg} indeed
  holds with probability at least $1-T^{-2}$.
  Using~\eqref{eq:bg-cond}, the Bayesian prediction regret of $\ALG$ is
  \begin{align*}
    \Exp_{\theta \sim \prior} &\b{\bpreg{T}} \\
    &\le Y \Exp_{\theta \sim \prior} \b{\basereg{\tfrac{T}{Y}}} \\
     & \qquad+ 2\,T(\delta + \delta_R)\b{\p{\|\pmt\|_2 + \sqrt{d\lambda_{\max}(\pvt)}}\p{1
      + \rho(2 + \sqrt{2 \log K}) + \hat R}} \\
    &\le Y \Exp_{\theta \sim \prior} \b{\basereg{\tfrac{T}{Y}}} + \tilde
    O\p{\tfrac{1}{T}}.
  \end{align*}
This completes the proof of Theorem~\ref{thm:bg}.
\end{proofof}

\subsection{Regret Bounds for \FreqGreedy}
\label{sec:bg-proofs-fg}

To analyze \fg, we show that its Bayesian regret is not too different from its Bayesian prediction regret, and use Corollary~\ref{cor:thm-bg-fg} to bound the latter.

\begin{theorem}
Consder perturbed context generation. Suppose prior $\prior$ is a multivariate Gaussian distribution with invertible covariance matrix $\pvt$, and the eigenvalues of $\pvt$ are at least $\rho^{-4}/T$. Then
\fg satisfies
  \[  \left|\; \E{\rReg(T) - \rPReg(T)} \; \right| \leq
    \tilde O\p{\frac{\sqrt{d}}{\rho^2}} \p{\sqrt{\lambda_{\max}(\pvt)} +
    \frac{1}{\sqrt{\lambda_{\min}(\pvt)}}},
  \]
  where $\pvt$ is the covariance matrix of the prior and $\rho$ is the perturbation size.
  \label{thm:bg_fg}
\end{theorem}

Using Corollary~\ref{cor:thm-bg-fg}, we obtain regret bounds for \fg in Theorem~\ref{thm:main-greedy} and Theorem~\ref{thm:main-worst-case}.

The remainder of this section is dedicated to proving Theorem~\ref{thm:bg_fg}. On a high level, the idea is as follows. As in the proof of Theorem~\ref{thm:bg}, we condition on the high-probability event \eqref{eq:thm:bg-pf-E1} that perturbations are bounded. We prove that
\begin{align}\label{eq:thm:bg_fg-cond}
  \left|\; \E{\rReg(T) - \rPReg(T) \given \mE_1} \; \right| \leq
    \tilde O\p{\frac{\sqrt{d}}{\rho^2}} \p{\sqrt{\lambda_{\max}(\pvt)} +
    \frac{1}{\sqrt{\lambda_{\min}(\pvt)}}}.
\end{align}
To prove this statement, we fix round $t$ and compare the action $a_t$ taken by \fg and the predicted action $a'_t$. We observe that the difference in rewards between these two actions can be upper-bounded in terms of $\bmt-\fmt$,
the difference in the $\theta$ estimates with and without knowledge of the prior. (Recall \eqref{eq:BG-est-defn} and \eqref{eq:FG-est-defn} for definitions.)
Specifically, we show that
\begin{equation}
\label{eq:inst_bound_diff}
  \E{
    \theta\tran (x_{a_t, t} - x_{a'_t, t}) \given \mE_1 }
    \le 2R\Exp_{\theta \sim \prior}\b{\|\bmt - \fmt\|_2}.
\end{equation}
The crux of the proof is to show that the difference $\|\bmt - \fmt\|_2$ is small, namely
\begin{equation}
  \label{eq:norm_bound_1_t}
  \E{\|\bmt-\fmt\|_2 \given \mE_1} = \tilde O(1/t),
\end{equation}
ignoring other parameters. Interestingly, the two estimates are much closer to each other than they are to $\theta$, as either estimate is typically $\Omega(1/\sqrt{t})$ away from $\theta$.

Thus, summing over all rounds, we get
\[ \E{\rReg(T) - \rPReg(T) \given
\mE_1} \le O(\log T) = \tilde O(1). \]


Once we prove that \eqref{eq:thm:bg_fg-cond} holds under event \eqref{eq:thm:bg-pf-E1}, the proof of Theorem~\ref{thm:bg_fg} is easily completed as follows. Recall that event \eqref{eq:thm:bg-pf-E1} happens with probability at least $1-\delta_R$. When this event fails to hold, the total regret is at most
  \begin{align*}
    2\b{\p{\|\pmt\|_2 + \sqrt{d\lambda_{\max}(\pvt)}}\p{1 + \rho(2 +
      \sqrt{2 \log K}) + \hat R}}
  \end{align*}
  by Lemma~\ref{lem:exp_reg_ub_er} (with $\ell = \hat R$)
  and Lemma~\ref{lem:gaus_norm}. Since $\delta_R = T^{-2}$, the contribution of
  regret when the high-probability bound fails is $\tilde O(1/T) \le \tilde
  O(1)$.

\subsubsection{Proof of Eq.~\eqref{eq:thm:bg_fg-cond}}

Let $\regi{t}$ and $\bpregi{t}$ be, resp., instantaneous regret and instantaneous prediction regret at time $t$. Then
  \begin{equation}
    \Exp_{\theta \sim \prior}\b{\rReg(T) - \rPReg(T)}
    = \sum_t \Exp_{\theta \sim
    \prior} \b{\regi{t} - \bpregi{t}}.
    \label{eq:reg_time}
  \end{equation}
  Thus, it suffices to bound the differences in instantaneous regret.

  Recall that at time $t$, the chosen action for \fg\ and the predicted action are, resp.,
  \begin{align*}
    \af &= \argmax_{a \in A} x_{a,t}\tran \fmt \\
    \ab &= \argmax_{a \in A} x_{a,t}\tran \bmt.
  \end{align*}
Letting $t_0 - 1 = \lfloor t/Y \rfloor$ be the last round in the previous batch,
we can formulate $\fmt$ and $\bmt$ as
\begin{align*}
    \fmt &= (\Zto)^{-1} \Xto\tran \vrto \\
    \bmt &= (\Zto + \pvt^{-1})^{-1} (\Xto\tran \vrto + \pvt^{-1} \pmt).
\end{align*}

  Therefore, we have
  \[
    \Exp_{\theta \sim \prior \given h_{t-1}}\b{\regi{t} - \bpregi{t}} = \Exp_{\theta \sim
      \prior \given h_{t-1}} \b{(x_{\ab,t} - x_{\af,t})\tran \bmt} =
      (x_{\ab,t} - x_{\af,t})\tran \bmt,
  \]
  since the mean of the posterior distribution is exactly $\bmt$, and $\bmt$ is
  deterministic given $h_{t-1}$. Taking expectation over $h_{t-1}$, we have
  \[
    \Exp_{\theta \sim \prior}\b{\regi{t} - \bpregi{t}} = \Exp_{\theta \sim \prior}
    \b{(x_{\ab,t} - x_{\af,t})\tran \bmt}.
  \]
  For any fixed $\bmt$ and $\fmt$, since \fg\ chose $\af$ over $\ab$, it must be
  the case that
  \begin{equation}
    x_{\af,t}\tran \fmt \ge x_{\ab,t}\tran \fmt.
    \label{eq:freq_choice}
  \end{equation}
  Therefore,
  \begin{align*}
    (x_{\ab,t} - x_{\af,t})\tran \bmt &= (x_{\ab,t} - x_{\af,t})\tran \fmt +
    (x_{\ab,t} - x_{\af,t})\tran (\bmt - \fmt) \\
    &\le (x_{\ab,t} - x_{\af,t})\tran (\bmt - \fmt)
    \tag{By~\eqref{eq:freq_choice}} \\
    &\le (\|x_{\ab,t}\|_2 + \|x_{\af,t}\|_2)\|\bmt - \fmt\|_2 \\
    &\le 2R\|\bmt - \fmt\|_2
  \end{align*}
Eq.~\eqref{eq:inst_bound_diff} follows.

The crux is to prove \eqref{eq:norm_bound_1_t}: to bound the expected distance between the Frequentist and Bayesian estimates for $\theta$. By expanding
  their definitions, and denoting $M = (\Zto + \pvt^{-1})^{-1}$ for succinctness,
    we have
  \begin{align*}
    \bmt - \fmt
    &= M (\Xto\tran \vrto + \pvt^{-1}
    \pmt) - \Zto^{-1} \Xto\tran \vrto \\
    &= M \b{\Xto\tran \vrto + \pvt^{-1} \pmt -
    (\Zto + \pvt^{-1})\Zto^{-1} \Xto\tran \vrto} \\
    &= M \b{\Xto\tran \vrto + \pvt^{-1} \pmt -
    \Xto\tran \vrto  - \pvt^{-1}\Zto^{-1} \Xto\tran \vrto} \\
    &= M \b{\pvt^{-1} \pmt -
    \pvt^{-1}\Zto^{-1} \Xto\tran \vrto} \\
    &= M \pvt^{-1} \p{\pmt - \fmt}.
  \end{align*}
  Next, note that
  \begin{align*}
    \|M \pvt^{-1} (\pmt - \fmt)\|_2
    &\le \|M\|_2 ~ \|\pvt^{-1} (\pmt - \fmt)\|_2 \\
    &\le \|(\Zto + \pvt)^{-1}\|_2 ~ \p{\|\pvt^{-1}(\pmt - \theta)\|_2
    + \|\pvt^{-1}\|_2 ~ \|\theta - \fmt\|_2}.
  \end{align*}
  By Lemma~\ref{lem:min_ev_sum}, $\lambda_{\min}\p{\Zto + \pvt} \ge
  \lambda_{\min}\p{\Zto}$. Therefore,
  \[
    \|(\Zto + \pvt)^{-1}\|_2  = \frac{1}{\lambda_{\min}(\Zto + \pvt)} \le
    \frac{1}{\lambda_{\min}\p{\Zto}},
  \]
  giving us
  \begin{align*}
    \|\bmt - \fmt\|_2
    &\le \frac{\|\pvt^{-1}(\pmt - \theta)\|_2 + \|\pvt^{-1}\|_2
    ~ \|\theta - \fmt\|_2}{\lambda_{\min}(\Zto)} \\
    &\le \frac{\|\pvt^{-1/2}\|_2 \|\pvt^{-1/2}(\pmt - \theta)\|_2 +
    \|\pvt^{-1}\|_2 ~ \|\theta - \fmt\|_2}{\lambda_{\min}(\Zto)} \\
    &= \frac{\p{\|\pvt^{-1/2}(\pmt - \theta)\|_2 +
        \frac{1}{\sqrt{\lambda_{\min}(\pvt)}}\|\theta -
    \fmt\|_2}}{\sqrt{\lambda_{\min}(\pvt)} \lambda_{\min}(\Zto)}.
  \end{align*}

  Next, recall that for
  \[ t_0-1 \ge t_{\min}(\delta) := 160 \tfrac{R^2}{\rho^2} \log \tfrac{2d}{\delta} \cdot \log T \]
 the following bounds hold, each with probability at least $1-\delta$:
  \begin{align*}
    \frac{1}{\lambda_{\min}\p{\Zto}} &\le \frac{32 \log T}{\rho^2
    (t_0-1)}
    \tag{Lemma~\ref{lem:fg_big_cov}} \\
    \|\theta - \fmt\|_2 &\le \frac{\sqrt{2dR (t_0-1)
    \log(d/\delta)}}{\lambda_{\min}(\Zto)} \tag{Lemma~\ref{lem:fmt_close}}
  \end{align*}
 Therefore, fixing $t_0 \geq 1+t_{\min}(\delta/2)$, with probability at least $1-\delta$ we have
  \begin{equation}
    \|\bmt - \fmt\|_2
    \le \frac{32 \log T}{\rho^2 (t_0-1) \sqrt{\lambda_{\min}(\pvt)}}
    \p{\|\pvt^{-1/2}(\pmt - \theta)\|_2 + \Phi\sqrt{d}},
    \label{eq:fg_bg1}
  \end{equation}
  where for succinctness we denote
  \begin{align*}
  \Phi := \frac{64\sqrt{R\log(2d/\delta)}\cdot\log T}{\rho^2\sqrt{(t_0-1)\lambda_{\min}(\pvt)}}.
  \end{align*}
  Note that the high-probability events we need are deterministic given
  $h_{t_0-1}$, and therefore are independent of the perturbations at time $t$.
  This means that Lemma~\ref{lem:exp_reg_ub_er} applies, with $\ell = 0$: conditioned on
  any $h_{t_0-1}$, the expected regret for round $t$ is upper-bounded by
  $2\|\theta\|_2 (1 + \rho(1+\sqrt{2\log K}))$. In particular, this holds for any
  $h_{t_0-1}$ not satisfying the high probability events from
  Lemmas~\ref{lem:fg_big_cov} and~\ref{lem:fmt_close}. Therefore, for all $t \ge
  t_{\min}(\delta)$,
  \begin{align*}
&~~    \Exp_{\theta \sim \prior} \b{\|\bmt - \fmt\|_2}\\
    &\le \Exp_{\theta \sim \prior} \Bigg[(1-\delta) \frac{32 \log T}{\rho^2
      (t_0-1) \sqrt{\lambda_{\min}(\pvt)}} \p{\|\pvt^{-1/2}(\pmt - \theta)\|_2 +
      \Phi\sqrt{d}} \\
    &\qquad\qquad+ \delta \cdot 2\|\theta\|_2 (1 + \rho(2+\sqrt{2\log K})) \Bigg] \\
    &\le \frac{32 \log T}{\rho^2 (t_0-1) \sqrt{\lambda_{\min}(\pvt)}}
    \p{\Exp_{\theta \sim \prior}\b{\|\pvt^{-1/2} (\pmt - \theta)\|_2} +
    \Phi\sqrt{d}} \\
    &\qquad+ \delta \cdot 2(\|\pmt\|_2 + \Exp_{\theta \sim \prior}\b{\|\pmt -
    \theta\|_2}) (1 + \rho(2+\sqrt{2\log K})).
  \end{align*}
  Because $\theta \sim \mc N(\pmt, \pvt)$, we have $\pvt^{-1/2}
  (\pmt - \theta) \sim \mc N(0, I)$. By Lemma~\ref{lem:gaus_norm},
  \[
    \Exp_{\theta \sim \prior} \b{\|\pvt^{-1/2} (\pmt - \theta)\|_2}
    \le \sqrt{d}
    \quad\text{and}\quad
    \Exp_{\theta \sim \prior}\b{\|\pmt - \theta\|_2} \le \sqrt{d
      \lambda_{\max}(\pvt)}.
  \]
  This means
  \begin{align*}
    \Exp_{\theta \sim \prior} \b{\|\bmt - \fmt\|_2}
    &\le \frac{32 \sqrt{d} \log T}{\rho^2 (t_0-1) \sqrt{\lambda_{\min}(\pvt)}} \p{1 +
      \Phi} \\
          &+ \delta \cdot 2(\|\pmt\|_2 + \sqrt{d \lambda_{\max}(\pvt)}) (1 +
    \rho(2+\sqrt{2\log K})).
  \end{align*}
  Since $t_0 = \Omega(t)$, for sufficiently small $\delta$, this
  proves~\eqref{eq:norm_bound_1_t}.

  We need to do a careful computation to complete the proof of Eq.~\eqref{eq:thm:bg_fg-cond}.  We know from~\eqref{eq:inst_bound_diff} that
  \begin{align*}
    \Exp_{\theta \sim \prior}\b{\rReg(T) - \rPReg(T)}
    &\le \sum_{t=1}^T 2R\Exp_{\theta \sim \prior} \b{\|\bmt - \fmt\|_2}.
  \end{align*}
  Choosing $\delta = T^{-2}$, we find that
  \[
    \sum_{t=t_{\min}(T^{-2})}^T \delta \cdot 2(\|\pmt\|_2 + \sqrt{d
    \lambda_{\max}(\pvt)}) (1 + \rho(2+\sqrt{2\log K})) = \tilde O(1),
  \]
  so this term vanishes. Furthermore,
  \begin{align*}
    \sum_{t=t_{\min}(T^{-2})}^T
    &2R\frac{32 \sqrt{d} \log T}{\rho^2 (t_0-1)
      \sqrt{\lambda_{\min}(\pvt)}} \p{1 +
      \Phi} \\
    &= \tilde O\p{\frac{R\sqrt{d}}{\rho^2\sqrt{\lambda_{\min}(\pvt)}}}
  \end{align*}
  as long as $\rho^2 \sqrt{\lambda_{\min}(\pvt)} \ge T^{-1/2}$,
  since $t_0 \ge t - Y$, and $\sum_{t=1}^T 1/t = O(\log T)$.
  Using the fact that $R = \tilde O(1)$ (since by assumption $\rho \le
  d^{-1/2}$), this is simply
  \[
    \tilde O\p{\frac{\sqrt{d}}{\rho^2\sqrt{\lambda_{\min}(\pvt)}}}.
  \]
  Finally, we note that on the first $t_{\min}(T^{-2}) = \tilde O(1/\rho^2)$
  rounds, the regret bound from Lemma~\ref{lem:exp_reg_ub_er} with $\ell = 0$
  applies, so the total regret difference is at most
  \begin{align*}
    &\Exp_{\theta \sim \prior}\b{\rReg(T) - \rPReg(T)}\\
    &\qquad\le \sum_{t=1}^{t_{\min}(T^{-2})}
    \Exp_{\theta \sim \prior}\b{\regi{t} - \bpregi{t}}
    + \sum_{t=t_{\min}(T^{-2})}^T 2R\Exp_{\theta \sim \prior} \b{\|\bmt - \fmt\|_2}, \\
    &\qquad\le t_{\min}(T^{-2}) \cdot 2(\|\pmt\|_2 + \sqrt{d
    \lambda_{\max}(\pvt)})(1 + \rho(2+\sqrt{2 \log K}))
    + \tilde O\p{\frac{\sqrt{d}}{\rho^2\sqrt{\lambda_{\min}(\pvt)}}} \\
    &\qquad= \tilde O\p{\frac{\sqrt{d \lambda_{\max}(\pvt)}}{\rho^2}}
    + \tilde O\p{\frac{\sqrt{d}}{\rho^2\sqrt{\lambda_{\min}(\pvt)}}},
  \end{align*}
which implies Eq.~\eqref{eq:thm:bg_fg-cond}.

\section{Lower Bound: Proof of Theorem~\ref{thm:LB}}
\label{sec:LB}
Here, we show that for fully-action-correlated perturbations, i.e., when every arm
at a given timestep is perturbed by the same perturbation, no algorithm can
achieve regret less than $\sqrt{T}$.

Consider the following problem instance. There are $d=2$ dimensions and $K=2$ arms,
with $\mu_{1,t} = [1 ~ 0]\tran$ and $\mu_{2,t} = [0 ~ 1]\tran$ at each round. (For intuition, one can think of them as, resp., the \emph{horizontal} arm and the \emph{vertical} arm.) There are two possible
hidden vectors: $\theta_1 = [1 + \delta ~ 1]\tran$ and $\theta_2 = [1 ~ 1 +
\delta]\tran$, occurring with probability $\tfrac12$ each. Here $\delta$ is a parameter which we specify later in the analysis. Thus, arm 1 is preferable in expectation for $\theta=\theta_1$ and arm 2
is preferable for $\theta=\theta_2$. We will show that even under perturbations, we
need $\Omega(1/\delta^2)$ samples to distinguish between them, meaning we get
$\sqrt{T}$ regret for $\delta \sim 1/\sqrt{T}$.

By definition of fully-action-correlated perturbation, at any round $t$, both  $\mu_{1,t}$ and $\mu_{2,t}$ have the same perturbation
$\eps_t \sim N(0, \rho^2I)$ added to them, where $\rho$ is a perturbation size and
$I$ is the 2-dimensional identity matrix. Given $\eps_t=\eps$, the arms'
expected rewards under $\theta_1$ and $\theta_2$ are, resp.:
\begin{align*}
  \E{\theta_1\tran (\mu_{1,t} + \eps)}
  &= 1+\delta + \theta_1 \tran \eps &
  \E{\theta_2\tran (\mu_{1,t} + \eps)}
  &= 1 + \theta_2 \tran \eps & \EqComment{arm 1},\\
  \E{\theta_1\tran (\mu_{2,t} + \eps)}
  &= 1 + \theta_1 \tran \eps &
  \E{\theta_2\tran (\mu_{2,t} + \eps)}
  &= 1+\delta + \theta_2 \tran \eps & \EqComment{arm 2}.
\end{align*}

We analyze this problem instance using a standard KL-divergence technique, \eg see \cite[Chapter 2]{laurent2000adaptive}. Compared to the standard analysis, we need to handle contexts. To this end, we fix the realized sequence of perturbation vectors $\eps_1 \LDOTS \eps_T$, and condition on the high-probability event that perturbations are not too large:
\[ \niceE = \cbr{ \|\eps_t\|_2^2 \le \Psi \text{ for all rounds $t$}}, \text{where } \Psi := 2 + 8\sqrt{\log T} + 8\log T.\]
Since each $\|\eps_t\|_2^2$ follows a $\chi^2$-distribution with 2 degrees of freedom, a standard tail bound (\eg \cite[Lemma
1]{laurent2000adaptive}) implies that
\begin{align*}\textstyle
  \Pr\sbr{\niceE}
  \ge 1 - \sum_{t=1}^T \Pr\sbr{\|\eps_t\|_2^2 > \Psi}
  \geq 1 - \nicefrac{1}{T}.
\end{align*}
Clearly, it suffices to prove a regret bound for such sequence $\eps_1 \LDOTS \eps_T$.

The rest of the analysis consists of two parts: a generic K-divergence argument leading to \eqref{eq:LB-pinskerA}, and an application of \eqref{eq:LB-pinskerA} to an execution of a given algorithm. We set
$\delta = \frac{1}{8\sqrt{T\Psi}}$.

\subsection*{A generic KL-divergence argument}

Given perturbation $\eps_t=\eps$, let $\mc D_{\theta,\eps}^{(i)}$ be the
probability distribution of rewards under hidden vector $\theta$ when choosing
arm $i$ for $\theta \in \{\theta_1, \theta_2\}$ and $i \in \{1, 2\}$.
The KL-divergence between Gaussians with variance 1 and means $\xi_1,\xi_2$ is $(\xi_1 - \xi_2)^2/2$.  Since rewards are assumed to be Gaussian with variance 1, the KL-divergence
between the reward distributions of the two arms is
\begin{align*}
  \KL\rbr{\mc D_{\theta_1,\eps}^{(i)}, \mc D_{\theta_2,\eps}^{(i)}}
  &\le \rbr{\delta + |(\theta_1 - \theta_2)\tran \eps|}^2/2 \\
  &\le 2\,\max\rbr{\delta^2,\; ((\theta_1 - \theta_2)\tran \eps}^2) \\
  &\le 2\,\max\rbr{\delta^2,\, \|\theta_1 - \theta_2\|_2^2\, \|\eps\|_2^2} \\
  &\le 2\,\max\rbr{\delta^2,\, 2\delta^2\, \|\eps\|_2^2}
  \le 4\delta^2 \Psi \qquad \EqComment{under event $\niceE$}.
\end{align*}
Let $\mc D_{\theta,\eps} = \mc D_{\theta,\eps}^{(1)} \times \mc D_{\theta,\eps}^{(2)}$ be the
joint distribution of  rewards from both arms, fixing the perturbation. By the chain rule,
\begin{align*}
  \KL\rbr{\mc D_{\theta_1,\eps}, \mc D_{\theta_2,\eps}}
  &\le 8\delta^2 \Psi.
\end{align*}

Fix a realized sequence of perturbations $\bm \eps = \{\eps_t\}_{t\in[T]}$ which satisfies $\niceE$. Let
$\Omega = (\R\times\R)^T$ be the event space of rewards from the two arms, with events of the form
    $\rbr{r_{1,t},\,r_{2,t}}_{t\in[T]}$
Note that $\theta_1$ and $\theta_2$ each impose distributions over $\Omega$, call them
    $p = \prod_{t\in[T]} p_t$ and $q = \prod_{t\in[T]} q_t$,
respectively.  By a standard application of Pinsker's inequality (\eg see Lemma 2.5 in
\cite{slivkins-MABbook}), for any event $A\subset \Omega$ it holds that
\begin{align}
  2\rbr{p(A) - q(A)}^2
  &\le \KL(p, q)
  = \textstyle \sum_{t\in[T]}\,\KL(p_t, q_t) \nonumber\\
  &\le T \cdot \KL\rbr{\mc D_{\theta_1,\eps}, \mc D_{\theta_2,\eps}} \nonumber\\
  &\le 8 \delta^2 T \Psi \nonumber \\
  |p(A) - q(A)|
  &\le 2\delta \sqrt{T\Psi}. \label{eq:LB-pinskerA}
\end{align}

\subsection*{Using \eqref{eq:LB-pinskerA} to bound regret}

Consider any deterministic algorithm $\ALG$ for linear contextual bandits. Let $A$ be the event that $\ALG$ chooses arm 1 in at least $\nicefrac{T}{2}$ rounds. Since the algorithm is deterministic, $A$ can be interpreted as an event in $\Omega$. Note that if $A$ occurs when $\theta = \theta_2$ or if $\neg A$ occurs when $\theta = \theta_1$, then $\ALG$ incurs
expected regret $\Omega(\delta T) = \tilde{\Omega}(\sqrt{T})$, as desired.

We consider two cases, depending on whether $p(A) \ge \nicefrac{1}{2}$.

\textbf{Case 1:} $p(A) \ge \nicefrac{1}{2}$. Then, $q(A) \ge \nicefrac{1}{2} - 2 \delta \sqrt{T\Psi} = \nicefrac 14$, and
expected regret is
\begin{align*}
  \E{R(T) }
  &\ge \Pr[\theta = \theta_2] \cdot \E{\reg{T} \given \theta = \theta_2}
  \\
  &\ge \nicefrac{1}{2}\cdot \Pr[A \given \theta = \theta_2]\cdot \E{\reg{T} \given \theta
  = \theta_2, A} \\
  &\ge \nicefrac{1}{2} \cdot q(A) \cdot \tfrac{\delta T}{2} \\
  &\ge \delta T/16.
\end{align*}

\textbf{Case 2:} $p(A) < \nicefrac{1}{2}$. Then, expected regret is
\begin{align*}
  \E{R(T) }
  &\ge \Pr[\theta = \theta_1] \cdot \E{\reg{T} \given \theta = \theta_1}
  \\
  &\ge \nicefrac{1}{2}\cdot \Pr[\neg A \given \theta = \theta_1]
    \E{\reg{T} \given  \theta = \theta_1, \neg A} \\
  &\ge \nicefrac{1}{2} \cdot (1-p(A)) \cdot \tfrac{\delta T}{2} \\
  &\ge \delta T/8.
\end{align*}

Thus, $\E{R(T)} \geq \tilde{\Omega}(\sqrt{T})$. This extends to randomized algorithms by taking expectations over the algorithm's random seed.


\OMIT{This lower bound implies a clear separation between the independent and
correlated perturbation cases: while regret for independent perturbations is on
the order of at most $T^{1/3}$, worst-case regret for correlated perturbations
is at least $T^{1/2}$. Our results still imply that the greedy algorithm is
optimal in both cases.}

\section{LinUCB with Perturbed Contexts}
\label{app:linucb}
We prove Theorem~\ref{thm:main-worst-case}(a), a Bayesian regret bound for the LinUCB algorithm under perturbed context generation. For this section, we focus on action-independent perturbation with perturbation size $\rho$, and posit
a multivariate Gaussian prior
    $\prior = \mc N(\pmt, \pvt)$,
with mean vector $\pmt \in \R^d$ and invertible covariate matrix $\pvt \in \R^{d\times d}$.

\subsection{Preliminaries: LinUCB algorithm}

LinUCB is a well-known algorithm for linear contextual bandits, which implements the paradigm of `optimism under uncertainty'. The idea is to evaluate each action ``optimistically"---assuming the best-case scenario for this action---and then choose an action with the best optimistic evaluation. For the basic setting of multi-armed bandits, one chooses an action with the highest upper confidence bound (henceforth, UCB) on its mean reward. The UCB is computed as the sample average of the reward for this action plus a term which captures the amount of uncertainty. (This is a seminal algorithm called UCB1 \citep{bandits-ucb1}.)

Going back linear contextual bandits, the high-level idea is to compute a confidence region $\Theta_t \subset \R^d$ in each round $t$ such that $\theta\in \Theta_t$ with high
probability, and choose an action $a$ which maximizes the optimistic
reward estimate $\sup_{\theta \in \Theta_t} x_{a,t}\tran \theta$.
Concretely, one uses regression to form an empirical estimate
$\thetahatt$ for $\theta$.  Concentration techniques lead to
high-probability bounds of the form
$|x\tran (\theta -\thetahatt)| \leq f(t) \sqrt{x\tran Z_t^{-1}x}$, where
the \emph{interval width function} $f(t)$ may depend on hyperparameters and features of
the instance. LinUCB simply chooses an action
\begin{equation}
  a_t^{LinUCB} := \argmax_a x_{a,t}\tran \thetahatt + f(t) \sqrt{x_{a,t}\tran Z_t^{-1}
  x_{a,t}}.
  \label{eq:linucb_def}
\end{equation}

We focus on a version from \citet{Csaba-nips11}, with
\begin{equation}
  f(t) = S+\sqrt{d c_0\log (T+tTL^2)},
  \label{eq:Abbasi-f}
\end{equation}
here $L$ and $S$ are known upper bounds on $\|x_{a,t}\|_2$ and $\|\theta\|_2$, respectively, and $c_0$ is a parameter. For any $c_0\geq 1$, one obtains regret
    $\tilde{O}(dS\sqrt{c_0\, T})$,
with only a $\polylog$ dependence on $TL/d$ \citep{Csaba-nips11}.


\subsection{Our result}
Recall that $\rho$ denotes perturbation size, and $\pmt = \E{\theta}$ is the prior mean of the latent vector $\theta$. The parameters from \eqref{eq:Abbasi-f} are set as follows:
\begin{align}
    L &\geq 1 + \rho \sqrt{2d \log(2T^3Kd)}, \nonumber \\
    S &\geq \|\pmt\|_2 + \sqrt{3d\log T}
    \quad \text{(and $S< T$)} \label{eq:LinUCB-params}\\
    c_0 &= 1. \nonumber
\end{align}

\begin{remark}
Ideally we would like to set $L,S$ according to \eqref{eq:LinUCB-params} with equalities. We consider a more permissive version with inequalities so as to not require the exact knowledge of $\rho$ and $\|\pmt\|_2$.
While the original result in \citet{Csaba-nips11} requires
    $\|x_{a,t}\|_2\leq L$ and $\|\theta\|_2\leq S$,
in our setting this only happens with high probability.
\end{remark}

We prove the following theorem (which implies Theorem~\ref{thm:main-worst-case}(a)):

\begin{theorem}
Assume perturbed context generation, with action-independent perturbation. Further, suppose that the maximal eigenvalue of the covariance matrix $\Sigma$ of the prior $\prior$ is at most $1$, and the mean vector satisfies
    $\|\pmt\|_2\geq 1+\sqrt{3 \log T} $.
The version of LinUCB with interval width function \eqref{eq:Abbasi-f} and parameters given by \eqref{eq:LinUCB-params} has Bayesian regret at most
\begin{align}\label{eq:thm:LunUCB-main}
    T^{1/3} \left( d^2\,S\,(K^2/\rho)^{1/3} \right)\cdot \polylog(TKLd).
\end{align}
\label{thm:LunUCB-main}
\end{theorem}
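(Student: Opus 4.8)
The plan is to run the standard optimism-based analysis of LinUCB, but to control the confidence widths using two consequences of perturbed context generation rather than worst-case bounds: the collected data is \emph{diverse}, so the confidence ellipsoid shrinks quickly, and the perturbation creates a \emph{random margin} between the best and second-best arm, so that once the widths drop below this margin LinUCB incurs no regret at all. Balancing a short warm-up phase against the remainder then gives the $T^{1/3}$ rate.

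I would first fix the relevant high-probability events. Conditioning on $\mE_1=\{\|\eps_{a,t}\|_\infty\le\hat R\text{ for all }a,t\}$ forces $\|x_{a,t}\|_2\le L$ everywhere, by the choice of $L$ in~\eqref{eq:LinUCB-params}; a Gaussian tail bound on the projection of $\theta$ onto $\pmt$, using $\lambda_{\max}(\Sigma)\le 1$ and $\|\pmt\|_2\ge 1+\sqrt{3\log T}$, gives $1\le\|\theta\|_2\le S$ with probability $1-\tilde O(1/T)$; and on the intersection of these events the self-normalized bound of~\citet{abbasi2011improved} applies, so $\theta$ lies in every confidence ellipsoid $\Theta_t$ simultaneously with probability $1-\tilde O(1/T)$. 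Off this good event the contribution to Bayesian regret is $\tilde O(1/T)$, via the crude per-round bound $\regi{t}\le 2R\|\theta\|_2$, handled exactly as the failure events in the proofs of Theorems~\ref{thm:bg} and~\ref{thm:bg_fg}. On the good event, optimism gives $\regi{t}\le 2f(t)\|x_{a_t,t}\|_{V_t^{-1}}\le 2f(t)\max_a\|x_{a,t}\|_{V_t^{-1}}=:w_t$, where $V_t=I+Z_{t-1}$ is the regularized design matrix before round $t$; moreover, still on the good event, $\regi{t}=0$ whenever $w_t<\gamma_t$, where $\gamma_t$ is the gap in expected reward between the best and second-best arm at round $t$, since then the confidence intervals of all suboptimal arms lie strictly below that of the best arm.

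Next come the two structural inputs. For diversity I would adapt the arguments of Section~\ref{app:pf_bg:diversity} to LinUCB. Although LinUCB is not \GreedyStyle, its rule $a_t=\argmax_a[x_{a,t}\tran\thetahatt+f(t)\|x_{a,t}\|_{V_t^{-1}}]$ is a convex perturbation of the greedy rule, and once $\lambda_{\min}(V_t)$ is even moderately large the bonus $f(t)\|x_{a,t}\|_{V_t^{-1}}\le f(t)L/\sqrt{\lambda_{\min}(V_t)}$ is dominated by the context-noise scale $\rho$. Running the argument of~\citet{kannan2018smoothed} (cf.\ Lemma~\ref{lem:eig_increase}) with the event ``arm $a$ is chosen'' in place of the greedy halfspace event, and bootstrapping the resulting lower bound on $\lambda_{\min}(V_t)$, I would get that $\lambda_{\min}(Z_t)$ grows polynomially in $t$ for $t\ge\tau_0=\polylog(\cdot)/\rho^2$, so that $w_t$ decays polynomially and is, on the diversity event, deterministic and independent of the fresh round-$t$ perturbations. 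For the margin, note that conditionally on $\theta$ and the mean contexts the expected rewards $\theta\tran x_{a,t}=\theta\tran\mu_{a,t}+\theta\tran\eps_{a,t}$ are shifted by independent $\mc N(0,\rho^2\|\theta\|_2^2)$ noise across arms, so a union bound over the identity of the second-best arm gives $\Pr[\gamma_t\le\gamma\mid\theta]\le K\gamma/(\rho\|\theta\|_2\sqrt{2\pi})$; this is precisely where $\|\theta\|_2=\Omega(1)$ (equivalently the hypothesis on $\|\pmt\|_2$, or the dimension being large) is used.

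Combining these, for $t\ge\tau_0$ we get $\E{\regi{t}\mid\text{good}}=\E{w_t\ind{\gamma_t\le w_t}\mid\text{good}}\le K w_t^2/(\rho\|\theta\|_2\sqrt{2\pi})$, with $w_t^2\le 4f(t)^2L^2/\lambda_{\min}(V_t)$ controlled by the diversity bound. Splitting the horizon at a threshold $T_0\ge\tau_0$ --- paying at most $2RS\,T_0$ on rounds up to $T_0$ and $\sum_{t>T_0}K w_t^2/\rho$ afterwards, integrating over $\theta$, and adding back the $\tilde O(1/T)$ failure term --- and then optimizing over $T_0$ gives a bound that scales as $T^{1/3}$; substituting $f(T)=S+\sqrt{dc_0\log(\cdot)}$ and $L,R=1+\tilde O(\rho\sqrt d)$ produces the claimed factor $d^2S(K^2/\rho)^{1/3}$ up to polylogarithmic terms. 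The main obstacle is the diversity step: the greedy analysis crucially exploits that ``arm $a$ is chosen'' is a halfspace in $\eps_{a,t}$, so that conditioning on it is conditioning a Gaussian on a halfspace, whereas for LinUCB this set is only the complement of a convex region; one must both show the bonus term is genuinely negligible (a self-referential argument on the growth of $\lambda_{\min}(V_t)$) and verify that conditioning on such a set does not collapse the conditional covariance in any direction when the event is not too unlikely. A secondary, more routine, difficulty is propagating the relatively large width $f(t)=S+\sqrt{dc_0\log(\cdot)}$ and the union bounds over rounds and over the $K$ arms through to the final constants.
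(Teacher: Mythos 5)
There is a genuine gap, and it sits exactly where you flag your ``main obstacle'': your argument needs a minimum-eigenvalue growth (data diversity) lemma for the covariance matrix \emph{under LinUCB's action selection}, and you do not supply one. The halfspace-conditioning argument of \citet{kannan2018smoothed} (Lemma~\ref{lem:eig_increase}) really does rely on the chosen-arm event being a halfspace in the perturbation; for LinUCB the bonus $f(t)\sqrt{x\tran Z_t^{-1}x}$ makes that event the complement of a convex set, and your proposed fix (a self-referential bootstrap showing the bonus is eventually negligible) is not carried out. The paper's proof avoids this entirely. It never establishes any diversity property of the data LinUCB collects. Instead it uses Lemma~\ref{lem:reg_sq_bound}, the \emph{unconditional} elliptical-potential consequence of \citet{abbasi2011improved}: $\sum_t \iR{t}^2 \le 16\beta_T\log\det(Z_T+I) = \tilde O(\beta_T d)$ with high probability, for any realization of contexts with $\|x_{a,t}\|_2\le L$. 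The decomposition is then $\creg{}{T}\le \gamma|\mc T_\gamma| + \gamma^{-1}\sum_t\iR{t}^2$ as in \eqref{eq:pf:lem:smooth_oful_step:2}, where $\mc T_\gamma$ is the set of rounds with $\iR{t}\in(0,\gamma)$; perturbed context generation enters only through the anti-concentration bound $\E{|\mc T_\gamma|}\le TK^2\gamma/(2\rho\|\theta\|_2\sqrt{\pi})$ (your margin step, which is essentially correct, though you should union-bound over all $\binom{K}{2}$ pairs rather than only over the identity of the second-best arm, since which arm is best is itself determined by the perturbations). Optimizing $\gamma\sim(TK^2/(\rho\|\theta\|_2))^{-1/3}$ gives Lemma~\ref{lem:smooth_oful_step}, and Lemmas~\ref{lem:smooth_oful_ex} and~\ref{lem:exp_reg_ub_er} handle $\|\theta\|_2$ and the failure events as you describe.

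A second, telling symptom that your accounting is off: if your diversity step did go through, i.e.\ $\lambda_{\min}(V_t)=\Omega(\rho^2 t/\log T)$ for $t\ge\tau_0=\polylog(\cdot)/\rho^2$, then your per-round bound $\E{\regi{t}}\le Kw_t^2/(\rho\|\theta\|_2)$ with $w_t^2\lesssim f(T)^2L^2/\lambda_{\min}(V_t)=\tilde O(1/t)$ would sum to $\polylog(T)$, not $T^{1/3}$ --- there would be nothing to balance, since $T_0=\tau_0$ already suffices. So either the diversity claim is stronger than what can be proved for LinUCB (it is: the paper proves no such thing), or the advertised $T^{1/3}$ does not emerge from your split. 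The $T^{1/3}$ rate in the paper is precisely the signature of \emph{not} having per-round width decay and instead trading the $\gamma\cdot T\cdot O(\gamma K^2/\rho)$ small-gap term against the $\tilde O(\beta_T d)/\gamma$ elliptical-potential term. To repair your proof, replace the diversity step with Lemma~\ref{lem:reg_sq_bound} and rebalance accordingly; the margin/anti-concentration half of your argument can be kept essentially as is.
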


\begin{remark}
The theorem also holds if the assumption on $\|\pmt\|_2$ is replaced with
$d \ge \frac{\log T}{\log \log T}$. The only change in the analysis is that in the concluding steps (Section~\ref{app:linucb-coda}), we use Lemma~\ref{lem:smooth_oful_ex}(b) instead of Lemma~\ref{lem:smooth_oful_ex}(a).
\end{remark}

\subsection{Key steps of the analysis}

On a high level, our analysis proceeds as follows. We massage algorithm's regret so as to elucidate the dependence on the number of rounds with small ``gap" between the best and second-best action, call it $N$. This step does not rely on perturbed context generation, and makes use of the analysis from \citet{Csaba-nips11}. The crux is that we derive a much stronger upper-bound on $\E{N}$ under perturbed context generation. The analysis relies on some non-trivial technicalities on bounding the deviations from the ``high-probability" behavior, which are gathered in Section~\ref{app:linucb-deviations}.

We reuse the analysis in \citet{Csaba-nips11} via the following lemma.%
\footnote{Lemma~\ref{lem:reg_sq_bound}(a) is implicit in the proof of Theorem 3 from \citet{Csaba-nips11}, and Lemma~\ref{lem:reg_sq_bound}(b) is asserted by \citet[Lemma 10]{Csaba-nips11}.}
 To state this lemma,
define the instantaneous regret at time $t$ as
    $\iR{t} = \theta\tran x_t^* - \theta\tran x_{a_t, t} $,
and let
\[
   \beta_T = \p{\sqrt{d \log \p{T(1 + TL^2)}} + S}^2.
  \]

\begin{lemma}[\citet{Csaba-nips11}]
Consider a problem instance with reward noise $\mc N(0, 1)$ and a specific realization of latent vector $\theta$ and contexts $x_{a,t}$. Consider LinUCB with parameters $L,S,c_0$ that satisfy $\|x_{a,t}\|_2\leq L$, $\|\theta\|_2 \le S$, and $c_0= 1$. Then
\begin{OneLiners}
\item[(a)] with probability at least $1-\tfrac{1}{T}$ (over the randomness in the rewards),
\[
    \textstyle  \sum_{t=1}^T\; \iR{t}^2 \le 16 \beta_T\; \log(\det(Z_t + I)),
  \]
  where $Z_t$ is the ``empirical covariance matrix" at time $t$:
  \[ \textstyle Z_t =\sum_{\tau=1}^t x_\tau x_\tau\tran\in \R^{d\times d}. \]
\item[(b)] $\det(Z_t+I) \le (1 +tL^2/d)^d$.
\end{OneLiners}
  \label{lem:reg_sq_bound}
\end{lemma}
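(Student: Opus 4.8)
Part~(b) is elementary and does not involve the algorithm: since $Z_t\succeq 0$, the matrix $Z_t+I$ has eigenvalues $1+\lambda_i$ with $\lambda_i=\lambda_i(Z_t)\ge 0$, so by AM--GM
\[
\det(Z_t+I)=\textstyle\prod_{i=1}^d(1+\lambda_i)\le\p{\tfrac1d\sum_{i=1}^d(1+\lambda_i)}^{d}=\p{1+\tfrac{\mathrm{tr}(Z_t)}{d}}^{d}=\p{1+\tfrac1d\sum_{\tau=1}^t\|x_\tau\|_2^2}^{d}\le\p{1+\tfrac{tL^2}{d}}^{d},
\]
using $\|x_{a,t}\|_2\le L$ at the last step.

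For part~(a) the plan is to carry out the standard optimism-under-uncertainty (OFUL) regret analysis of \citet{abbasi2011improved}. Let $\bar V_t:=I+Z_{t-1}$ be the empirical covariance from the rounds before $t$, regularized by $c_0 I=I$, and let $\thetahatt$ be the corresponding ridge estimate of $\theta$. \emph{Step 1 (confidence ellipsoid).} The self-normalized martingale tail inequality of \citet{abbasi2011improved} states that, with probability at least $1-\tfrac1T$ over the reward noise, simultaneously for all $t\le T$,
\[
\|\thetahatt-\theta\|_{\bar V_t}\le\sqrt{2\log T+\log\det\bar V_t}+\|\theta\|_2.
\]
Bounding $\log\det\bar V_t\le d\log(1+tL^2/d)$ via part~(b) and using $\|\theta\|_2\le S$, this radius is at most $f(t)=\sqrt{d\log(T(1+tL^2))}+S=\sqrt{\beta_t}$, the interval-width function of \eqref{eq:Abbasi-f} with $c_0=1$. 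Hence the event $\mE$ that $\theta\in C_t:=\{\theta':\|\theta'-\thetahatt\|_{\bar V_t}\le\sqrt{\beta_t}\}$ for all $t\le T$ has probability $\ge 1-\tfrac1T$; condition on $\mE$.

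\emph{Step 2 (per-round regret via optimism).} LinUCB picks $a_t$ maximizing $\max_{\theta'\in C_t}(\theta')\tran x_{a,t}$, attained at some $\tilde\theta_t\in C_t$. Since $\theta\in C_t$, optimism gives $\tilde\theta_t\tran x_{a_t,t}\ge\theta\tran x_t^*$, whence, by the triangle inequality ($\theta,\tilde\theta_t\in C_t$) and Cauchy--Schwarz in the $\bar V_t$-norm,
\[
\iR{t}=\theta\tran x_t^*-\theta\tran x_{a_t,t}\le(\tilde\theta_t-\theta)\tran x_{a_t,t}\le\|\tilde\theta_t-\theta\|_{\bar V_t}\,\|x_{a_t,t}\|_{\bar V_t^{-1}}\le 2\sqrt{\beta_t}\,\|x_{a_t,t}\|_{\bar V_t^{-1}}.
\]
\emph{Step 3 (summation via the elliptical-potential lemma).} Since $\beta_t\le\beta_T$ and $\iR{t}$ also admits the trivial bound $\iR{t}\le 2\|\theta\|_2L$, one gets $\iR{t}^2\le 4\beta_T\min\!\p{1,\|x_{a_t,t}\|_{\bar V_t^{-1}}^2}$ (with the usual normalization of \citet{abbasi2011improved} that per-round mean rewards are $O(1)$, so that the trivial bound squared is $\le 4\beta_t$). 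The matrix-determinant lemma gives $\det\bar V_{t+1}=\det\bar V_t\,(1+\|x_{a_t,t}\|_{\bar V_t^{-1}}^2)$, so using $\min(1,u)\le 2\log(1+u)$ for all $u\ge 0$ and $\bar V_1=I$, telescoping yields $\sum_{t=1}^T\min\!\p{1,\|x_{a_t,t}\|_{\bar V_t^{-1}}^2}\le 2\log\det\bar V_{T+1}=2\log\det(Z_T+I)$. Multiplying the two bounds gives $\sum_{t=1}^T\iR{t}^2\le 16\beta_T\log\det(Z_T+I)$.

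\emph{Main obstacle.} The only genuine content is Step~1: the self-normalized concentration inequality is the technical heart of \citet{abbasi2011improved}, and I would invoke it as a black box rather than reprove it; one then only has to verify that its confidence radius is dominated by the explicit $f$ of \eqref{eq:Abbasi-f} with $c_0=1$ (this is the sole place the precise form of $\beta_T$ is used). The rest is bookkeeping, the one fiddly point being the passage from $\iR{t}\le 2\sqrt{\beta_t}\,\|x_{a_t,t}\|_{\bar V_t^{-1}}$ to the truncated form $\iR{t}^2\le 4\beta_T\min(1,\|x_{a_t,t}\|_{\bar V_t^{-1}}^2)$ demanded by the elliptical-potential lemma --- this is where the loose constant $16$ (rather than $8$) comes from, and it is handled exactly as in \citet[][proof of Theorem~3]{abbasi2011improved}.
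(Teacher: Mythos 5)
Your proposal is correct: the paper does not prove this lemma itself but imports it wholesale, noting in a footnote that part~(a) is implicit in the proof of Theorem~3 of \citet{abbasi2011improved} and part~(b) is their Lemma~10, and your reconstruction (AM--GM on the eigenvalues for (b); confidence ellipsoid, optimism, and the elliptical-potential telescoping for (a)) is exactly that standard argument. The minor points you flag --- invoking the self-normalized concentration as a black box and the loose constant $16$ from truncating $\iR{t}^2$ by $4\beta_T\min(1,\|x_{a_t,t}\|_{\bar V_t^{-1}}^2)$ --- are handled the same way in the cited source, so there is no substantive divergence.
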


The following lemma captures the essence of the proof of Theorem~\ref{thm:LunUCB-main}. From here on, we assume perturbed context generation without further notice. In particular, reward noise is $\mc N(0, 1)$.

\begin{lemma}
Suppose parameter $L$ is set as in \eqref{eq:LinUCB-params}. Consider a problem instance with a specific realization of $\theta$ such that $\|\theta\|_2 \le S$.
Then,
  \begin{align*}
    \E{\creg{}{T}} \le \|\theta\|_2^{-1/3}\;\p{\frac{1}{2\sqrt{\pi}} + 16 \beta_T\, d
    \log(1 + TL^2/d)} \p{\frac{TK^2}{\rho}}^{1/3} + \tilde
    O\p{1}.
  \end{align*}
  \label{lem:smooth_oful_step}
\end{lemma}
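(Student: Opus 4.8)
The plan is to combine the worst-case quadratic-regret guarantee of \citet{abbasi2011improved} (Lemma~\ref{lem:reg_sq_bound}) with a gap-based refinement, the point being that under perturbed context generation the rounds with a small gap between the best and second-best action are rare. Fix the realization of $\theta$ with $\|\theta\|_2 \le S$, and for each round $t$ let $\Delta_t$ be the difference between the largest and second-largest value of $\theta\tran x_{a,t}$ over the available actions (set $\Delta_t = +\infty$, and note $\iR t = 0$, when fewer than two actions are available). The first step is a purely deterministic observation: LinUCB can only incur regret by picking a sub-optimal action, so $\iR t > 0$ forces $\iR t \ge \Delta_t$, and hence for every round and every $\gamma > 0$,
\[
\iR t \;\le\; \gamma\cdot \ind{\Delta_t \le \gamma}\;+\;\frac{\iR t^2}{\gamma}.
\]
(Indeed, if $0 < \iR t \le \gamma$ then $\Delta_t \le \iR t \le \gamma$, so the indicator term alone dominates; if $\iR t > \gamma$ then $\iR t < \iR t^2/\gamma$; and the case $\iR t = 0$ is trivial.) Summing over $t \le T$ gives $\creg{}{T} \le \gamma N + \tfrac1\gamma\sum_{t=1}^T \iR t^2$, where $N := \#\{t \le T : \Delta_t \le \gamma\}$.

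Next I would bound the two pieces separately. For the quadratic term, Lemma~\ref{lem:reg_sq_bound} applies because $\|\theta\|_2 \le S$ and, with high probability over the perturbations, $\|x_{a,t}\|_2 \le L$ for the choice of $L$ in \eqref{eq:LinUCB-params}; it gives $\sum_{t} \iR t^2 \le 16\beta_T \log\det(Z_T + I) \le 16\beta_T\, d\log(1 + TL^2/d)$ with probability at least $1 - 1/T$ over the reward noise. For the small-gap count, I would invoke anti-concentration of the Gaussian perturbations: since the context tuples are i.i.d.\ across rounds, $\E N = T\cdot\Pr[\Delta_1 \le \gamma]$, and $\Delta_1 \le \gamma$ forces some pair of distinct available actions $a,b$ to satisfy $|\theta\tran(x_{a} - x_{b})| \le \gamma$. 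Conditional on the mean contexts, $\theta\tran(x_a - x_b)$ is Gaussian with variance $2\rho^2\|\theta\|_2^2$, so its density is at most $\tfrac{1}{2\sqrt\pi\,\rho\|\theta\|_2}$; a union bound over the at most $\binom{K}{2}$ pairs then yields $\Pr[\Delta_1 \le \gamma] \le \tfrac{K^2\gamma}{2\sqrt\pi\,\rho\|\theta\|_2}$ and hence $\E N \le \tfrac{TK^2\gamma}{2\sqrt\pi\,\rho\|\theta\|_2}$.

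Combining these, taking expectations, and absorbing into $\tilde O(1)$ the contribution of the low-probability failure events — some $\|x_{a,t}\|_2 > L$, or Lemma~\ref{lem:reg_sq_bound}(a) failing — whose probability is $1/\mathrm{poly}(T)$ while the per-round regret is at most $2SL$, I obtain
\[
\E{\creg{}{T}} \;\le\; \frac{TK^2\gamma^2}{2\sqrt\pi\,\rho\|\theta\|_2}\;+\;\frac{16\beta_T\, d\log(1 + TL^2/d)}{\gamma}\;+\;\tilde O(1).
\]
Finally, optimizing the free parameter by setting $\gamma = \bigl(\rho\|\theta\|_2/(TK^2)\bigr)^{1/3}$ equalizes the two main terms — each becomes a constant times $\|\theta\|_2^{-1/3}(TK^2/\rho)^{1/3}$, with the leading constants $\tfrac{1}{2\sqrt\pi}$ and $16\beta_T\,d\log(1+TL^2/d)$ respectively — and reproduces exactly the claimed bound. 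The main obstacle I anticipate is not this calculation but the failure-event bookkeeping: Lemma~\ref{lem:reg_sq_bound} assumes $\|x_{a,t}\|_2 \le L$ holds deterministically, whereas here it only holds with high probability, so one has to couple the conditional events carefully and control the sub-Gaussian deviations of $\max_{a,t}\|x_{a,t}\|_2$ — precisely the technicalities deferred to Section~\ref{app:linucb-deviations}.
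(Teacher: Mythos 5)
Your proposal is correct and follows essentially the same route as the paper's proof: the same split of regret at threshold $\gamma$ into a small-gap count plus $\tfrac1\gamma\sum_t \iR{t}^2$, the same use of Lemma~\ref{lem:reg_sq_bound} for the quadratic term, the same Gaussian anti-concentration and union bound over pairs to get $\E{N} \le TK^2\gamma/(2\sqrt{\pi}\,\rho\|\theta\|_2)$, and the same choice $\gamma = (TK^2/(\rho\|\theta\|_2))^{-1/3}$. The failure-event bookkeeping you flag is exactly what the paper handles via Lemma~\ref{lem:exp_reg_ub_er} (the per-round bound on $\bar U$ must account for large perturbations, not just $2SL$), but you correctly identify this as the remaining technicality rather than glossing over it.
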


\begin{proof}
We will prove that for any $\gamma > 0$,
  \begin{align}\label{eq:pf:lem:smooth_oful_step}
    \E{\creg{}{T}}
    &\le T \cdot \frac{\gamma^2
    K^2}{2\rho\|\theta\|_2\sqrt{\pi}} + \frac{1}{\gamma} 16 \beta_T\, d
    \log(1 + TL^2/d) + \tilde O(1).
  \end{align}
The Lemma easily follows by setting $\gamma = (TK^2/(\rho \|\theta\|_2))^{-1/3}$.

Fix some $\gamma>0$. We distinguish
between rounds $t$ with $\iR{t}<\gamma $ and those with $\iR{t}\geq \gamma$:
\begin{align}\label{eq:pf:lem:smooth_oful_step:2}
  \creg{}{T} &= \sum_{t=1}^T \iR{t}
  \le \sum_{t \in \mc T_\gamma} \iR{t} + \sum_{t=1}^T
  \frac{\iR{t}^2}{\gamma}
  \le \gamma |\mc T_\gamma| + \frac{1}{\gamma} \sum_{t=1}^T \iR{t}^2,
\end{align}
where $\mc T_\gamma = \{t : \iR{t} \in (0, \gamma)\}$.

We use Lemma~\ref{lem:reg_sq_bound} to upper-bound the second summand in \eqref{eq:pf:lem:smooth_oful_step:2}. To this end, we condition on the event that every
component of every perturbation $\varepsilon_{a, t}$ has absolute value at most $\sqrt{2 \log{2T^3
Kd}}$; denote this event by $U$. This implies $\|x_{a,t}\|_2 \le L$ for all
actions $a$ and all rounds $t$. By Lemma~\ref{lem:subg_union_bound}, $U$ is a high-probability event:
    $\Pr[U] \ge 1 - \frac{1}{T^2}$.
Now we are ready to apply Lemma~\ref{lem:reg_sq_bound}:
\begin{align}\label{eq:pf:lem:smooth_oful_step:3}
\textstyle \E{\sum_{t=1}^T \iR{t}^2 \given U}
    \leq 16\,d\,\beta_T \,\log(1+tL^2/d).
\end{align}
To plug this into \eqref{eq:pf:lem:smooth_oful_step:2}, we need to account for the low-probability event $\bar{U}$. We need to be careful because $R_t$ could, with low probability, be arbitrarily large. By Lemma~\ref{lem:exp_reg_ub_er} with $\ell = 0$,
\begin{align*}
\E{R_t \given \bar U}
    &\le 2\b{\|\theta\|_2 \p{1 + \rho(1+ \sqrt{2 \log K}) + \sqrt{2 \log(2T^3 Kd)}}} \\
\E{\creg{}{T} \given \bar U} \Pr[\bar U]
&= \textstyle \sum_{t=1}^T\;\E{R_t \given \bar U} /T^2 < \tilde O(1). \\
\E{\creg{}{T} \given U}\; \Pr[U]
    &\leq \textstyle \gamma\, \E{\;|\mc T_\gamma|\;}
        + \frac{1}{\gamma} \E{\sum_{t=1}^T \iR{t}^2 \given U}
            \qquad\EqComment{by \eqref{eq:pf:lem:smooth_oful_step:2}}
\end{align*}
Putting this together and using \eqref{eq:pf:lem:smooth_oful_step:3}, we obtain:
\begin{align}\label{eq:pf:lem:smooth_oful_step:4}
\E{\creg{}{T}}
    \leq
        \gamma\, \E{\;|\mc T_\gamma|\;}
        + \frac{16}{\gamma}\,d\,\beta_T \,\log(1+tL^2/d)+ \tilde O(1).
\end{align}

To obtain \eqref{eq:pf:lem:smooth_oful_step}, we analyze the first summand in \eqref{eq:pf:lem:smooth_oful_step:4}. Let $\Delta_t$ be the ``gap" at time $t$: the difference in expected rewards
  between the best and second-best actions at time $t$ (where ``best" and
  ``second-best" is according to expected rewards). Here, we're taking
  expectations \emph{after} the perturbations are applied, so the only
  randomness comes from the noisy rewards. Consider the set of rounds with small gap,
  $\mc G_\gamma := \{t : \Delta_t < \gamma\}$.
  Notice that $r_t \in (0, \gamma)$ implies $\Delta_t <\gamma$, so
    $|\mc T_\gamma| \le |\mc G_\gamma|$.

In what follows we prove an upper bound on $\E{|\mc G_\gamma|}$. This is the step where perturbed context generation is truly used. For any two arms $a_1$ and $a_2$, the gap between their expected
  rewards is
  \[
    \theta\tran(x_{a_1,t} - x_{a_2,t}) = \theta\tran(\mu_{a_1,t} -
    \mu_{a_2,t}) + \theta\tran(\varepsilon_{a_1,t} - \varepsilon_{a_2,t}).
  \]
  Therefore, the probability that the gap between those arms is smaller than
  $\gamma$ is
  \begin{align*}
    \Pr&\b{|\theta\tran(\mu_{a_1,t} -
    \mu_{a_2,t}) + \theta\tran(\varepsilon_{a_1,t} - \varepsilon_{a_2,t})| \le
    \gamma} \\
    &= \Pr\b{-\gamma - \theta\tran(\mu_{a_1,t} - \mu_{a_2,t}) \le
    \theta\tran(\varepsilon_{a_1,t} - \varepsilon_{a_2,t}) \le \gamma -
    \theta\tran(\mu_{a_1,t} - \mu_{a_2,t})}
  \end{align*}
  Since $\theta\tran\varepsilon_{a_1,t}$ and $\theta\tran\varepsilon_{a_2,t}$
  are both distributed as $\mc N(0, \rho^2 \|\theta\|_2^2)$, their difference is
  $\mc N(0, 2 \rho^2 \|\theta\|_2^2)$. The maximum value that the Gaussian
  measure takes is $\frac{1}{2\rho\|\theta\|_2\sqrt{\pi}}$, and the measure in
  any interval of width $2\gamma$ is therefore at most
  $\frac{\gamma}{\rho\|\theta\|_2\sqrt{\pi}}$. This gives us the bound
  \[
    \Pr\b{|\theta\tran(\mu_{a_1,t} - \mu_{a_2,t}) +
    \theta\tran(\varepsilon_{a_1,t} - \varepsilon_{a_2,t})| \le \gamma} \le
    \frac{\gamma}{\rho\|\theta\|_2\sqrt{\pi}}.
  \]
  Union-bounding over all $\binom{K}{2}$ pairs of actions, we have
\begin{align*}
\Pr[\Delta_t \le \gamma]
    &\le \Pr\b{\bigcup_{a_1, a_2 \in [K]}
          |\theta\tran(x_{a_1,t} - x_{a_2,t})| \le \gamma}
    \le \frac{K^2}{2} \frac{\gamma}{\rho\|\theta\|_2\sqrt{\pi}}.\\
\E{\;|\mc G_\gamma|\;}
    &= \sum_{t=1}^T \Pr[\Delta_t \le \gamma]
    \le T \cdot \frac{K^2}{2} \frac{\gamma}{\rho\|\theta\|_2\sqrt{\pi}}.
\end{align*}
Plugging this into \eqref{eq:pf:lem:smooth_oful_step:4}
  (recalling that
    $|\mc T_\gamma| \le |\mc G_\gamma|$)
  completes the proof.
\end{proof}

\subsection{Bounding the Deviations}
\label{app:linucb-deviations}

We make use of two results that bound deviations from the ``high-probability" behavior, one on $\|\theta\|_2$ and another on instantaneous regret. First, we prove high-probability upper and lower bounds on $\|\theta\|_2$ under the conditions in Theorem~\ref{thm:LunUCB-main}. Essentially, these bounds allow us to use Lemma~\ref{lem:smooth_oful_step}.

\begin{lemma}\label{lem:smooth_oful_ex}
Assume the latent vector $\theta$ comes from a multivariate Gaussian,
    $\theta \sim \mc N(\pmt, \pvt)$,
here the covariate matrix $\pvt$ satisfies $\lambda_{\max}(\pvt) \le 1$.
\begin{itemize}
\item[(a)] If $\|\pmt\|_2 \ge 1+\sqrt{3\log T}$,  then
  for sufficiently large $T$, with probability at least $1-\frac{2}{T}$, it holds that
  \begin{align}\label{eq:lem:smooth_oful_ex}
    \tfrac{1}{2\log T} \le \|\theta\|_2 \le \|\pmt\|_2 + \sqrt{3d \log T}.
  \end{align}
\item[(b)] Same conclusion if $d \ge \frac{\log T}{\log \log T}$.
\end{itemize}
\end{lemma}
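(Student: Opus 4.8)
The plan is to write $\theta=\pmt+\pvt^{1/2}g$ with $g\sim\mathcal N(0,I_d)$; since $\lambda_{\max}(\pvt)\le 1$ we have $\|\pvt^{1/2}g\|_2\le\|g\|_2$, hence $\big|\,\|\theta\|_2-\|\pmt\|_2\,\big|\le\|g\|_2$. The upper bound in \eqref{eq:lem:smooth_oful_ex} then follows from standard Gaussian-norm concentration: $g\mapsto\|g\|_2$ is $1$-Lipschitz with mean at most $\sqrt d$, so $\Pr[\,\|g\|_2>\sqrt d+\sqrt{2\log T}\,]\le e^{-\log T}=1/T$. A short calculation shows $\sqrt d+\sqrt{2\log T}\le\sqrt{3d\log T}$ for every $d\ge1$ once $T$ is large enough (it suffices to check $d=1$, where it reduces to $1+\sqrt{2\log T}\le\sqrt{3\log T}$), so $\|\theta\|_2\le\|\pmt\|_2+\sqrt{3d\log T}$ with probability at least $1-1/T$. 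This step uses neither the hypothesis on $\|\pmt\|_2$ nor the one on $d$, so it handles the upper bound in both parts (a) and (b); it is also the only place the ``sufficiently large $T$'' caveat is needed.

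For the lower bound in part (a) I would project $\theta$ onto the direction of its mean. Let $u=\pmt/\|\pmt\|_2$, well defined since $\|\pmt\|_2\ge 1+\sqrt{3\log T}>0$. Then $\|\theta\|_2\ge|u\tran\theta|$ and $u\tran\theta\sim\mathcal N(\|\pmt\|_2,s^2)$ with $s^2=u\tran\pvt u\le\lambda_{\max}(\pvt)\le1$. Writing $\epsilon=\tfrac1{2\log T}<1$ and using $0<s\le1$ together with $\|\pmt\|_2\ge 1+\sqrt{3\log T}$, one gets $\Pr[\,\|\theta\|_2<\epsilon\,]\le\Pr[\,u\tran\theta<\epsilon\,]=\Phi\!\big(\tfrac{\epsilon-\|\pmt\|_2}{s}\big)\le\Phi(\epsilon-\|\pmt\|_2)\le\Phi(-\sqrt{3\log T})\le\tfrac12 T^{-3/2}\le 1/T$, where $\Phi$ denotes the standard normal c.d.f.\ and I used the tail bound $\Phi(-a)\le\tfrac12 e^{-a^2/2}$. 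A union bound with the upper-bound event then yields the stated probability $1-2/T$, finishing part (a).

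For part (b) the upper bound above is unchanged, so the only new ingredient is the lower bound $\|\theta\|_2\ge\tfrac1{2\log T}$ under $d\ge\log T/\log\log T$ in place of the bound on $\|\pmt\|_2$. The one-dimensional projection no longer does the job, so instead I would argue by a volume/anti-concentration estimate: the density of $\mathcal N(\pmt,\pvt)$ is at most $(2\pi)^{-d/2}(\det\pvt)^{-1/2}$ everywhere, hence $\Pr[\,\|\theta\|_2<\epsilon\,]\le\mathrm{vol}(\{x:\|x\|_2<\epsilon\})\cdot(2\pi)^{-d/2}(\det\pvt)^{-1/2}=\epsilon^d\big(2^{d/2}\,\Gamma(\tfrac d2+1)\sqrt{\det\pvt}\big)^{-1}$. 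Because $\Gamma(\tfrac d2+1)$ grows super-exponentially in $d$ while $\epsilon=\tfrac1{2\log T}$ is only polylogarithmically small, this is at most $1/T$ once $d\ge\log T/\log\log T$, provided $\det\pvt$ is not too small. I expect this last point --- matching the $\Gamma$-function growth against $\epsilon^d$ while controlling $\det\pvt$ via non-degeneracy of the prior --- to be the main obstacle; the rest of the lemma is routine Gaussian concentration.
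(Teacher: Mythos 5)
Your upper bound and your part (a) are correct, and they take a genuinely different route from the paper. The paper controls $\|\theta-\pmt\|_2$ via the $\chi^2$ tail bound (Lemma~\ref{lem:chi_sq_conc}) in two separate regimes of $d$, obtaining $\|\pmt\|_2+\sqrt{d\log T}$ when $d\ge \log T/\log\log T$ and $\|\pmt\|_2+\sqrt{3\log T}$ otherwise; your single Lipschitz-concentration estimate $\|g\|_2\le\sqrt d+\sqrt{2\log T}$ covers both regimes at once. For the lower bound in (a), the paper again splits into cases: for small $d$ it uses the triangle inequality together with $\|\pmt\|_2\ge 1+\sqrt{3\log T}$, while for large $d$ it argues that the Gaussian norm is stochastically smallest when the mean is zero and then tries to bound $\|\pmt-\theta\|_2$ away from $\tfrac{1}{2\log T}$. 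Your projection onto $u=\pmt/\|\pmt\|_2$ avoids the case split entirely: the marginal $u\tran\theta$ has variance $u\tran\pvt u\le 1$ independently of $d$, so the hypothesis on $\|\pmt\|_2$ does all the work uniformly. This is cleaner and, as it happens, more robust than the paper's handling of the large-$d$ subcase of (a).

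The obstacle you flag in part (b) is a genuine gap, and it cannot be closed under the stated hypotheses: your small-ball volume estimate needs $\det\pvt$ (equivalently, the small eigenvalues of $\pvt$) bounded below, but the lemma assumes only $\lambda_{\max}(\pvt)\le 1$ and invertibility. Indeed the lower bound in \eqref{eq:lem:smooth_oful_ex} fails for, say, $\pmt=0$, $\pvt=T^{-100}I$, $d=\lceil\log T\rceil$: there $\|\theta\|_2=T^{-50}\|X\|_2$ with $X\sim\mc N(0,I)$, which is of order $T^{-50}\sqrt{\log T}\ll\tfrac{1}{2\log T}$ with probability close to one. The paper's own proof of this case conceals the same issue in a reversed inequality: it asserts
$\Pr\left[\|\pmt-\theta\|_2\le c\right]\le\Pr\left[\sqrt{\lambda_{\max}(\pvt)}\,\|X\|_2\le c\right]\le\Pr\left[\|X\|_2\le c\right]$,
but since $\|\pmt-\theta\|_2=\|\pvt^{1/2}X\|_2\le\sqrt{\lambda_{\max}(\pvt)}\,\|X\|_2\le\|X\|_2$, both event inclusions run in the opposite direction; the chain is valid only if one assumes $\lambda_{\min}(\pvt)\ge 1$ (or, more generally, a lower bound $\lambda_{\min}(\pvt)\ge\lambda_0$, which rescales $c$ to $c/\sqrt{\lambda_0}$ and also rescues your volume argument). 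So you should not expect to complete part (b) as stated; the honest conclusion is that part (b) requires an additional non-degeneracy assumption on $\pvt$, whereas part (a) --- which is all that Section~\ref{app:linucb-coda} actually uses --- is fully established by your argument.
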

\begin{proof}
  We consider two cases, based on whether $d \ge \log T/\log \log T$. We need both cases to prove part (a), and we obtain part (b) as an interesting by-product.
We repeatedly use
  Lemma~\ref{lem:chi_sq_conc}, a concentration inequality for $\chi^2$ random
  variables, to show concentration on the Gaussian norm.

  \textbf{Case 1:} $d \ge \log T/\log \log T$. \\
  Since the Gaussian measure is decreasing in
  distance from 0, the $\Pr\b{\|\theta\|_2 \le c} \le \Pr\b{\|\pmt - \theta\|_2
  \le c}$ for any $c$. In other words, the norm of a Gaussian is most likely to
  be small when its mean is 0. Let $X = \pvt^{-1/2} (\pmt - \theta)$. Note that
  $X$ has distribution $\mc N(0, I)$, and therefore $\|X\|_2^2$ has $\chi^2$
  distribution with $d$ degrees of freedom. We can bound this as
  \begin{align*}
    \Pr\b{\|\pmt - \theta\|_2 \le \frac{1}{2\log T}}
    &= \Pr\b{\|\pvt^{-1/2}X\|_2 \le \frac{1}{2\log T}} \\
    &\le \Pr\b{\sqrt{\lambda_{\max}(\pvt)}\|X\|_2 \le \frac{1}{2\log T}} \\
    &\le \Pr\b{\|X\|_2 \le \frac{1}{2\log T}} \\
    &= \Pr\b{\|X\|_2^2 \le \frac{1}{4(\log T)^2}} \\
    &\le \p{\frac{1}{4d(\log T)^2} e^{1-1/((4\log T)^2 d)}}^{d/2} \tag{By
      Lemma~\ref{lem:chi_sq_conc}} \\
    &\le \p{\frac{\log \log T}{(\log T)^3}}^{\log T/(2\log \log T)} \tag{$d \ge
    \log T/\log \log T$} \\
    &= \frac{T^{\log \log \log T/(2 \log \log T)}}{T^{3/2}} \\
    &\le T^{-1}
  \end{align*}
  Similarly, we can show
  \begin{align*}
    \Pr\b{\|\pmt - \theta\|_2 \ge \sqrt{d\log T}}
    &= \Pr\b{\|\pvt^{-1/2}X\|_2 \ge \sqrt{d \log T}} \\
    &\le \Pr\b{\sqrt{\lambda_{\max}(\pvt)}\|X\|_2 \ge \sqrt{d \log T}} \\
    &\le \Pr\b{\|X\|_2 \ge \sqrt{d\log T}} \\
    &= \Pr\b{\|X\|_2^2 \ge d \log T} \\
    &\le \p{\log T e^{1-\log T}}^{d/2} \tag{By Lemma~\ref{lem:chi_sq_conc}} \\
    &\le \p{\exp\p{1 + \log \log T - \log T}}^{\log T/(2\log \log T)} \tag{$d
    \ge \log T/\log \log T$} \\
    &= T^{(1 + \log \log T - \log T)/(2\log \log T)} \\
    &\le T^{-1}
  \end{align*}
  for $\log T > 1 + 3 \log \log T$.
  By the triangle inequality,
  \[
    \|\pmt\|_2 - \|\pmt - \theta\|_2 \le \|\theta\|_2 \le \|\pmt\|_2 + \|\pmt -
    \theta\|_2.
  \]
  Thus, in this case, $\frac{1}{2\log T} \le \|\theta\|_2 \le \|\pmt\|_2 +
  \sqrt{d \log T}$ with probability at least $1-2T^{-1}$.

  \textbf{Case 2:} $\|\pmt\|_2 \ge 1 + \sqrt{3 \log T}$ and $d < \log T/\log
  \log T$. \\
  For this part of the proof, we just need that $d < \log T$, which it is by
  assumption. Using the triangle inequality, if $\|\pmt\|_2$ is large, it
  suffices to show that $\|\pmt - \theta\|_2$ is small with high probability.
  Again, let $X = \pvt^{-1/2} (\pmt - \theta)$. Then,
  \begin{align*}
    \Pr\b{\|\pmt - \theta\|_2 \ge \sqrt{3 \log T}}
    &= \Pr\b{\|\pvt^{1/2} X\|_2 \ge \sqrt{3 \log T}} \\
    &\ge \Pr\b{\sqrt{\lambda_{\max}(\pvt)} \|X\|_2 \ge \sqrt{3 \log T}} \\
    &= \Pr\b{\|X\|_2 \ge \frac{\sqrt{3 \log T}}{\sqrt{\lambda_{\max}(\pvt)}}} \\
    &\ge \Pr\b{\|X\|_2 \ge \sqrt{3 \log T}} \\
    &= \Pr\b{\|X\|_2^2 \ge 3 \log T}
  \end{align*}
  By Lemma~\ref{lem:chi_sq_conc},
  \begin{align*}
    \Pr\b{\|X\|_2^2 \ge 3 \log T}
    &\le \p{\frac{3\log T}{d} e^{1-\frac{3\log T}{d}}}^{d/2} \\
    &= \p{T^{-3/d}e \frac{3\log T}{d}}^{d/2} \\
    &= T^{-1} \p{T^{-1/d}e \frac{3\log T}{d}}^{d/2} \\
    &\le T^{-1} \tag{for sufficiently large
    $T$} \end{align*}
  Because $\|\pmt\|_2 \ge 1 + \sqrt{3 \log T}$, $1 \le \|\theta\|_2 \le
  \|\pmt\|_2 + \sqrt{3 \log T}$ with probability at least $1-T^{-1}$.
\end{proof}

Next, we show how to upper-bound expected instantaneous regret in the worst case.%
\footnote{We state and prove this result in a slightly more general version which we use to support Section~\ref{sec:bayesian_greedy}. For the sake of this section, a special case of $\ell=0$ suffices.}

\begin{lemma}
Fix round $t$ and parameter $\ell>0$. For any $\theta$, conditioned on any history $h_{t-1}$ and the event that
  $\|\varepsilon_{a,t}\|_\infty \ge \ell$
for each arm $a$, the expected instantaneous regret of
  any algorithm at round $t$ is at most
  \[
    2\, \|\theta\|_2\p{1 + \rho(2 + \sqrt{2 \log K}) + \ell}.
  \]
  \label{lem:exp_reg_ub_er}
\end{lemma}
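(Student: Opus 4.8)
The plan is to bound the instantaneous regret $R_t$ pointwise by a simple function of the round-$t$ perturbations, and then take the conditional expectation, exploiting that (i) the perturbations at round $t$ are independent of the history $h_{t-1}$, and (ii) each one-dimensional projection $\theta\tran\varepsilon_{a,t}$ is an (unconditionally) centered Gaussian whose tail stays controlled even after conditioning on $\{\|\varepsilon_{a,t}\|_\infty\ge\ell\}$. I may assume $\theta\neq 0$ (else $R_t=0$). Since $x_t^*$ and $x_{a_t,t}$ are context vectors of arms available in round $t$ (at most $K$ of them), one has $R_t\le \max_a \theta\tran x_{a,t}-\min_a \theta\tran x_{a,t}$, the extrema over available arms; only $a_t$ being available is used, so this holds for any algorithm. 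Writing $x_{a,t}=\mu_{a,t}+\varepsilon_{a,t}$ and using $\|\mu_{a,t}\|_2\le 1$ with Cauchy--Schwarz ($|\theta\tran\mu_{a,t}|\le\|\theta\|_2$),
\[
R_t\;\le\; 2\|\theta\|_2 + \Bigl(\max_a \theta\tran\varepsilon_{a,t}-\min_a\theta\tran\varepsilon_{a,t}\Bigr)\;\le\; 2\|\theta\|_2 + 2\max_a|\theta\tran\varepsilon_{a,t}|.
\]

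Next, since the round-$t$ perturbations are drawn afresh and are independent of $h_{t-1}$ (which depends only on $\theta$ and on earlier perturbations and reward noise), writing $\mathcal A$ for the event $\{\|\varepsilon_{a,t}\|_\infty\ge\ell$ for every available arm $a\}$ and $u:=\theta/\|\theta\|_2$, it suffices to show $\E{\max_a|\langle u,\varepsilon_{a,t}\rangle|\mid\mathcal A}\le \ell+\rho(2+\sqrt{2\log K})$, since this gives $\E{R_t\mid h_{t-1},\mathcal A}\le 2\|\theta\|_2(1+\ell+\rho(2+\sqrt{2\log K}))$. The crux is a conditional tail bound for a single arm. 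Fix $a$, write $\varepsilon=\varepsilon_{a,t}\sim\mathcal N(0,\rho^2 I)$, and let $\bar\Phi$ denote the standard-normal upper tail. By rotational invariance of $\varepsilon$, the law of $|\langle u,\varepsilon\rangle|$ equals that of $|\varepsilon_1|\le\|\varepsilon\|_\infty$, so $\Pr[\|\varepsilon\|_\infty\ge\ell]\ge\Pr[|\langle u,\varepsilon\rangle|\ge\ell]$, and hence for all $s\ge 0$
\[
\Pr\bigl[\,|\langle u,\varepsilon\rangle|\ge\ell+s \mid \|\varepsilon\|_\infty\ge\ell\,\bigr]\;\le\;\frac{\Pr[|\langle u,\varepsilon\rangle|\ge\ell+s]}{\Pr[|\langle u,\varepsilon\rangle|\ge\ell]}\;=\;\frac{\bar\Phi((\ell+s)/\rho)}{\bar\Phi(\ell/\rho)}\;\le\; e^{-s^2/(2\rho^2)},
\]
using the Gaussian hazard-rate inequality $\bar\Phi(a+b)/\bar\Phi(a)\le e^{-ab-b^2/2}$ (valid because the hazard rate $\phi/\bar\Phi$ dominates the identity on $[0,\infty)$) with $a=\ell/\rho$, $b=s/\rho$.

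To conclude: conditionally on $\mathcal A$ the $\varepsilon_{a,t}$ are still independent across arms (each conditioned on its own event), and $|\langle u,\varepsilon_{a,t}\rangle|\le\ell+(|\langle u,\varepsilon_{a,t}\rangle|-\ell)^{+}$, so $\max_a|\langle u,\varepsilon_{a,t}\rangle|\le\ell+\max_a Z_a$ with $Z_a:=(|\langle u,\varepsilon_{a,t}\rangle|-\ell)^{+}\ge 0$ independent and $\Pr[Z_a>s]\le e^{-s^2/(2\rho^2)}$. Then, by a union bound, $\E{\max_a Z_a\mid\mathcal A}=\int_0^\infty\Pr[\max_a Z_a>s]\,ds\le\int_0^\infty\min\{1,Ke^{-s^2/(2\rho^2)}\}\,ds$, and splitting the integral at $s=\rho\sqrt{2\log K}$ gives $\E{\max_a Z_a\mid\mathcal A}\le\rho\sqrt{2\log K}+\rho\le\rho(2+\sqrt{2\log K})$ (the additive $2$ also covers $K=1$, where the integral equals $\rho\sqrt{\pi/2}$). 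This yields $\E{R_t\mid h_{t-1},\mathcal A}\le 2\|\theta\|_2(1+\rho(2+\sqrt{2\log K})+\ell)$, as claimed; for $\ell=0$ the conditioning event is vacuous, recovering the version used elsewhere in the paper.

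The only nontrivial step is the conditional tail bound: controlling a one-dimensional projection of an isotropic Gaussian after conditioning on its $\ell_\infty$ norm being large. I expect this to be the main obstacle, and the two facts that make it go through are rotational invariance (so $|\langle u,\varepsilon\rangle|$ is stochastically no larger than $\|\varepsilon\|_\infty$, rendering the normalizing probability harmless) and log-concavity of the Gaussian tail (turning truncation at $\ell$ into an $\ell$-shifted sub-Gaussian tail, uniformly in $\ell\ge 0$). Everything else is the triangle inequality and a routine maximal inequality.
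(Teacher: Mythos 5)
Your proof is correct, and while it opens with the same decomposition as the paper ($R_t \le 2\|\theta\|_2 + (\max_a - \min_a)\,\theta\tran\varepsilon_{a,t}$, with $\|\mu_{a,t}\|_2\le 1$ absorbing the mean contexts), the key step---controlling the maximum of the projected perturbations under the conditioning---is handled by a genuinely different argument. The paper singles out ``the'' component exceeding $\ell$, zeroes it out, bounds the remaining maximum unconditionally by $\rho\|\theta\|_2\sqrt{2\log K}$ via the sub-Gaussian maximal inequality, and bounds the isolated component's contribution by $\|\theta\|_2(2\rho+\ell)$ using the truncated-Gaussian mean lemma. You instead prove a clean conditional tail bound $\Pr[\,|\langle u,\varepsilon\rangle|\ge \ell+s \mid \|\varepsilon\|_\infty\ge\ell\,]\le e^{-s^2/(2\rho^2)}$, using the comparison $\Pr[\|\varepsilon\|_\infty\ge\ell]\ge\Pr[|\langle u,\varepsilon\rangle|\ge\ell]$ (rotational invariance) together with the Mills-ratio/hazard-rate inequality, and then integrate the tail of $\max_a(|\langle u,\varepsilon_{a,t}\rangle|-\ell)^+$. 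Your route buys two things: it conditions on exactly the event in the lemma statement (the per-arm $\ell_\infty$ events, which remain independent across arms), whereas the paper's proof conditions on the event that \emph{some} component of \emph{some} perturbation exceeds $\ell$ and then informally designates ``the'' large component, a step that is not fully rigorous when several components may exceed $\ell$; and it unifies the $\ell$-shift and the $\sqrt{\log K}$ term in a single maximal inequality rather than splitting them across two lemmas. Both arguments arrive at the same constant $2\|\theta\|_2(1+\rho(2+\sqrt{2\log K})+\ell)$.
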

\begin{proof}
  The expected regret at round $t$ is upper-bounded by the reward difference
  between the best arm $x_t^*$ and the worst arm $x_t^\dagger$, which is
  \[
    \theta\tran (x_t^* - x_t^\dagger).
  \]
  Note that $x_t^* = \mu_t^* + \varepsilon_t^*$ and $x_t^\dagger = \mu_t^\dagger
  + \varepsilon_t^\dagger$. Then, this is
  \begin{align*}
    \theta\tran (x_t^* - x_t^\dagger) &=
    \theta\tran (\mu_t^* - \mu_t^\dagger) + \theta\tran (\varepsilon_t^* -
    \varepsilon_t^\dagger) \\
    &\le 2\|\theta\|_2 + \theta\tran (\varepsilon_t^* - \varepsilon_t^\dagger)
  \end{align*}
  since $\|\mu_{a,t}\|_2 \le 1$. Next, note that
  \[
    \theta\tran \varepsilon_t^* \le \max_a \theta\tran \varepsilon_{a,t}
  \]
  and
  \[
    \theta\tran \varepsilon_t^\dagger \ge \min_a \theta\tran \varepsilon_{a,t}.
  \]
  Since $\varepsilon_{a,t}$ has symmetry about the origin conditioned on the
  event that at least one component of one of the perturbations has absolute
  value at least $\ell$, i.e. $v$ and $-v$ have equal likelihood, $\max_a
  \theta\tran \varepsilon_{a,t}$ and $-\min_a \theta\tran \varepsilon_{a,t}$ are
  identically distributed. Let $\elt$ be the event that at least one of the
  components of one of the perturbations has absolute value at least $\ell$.
  This means for any choice $\mu_{a,t}$ for all $a$,
  \begin{align*}
    \Exp \b{\theta\tran (x_t^* - x_t^\dagger) \given \elt}
    &\le 2\|\theta\|_2 + 2 \Exp \b{\max_a \theta\tran
    \varepsilon_{a,t} \given \elt}
  \end{align*}
  where the expectation is taken over the perturbations at time $t$.

  Without loss of generality, let $(\varepsilon_{a',t})_j$ be the component such
  that $|(\varepsilon_{a',t})_j| \ge \ell$. Then, all other components have
  distribution $\mc N(0, \rho^2)$. Then,
  \begin{align*}
    &\Exp \b{\max_a \theta\tran \varepsilon_{a,t} \given
    \elt} \\
    &\qquad=\Exp \b{\max_a \theta\tran \varepsilon_{a,t} \given
    |(\varepsilon_{a',t})_j| \ge \ell} \\
    &\qquad=\Exp \b{\max(\theta\tran \varepsilon_{a',t}, \max_{a
    \ne a'} \theta\tran \varepsilon_{a ,t}) \given |(\varepsilon_{a',t})_j| \ge
    \ell} \\
    &\qquad\le\Exp \b{\max\p{|\theta_j (\varepsilon_{a',t})_j| +
    \sum_{i \ne j} \theta_i (\varepsilon_{a',t})_i, \max_{a \ne a'} \theta\tran
    \varepsilon_{a ,t}} \given |(\varepsilon_{a',t})_j| \ge \ell}
  \end{align*}
  Let $(\tilde \varepsilon_{a,t})_i = 0$ if $a = a'$ and $i = j$, and
  $(\varepsilon_{a,t})_i$ otherwise. In other words, we simply zero out the
  component $(\varepsilon_{a',t})_j$. Then, this is
  \begin{align*}
    &\Exp \b{\max\p{|\theta_j (\varepsilon_{a',t})_j| +
    \theta\tran \tilde \varepsilon_{a',t}, \max_{a \ne a'} \theta\tran
    \tilde \varepsilon_{a ,t}} \given |(\varepsilon_{a',t})_j| \ge \ell} \\
    &\le \Exp \b{\max_a \p{|\theta_j
    (\varepsilon_{a',t})_j| + \theta\tran \tilde \varepsilon_{a,t}} \given
    |(\varepsilon_{a',t})_j| \ge \ell} \\
    &= \Exp \b{|\theta_j (\varepsilon_{a',t})_j| + \max_a
    \p{\theta\tran \tilde \varepsilon_{a,t}} \given |(\varepsilon_{a',t})_j|
    \ge \ell} \\
    &= \Exp \b{|\theta_j (\varepsilon_{a',t})_j|\given
    |(\varepsilon_{a',t})_j| \ge \ell} + \Exp \b{\max_{a}
    \p{\theta\tran \tilde \varepsilon_{a,t}}} \\
    &\le \Exp \b{|\theta_j (\varepsilon_{a',t})_j|\given
    |(\varepsilon_{a',t})_j| \ge \ell} + \rho \|\theta\|_2\sqrt{2\log K}
  \end{align*}
  because by Lemma~\ref{lem:subgaussian_max},
  \[
    \Exp \b{\max_a \theta\tran \tilde \varepsilon_{a,t}} \le
    \Exp \b{\max_a \theta\tran \varepsilon_{a,t}} \le \rho
    \|\theta\|_2 \sqrt{2 \log K}
  \]
  Next, note that by symmetry and since $\theta_j \le \|\theta\|_2$,
  \[
    \Exp \b{|\theta_j (\varepsilon_{a',t})_j|\given
    |(\varepsilon_{a',t})_j| \ge \ell}
    \le \|\theta\|_2 \Exp \b{(\varepsilon_{a',t})_j\given
    (\varepsilon_{a',t})_j \ge \ell}.
  \]
  By Lemma~\ref{lem:gaus_exp_bound},
  \[
    \Exp \b{(\varepsilon_{a',t})_j\given
    (\varepsilon_{a',t})_j \ge \ell} \le \max(2\rho, \ell + \rho)
    \le 2\rho + \ell
  \]
  Putting this all together, the expected instantaneous regret is bounded by
  \[
    2\p{\|\theta\|_2\p{1 + \rho(2 + \sqrt{2 \log K}) + \ell}},
  \]
  proving the lemma.
\end{proof}

\subsection{Finishing the Proof of Theorem~\ref{thm:LunUCB-main}}
\label{app:linucb-coda}

We focus on the ``nice event" that \eqref{eq:lem:smooth_oful_ex} holds, denote it
$ \mE$ for brevity. In particular, note that it implies $\|\theta\|_2\leq S$. Lemma~\ref{lem:smooth_oful_step} guarantees that expected regret under this event,
    $\E{\creg{}{T} \given \mE}$, is upper-bounded by
the expression \eqref{eq:thm:LunUCB-main} in the theorem statement.

In what follows we use Lemma~\ref{lem:smooth_oful_ex}(a) and Lemma~\ref{lem:exp_reg_ub_er} guarantee that if $\mE$ fails, then the
corresponding contribution to expected regret is small. Indeed, Lemma~\ref{lem:exp_reg_ub_er} with $\ell = 0$ implies that
\begin{align*}
    \E{R_t \given \bar{\mE}\,}
       \leq BT\,\|\theta\|_2 \quad\text{for each round $t$},
\end{align*}
where $B= 1 + \rho(2 + \sqrt{2 \log K})$ is the ``blow-up factor". Since  \eqref{eq:lem:smooth_oful_ex} fails with probability at most $\tfrac{2}{T}$
by Lemma~\ref{lem:smooth_oful_ex}(a), we have
\begin{align*}
\E{\creg{}{T} \given \bar{\mE}\,} \;\Pr[\bar{\mE}\,]
    & \leq \tfrac{2B}{T}\; \E{ \|\theta\|_2 \given \bar{\mE}\,} \\
    & \leq \tfrac{2B}{T}\;
        \E{ \|\theta\|_2 \given \|\theta\|_2 \geq \tfrac{1}{2\log T}\,} \\
    &\leq O\p{\tfrac{B}{T}}\; \p{\|\pmt\|_2 + d\log T} \\
    &\leq O(1).
\end{align*}

The antecedent inequality follows by Lemma~\ref{lem:gaus_exp_norm_bound} with
            $\alpha =\tfrac{1}{2\log T} $,
using the assumption that $\lambda_{\max}(\Sigma)\leq 1$. The theorem follows.

\section*{Acknowledgments}
We thank Dylan Foster, Jon Kleinberg, and Aaron Roth for helpful discussions about these topics.


\bibliographystyle{siamplain}
\bibliography{bib-abbrv,bib-slivkins,bib-bandits,bib-AGT,bib-fairness,bib-other}

\begin{thebibliography}{10}

\bibitem{Csaba-nips11}
{\sc Y.~Abbasi-Yadkori, D.~P{\'a}l, and C.~Szepesv{\'a}ri}, {\em Improved
  algorithms for linear stochastic bandits}, in 25th Advances in Neural
  Information Processing Systems (NIPS), 2011, pp.~2312--2320.

\bibitem{AcemogluMMO19}
{\sc D.~Acemoglu, A.~Makhdoumi, A.~Malekian, and A.~Ozdaglar}, {\em {Learning
  From Reviews: The Selection Effect and the Speed of Learning}}, 2021.
\newblock Conditionally accepted in \emph{Econometrica}. Working paper
  available since 2017.

\bibitem{MWT-WhitePaper-2016}
{\sc A.~Agarwal, S.~Bird, M.~Cozowicz, M.~Dudik, L.~Hoang, J.~Langford, L.~Li,
  D.~Melamed, G.~Oshri, S.~Sen, and A.~Slivkins}, {\em Multiworld testing: A
  system for experimentation, learning, and decision-making}, 2016.
\newblock A white paper, available at
  \url{https://github.com/Microsoft/mwt-ds/raw/master/images/MWT-WhitePaper.pdf}.

\bibitem{DS-arxiv}
{\sc A.~Agarwal, S.~Bird, M.~Cozowicz, L.~Hoang, J.~Langford, S.~Lee, J.~Li,
  D.~Melamed, G.~Oshri, O.~Ribas, S.~Sen, and A.~Slivkins}, {\em Making
  contextual decisions with low technical debt}, 2017.
\newblock Techical report at {\tt arxiv.org/abs/1606.03966}.

\bibitem{monster-icml14}
{\sc A.~Agarwal, D.~Hsu, S.~Kale, J.~Langford, L.~Li, and R.~Schapire}, {\em
  Taming the monster: A fast and simple algorithm for contextual bandits}, in
  31st Intl. Conf. on Machine Learning (ICML), 2014.

\bibitem{Auer-focs00}
{\sc P.~Auer}, {\em {Using confidence bounds for exploitation-exploration
  trade-offs}}, J. of Machine Learning Research (JMLR), 3 (2002), pp.~397--422.
\newblock Preliminary version in {\em 41st IEEE FOCS}, 2000.

\bibitem{bandits-ucb1}
{\sc P.~Auer, N.~Cesa-Bianchi, and P.~Fischer}, {\em Finite-time analysis of
  the multiarmed bandit problem.}, Machine Learning, 47 (2002), pp.~235--256.

\bibitem{bastani2017exploiting}
{\sc H.~Bastani, M.~Bayati, and K.~Khosravi}, {\em Mostly exploration-free
  algorithms for contextual bandits}, Management Science, 67 (2021),
  pp.~1329--1349.
\newblock Working paper available on {\tt arxiv.org} since 2017.

\bibitem{practicalCB-arxiv18}
{\sc A.~Bietti, A.~Agarwal, and J.~Langford}, {\em A contextual bandit
  bake-off}, CoRR arXiv:1802.04064,  (2018).

\bibitem{Bubeck-survey12}
{\sc S.~Bubeck and N.~Cesa-Bianchi}, {\em {Regret Analysis of Stochastic and
  Nonstochastic Multi-armed Bandit Problems}}, Foundations and Trends in
  Machine Learning, 5 (2012), pp.~1--122.
\newblock Published with \emph{Now Publishers} (Boston, MA, USA). Also
  available at {\tt https://arxiv.org/abs/1204.5721}.

\bibitem{chandrasekaran2012convex}
{\sc V.~Chandrasekaran, B.~Recht, P.~A. Parrilo, and A.~S. Willsky}, {\em The
  convex geometry of linear inverse problems}, Foundations of Computational
  Mathematics, 12 (2012), pp.~805--849.

\bibitem{Reyzin-aistats11-linear}
{\sc W.~Chu, L.~Li, L.~Reyzin, and R.~E. Schapire}, {\em {Contextual Bandits
  with Linear Payoff Functions}}, in 14th Intl. Conf. on Artificial
  Intelligence and Statistics (AISTATS), 2011.

\bibitem{cook2009upper}
{\sc J.~D. Cook}, {\em Upper and lower bounds for the normal distribution
  function}, 2009.

\bibitem{DaniHK-colt08}
{\sc V.~Dani, T.~P. Hayes, and S.~Kakade}, {\em {Stochastic Linear Optimization
  under Bandit Feedback}}, in 21th Conf. on Learning Theory (COLT), 2008,
  pp.~355--366.

\bibitem{dasgupta2003elementary}
{\sc S.~Dasgupta and A.~Gupta}, {\em An elementary proof of a theorem of
  {J}ohnson and {L}indenstrauss}, Random Structures \& Algorithms, 22 (2003),
  pp.~60--65.

\bibitem{Dudik-uai12}
{\sc M.~Dud{\'{\i}}k, D.~Erhan, J.~Langford, and L.~Li}, {\em Sample-efficient
  nonstationary policy evaluation for contextual bandits}, in 28th Conf. on
  Uncertainty in Artificial Intelligence (UAI), 2012, pp.~247--254.

\bibitem{Jieming-unbiased18}
{\sc N.~Immorlica, J.~Mao, A.~Slivkins, and S.~Wu}, {\em Incentivizing
  exploration with selective data disclosure}, in ACM Conf. on Economics and
  Computation (ACM-EC), 2020.
\newblock Working paper available at {\tt https://arxiv.org/abs/1811.06026}.

\bibitem{janson2017tail}
{\sc S.~Janson}, {\em Tail bounds for sums of geometric and exponential
  variables}, Statistics Probability Letters, 135 (2018), pp.~1--6.

\bibitem{kannan2018smoothed}
{\sc S.~Kannan, J.~Morgenstern, A.~Roth, B.~Waggoner, and Z.~S. Wu}, {\em A
  smoothed analysis of the greedy algorithm for the linear contextual bandit
  problem}, in Advances in Neural Information Processing Systems (NIPS), 2018.

\bibitem{semi-CB-nips16}
{\sc A.~Krishnamurthy, A.~Agarwal, and M.~Dud{\'{\i}}k}, {\em Contextual
  semibandits via supervised learning oracles}, in 29th Advances in Neural
  Information Processing Systems (NIPS), 2016.

\bibitem{Langford-nips07}
{\sc J.~Langford and T.~Zhang}, {\em {The Epoch-Greedy Algorithm for Contextual
  Multi-armed Bandits}}, in 21st Advances in Neural Information Processing
  Systems (NIPS), 2007.

\bibitem{LS19bandit-book}
{\sc T.~Lattimore and C.~Szepesv\'{a}ri}, {\em Bandit Algorithms}, Cambridge
  University Press, Cambridge, UK, 2020.
\newblock Versions available at {\tt https://banditalgs.com/} since 2018.

\bibitem{laurent2000adaptive}
{\sc B.~Laurent and P.~Massart}, {\em Adaptive estimation of a quadratic
  functional by model selection}, Annals of Statistics,  (2000),
  pp.~1302--1338.

\bibitem{Langford-www10}
{\sc L.~Li, W.~Chu, J.~Langford, and R.~E. Schapire}, {\em {A contextual-bandit
  approach to personalized news article recommendation}}, in 19th Intl. World
  Wide Web Conf. (WWW), 2010.

\bibitem{Langford-wsdm11}
{\sc L.~Li, W.~Chu, J.~Langford, and X.~Wang}, {\em {Unbiased offline
  evaluation of contextual-bandit-based news article recommendation
  algorithms}}, in 4th ACM Intl. Conf. on Web Search and Data Mining (WSDM),
  2011.

\bibitem{Perchet2015BatchedBP}
{\sc V.~Perchet, P.~Rigollet, S.~Chassang, and E.~Snowberg}, {\em Batched
  bandit problems}, Ann. Statist., 44 (2016), pp.~660--681,
  \url{https://doi.org/10.1214/15-AOS1381},
  \url{https://doi.org/10.1214/15-AOS1381}.

\bibitem{externalities-colt18}
{\sc M.~Raghavan, A.~Slivkins, J.~W. Vaughan, and Z.~S. Wu}, {\em The
  externalities of exploration and how data diversity helps exploitation}, in
  Conf. on Learning Theory (COLT), 2018, pp.~1724--1738.

\bibitem{externalities-colt18-arxiv}
{\sc M.~Raghavan, A.~Slivkins, J.~W. Vaughan, and Z.~S. Wu}, {\em The
  externalities of exploration and how data diversity helps exploitation},
  CoRR, abs/1806.00543 (2018), \url{http://arxiv.org/abs/1806.00543},
  \url{https://arxiv.org/abs/1806.00543}.

\bibitem{Zeevi-colt10}
{\sc P.~Rigollet and A.~Zeevi}, {\em {Nonparametric Bandits with Covariates}},
  in 23rd Conf. on Learning Theory (COLT), 2010, pp.~54--66.

\bibitem{slivkins-MABbook}
{\sc A.~Slivkins}, {\em Introduction to multi-armed bandits}, Foundations and
  Trends$\circledR$ in Machine Learning, 12 (2019), pp.~1--286.
\newblock Published with \emph{Now Publishers} (Boston, MA, USA). Also
  available at {\tt https://arxiv.org/abs/1904.07272}. Latest online revision:
  June 2021.

\bibitem{IncentivizedExploration-chapter}
{\sc A.~Slivkins}, {\em Exploration and persuasion}, in Online and
  Matching-Based Market Design, F.~Echenique, N.~Immorlica, and V.~Vazirani,
  eds., Cambridge University Press, 2021.
\newblock To appear. Available at {\tt http://slivkins.com/work/ExplPers.pdf} .

\bibitem{SmoothedAnalysis-jacm04}
{\sc D.~A. Spielman and S.~Teng}, {\em Smoothed analysis of algorithms: Why the
  simplex algorithm usually takes polynomial time}, J. of the ACM, 51 (2004),
  pp.~385--463.

\bibitem{tropp2012user}
{\sc J.~A. Tropp}, {\em User-friendly tail bounds for sums of random matrices},
  Foundations of Computational Mathematics, 12 (2012), pp.~389--434.

\end{thebibliography}

\newpage

\appendix

\section{Auxiliary Lemmas}
\label{app:lemmas}
Our proofs use a number of tools that are either known or easily follow from something that is known. We state these tools and provide the proofs for the sake of completeness.

\subsection{(Sub)gaussians and Concentration}

We rely on several known facts about Gaussian and subgaussian random variables. A random variable $X$ is called $\sigma$-subgaussian, for some $\sigma>0$, if $E[e^{\sigma X^2}]<\infty$. This includes variance-$\sigma^2$ Gaussian random variables as a special case.

\begin{lemma}
  If $X \sim \mc N(0, \sigma^2)$, then for any $t \ge 0$,
  \[
    \E{X \given X \ge t} \le \begin{cases}
      2 \sigma & t \le \sigma \\
      t + \frac{\sigma^2}{t} & t > \sigma
    \end{cases}
  \]
  \label{lem:gaus_exp_bound}
\end{lemma}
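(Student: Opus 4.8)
The plan is to rescale to the standard normal and reduce everything to two classical Mills-ratio inequalities. Write $\phi$ and $\bar\Phi$ for the density and upper tail of $\mc N(0,1)$. Since $\tfrac{d}{dz}\p{-\phi(z)}=z\phi(z)$, a substitution gives $\int_t^\infty z\cdot\tfrac1\sigma\phi(z/\sigma)\,dz=\sigma\,\phi(t/\sigma)$, and hence, with $u:=t/\sigma\ge 0$,
\[
  \E{X \given X\ge t} \;=\; \frac{\sigma\,\phi(u)}{\bar\Phi(u)} \;=\; \sigma\,m(u),
  \qquad m(u):=\frac{\phi(u)}{\bar\Phi(u)}.
\]
So it suffices to prove $m(u)\le 2$ for $u\le 1$ and $m(u)\le u+\tfrac1u$ for $u>1$.

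For the tail regime I would first establish the lower bound $\bar\Phi(u)\ge \tfrac{u}{1+u^2}\,\phi(u)$ for all $u>0$. Setting $h(u):=\bar\Phi(u)-\tfrac{u}{1+u^2}\phi(u)$ and using $\phi'(u)=-u\phi(u)$, a short computation (expand $(1+u^2)^2$ and track the cancellations) yields $h'(u)=-\tfrac{2\phi(u)}{(1+u^2)^2}<0$; since $h(u)\to 0$ as $u\to\infty$, it follows that $h(u)>0$ for every finite $u>0$. Rearranging gives $m(u)<u+\tfrac1u$, which is the claim for $u>1$ and also shows $m(1)<2$.

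For the small-$u$ regime I would show $m$ is nondecreasing on $[0,1]$, so that $m(u)\le m(1)<2$ (and at $u=0$ directly $m(0)=\sqrt{2/\pi}<2$). Differentiating the quotient and again using $\phi'=-u\phi$, $\bar\Phi'=-\phi$, gives $m'(u)=m(u)\p{m(u)-u}$, so monotonicity reduces to $m(u)\ge u$, i.e.\ $\bar\Phi(u)\le \phi(u)/u$; and this holds because $\bar\Phi(u)=\int_u^\infty\phi(z)\,dz\le\int_u^\infty \tfrac zu\phi(z)\,dz=\phi(u)/u$. Combining the two regimes (they agree at $u=1$, where $u+1/u=2$) and multiplying back by $\sigma$ gives the stated bound.

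There is no serious obstacle here; the only point requiring care is the derivative identity $h'(u)=-2\phi(u)/(1+u^2)^2$ — the numerator collapses to a constant only after the terms coming from $\tfrac{d}{du}\tfrac{u}{1+u^2}$ and from $\phi'$ are combined — together with making sure the boundary case $t=0$ (equivalently $u=0$) is not overlooked.
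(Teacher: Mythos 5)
Your proof is correct and follows essentially the same route as the paper's: both reduce the conditional expectation to $\sigma\,\phi(u)/\bar\Phi(u)$ with $u=t/\sigma$ and invoke the Mills-ratio lower bound $\bar\Phi(u)\ge \tfrac{u}{1+u^2}\phi(u)$ to get $t+\sigma^2/t$, then handle $t\le\sigma$ by monotonicity of the conditional mean in the threshold. The only difference is that you prove the tail inequality and the monotonicity (via $m'=m(m-u)$ and $\bar\Phi(u)\le\phi(u)/u$) from scratch, whereas the paper cites the former and asserts the latter.
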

\begin{proof}
  We begin with
  \begin{align}\label{eq:pf:lem:gaus_exp_bound}
    \E{X \given X \ge t} = \frac{\frac{1}{\sigma\sqrt{2\pi}}\int_t^\infty x
    \exp\p{x^2/(2\sigma^2)} \dx}{\Pr\b{X \ge t}}.
  \end{align}
$X$ can be represented as $X = \sigma Y$, where $Y$ is a standard normal random variable. Using a tail bound for the latter (from \citet{cook2009upper}),
\[
    \Pr\b{X \ge t} = \Pr\b{Y \ge \frac{t}{\sigma}} \ge
    \frac{1}{\sqrt{2\pi}} \frac{t/\sigma}{(t/\sigma)^2 + 1}
    \exp\p{-\frac{t^2}{2\sigma^2}}.
  \]
  The numerator in \eqref{eq:pf:lem:gaus_exp_bound} is
  \begin{align*}
    \frac{1}{\sigma\sqrt{2\pi}}\int_t^\infty x \exp\p{x^2/(2\sigma^2)} \dx
    &= -\frac{1}{\sigma \sqrt{2\pi}} \cdot \sigma^2 e^{-x^2/(2\sigma^2)}
    \bigg|_t^\infty \cdot e^{-t^2/(2\sigma^2)} \\
    &= \frac{\sigma}{\sqrt{2\pi}} \exp\p{-\frac{t^2}{2\sigma^2}}.
  \end{align*}
  Combining, we have
  \begin{align*}
    \E{X \given X \ge t} &\le \frac{\frac{\sigma}{\sqrt{2\pi}}
    \exp\p{-\frac{t^2}{2\sigma^2}}}{\frac{1}{\sqrt{2\pi}}\frac{t/\sigma}{(t/\sigma)^2
    + 1} \exp\p{-\frac{t^2}{2\sigma^2}}}
    = \frac{\sigma^2 ((t/\sigma)^2 + 1)}{t}
    = t + \frac{\sigma^2}{t}
  \end{align*}
  For $t \le \sigma$, $\E{X \given X \ge t} \le \E{X \given X \ge \sigma} \le
  2\sigma$ by the above bound.
\end{proof}
\begin{lemma}
 Suppose $X \sim \mc N(0, \Sigma)$ is a Gaussian random vector with covariance matrix $\Sigma$. Then
  \[
    \E{\; \|X\|_2 \given \|X\|_2 > \alpha \;}
        \le d\p{\alpha+ \frac{\lambda_{\max}(\Sigma)}{\alpha}} \quad\
    \text{for any $\alpha \ge 0$}.
  \]
  \label{lem:gaus_exp_norm_bound}
\end{lemma}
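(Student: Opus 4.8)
The plan is to diagonalize $\Sigma$, bound $\|X\|_2$ coordinatewise, strip off the joint conditioning one coordinate at a time, and finish with the one‑dimensional bound of Lemma~\ref{lem:gaus_exp_bound}. If $\alpha=0$ the inequality is vacuous (its right‑hand side is $+\infty$, while $\E{\|X\|_2}\le\sqrt{\mathrm{tr}\,\Sigma}<\infty$), so assume $\alpha>0$. First I would reduce to the independent‑coordinate case: $\|X\|_2$ is invariant under orthogonal maps, and $UX\sim\mathcal N(0,U\Sigma U\tran)$ for orthogonal $U$ with $\lambda_{\max}(U\Sigma U\tran)=\lambda_{\max}(\Sigma)$, so choosing $U$ that diagonalizes $\Sigma$ I may assume $\Sigma=\mathrm{diag}(\sigma_1^2,\dots,\sigma_d^2)$ with $\max_i\sigma_i^2=\lambda_{\max}(\Sigma)$ and the $X_i\sim\mathcal N(0,\sigma_i^2)$ independent. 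Then, since $\|X\|_2\le\sum_i|X_i|$, we get $\E{\|X\|_2\given\|X\|_2>\alpha}\le\sum_{i=1}^d\E{|X_i|\given\|X\|_2>\alpha}$, so it suffices to bound each summand by $\alpha+\sigma_i^2/\alpha$; summing then gives $d\alpha+(\mathrm{tr}\,\Sigma)/\alpha\le d\bigl(\alpha+\lambda_{\max}(\Sigma)/\alpha\bigr)$, as claimed.

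The crux is to bound $\E{|X_i|\given\|X\|_2>\alpha}$ for a fixed $i$. I would condition on $\mathcal F=\sigma(X_j:j\ne i)$. By independence, conditionally on $\mathcal F$ the variable $X_i$ is still $\mathcal N(0,\sigma_i^2)$, and the event $A=\{\|X\|_2>\alpha\}$ coincides with $\{|X_i|>\tau\}$, where $\tau=\tau(\mathcal F)=\sqrt{(\alpha^2-\sum_{j\ne i}X_j^2)_+}$ is $\mathcal F$‑measurable and satisfies $\tau\le\alpha$. Writing $h(s)=\Pr[\,|W|>s\,]$ and $m(s)=\E{|W|\mid|W|>s}$ for $W\sim\mathcal N(0,\sigma_i^2)$ (functions of the threshold alone, since the conditional law of $X_i$ does not depend on $\mathcal F$), we have $\E{|X_i|\,\mathbf{1}_A\mid\mathcal F}=h(\tau)\,m(\tau)$. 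Because the mean‑exceedance function $m$ is non‑decreasing (a standard fact for any distribution with finite mean) and $\tau\le\alpha$, this is $\le m(\alpha)\,h(\tau)=m(\alpha)\,\E{\mathbf{1}_A\mid\mathcal F}$; taking expectations yields $\E{|X_i|\,\mathbf{1}_A}\le m(\alpha)\,\Pr[A]$, i.e.\ $\E{|X_i|\given\|X\|_2>\alpha}\le m(\alpha)=\E{|X_i|\mid|X_i|>\alpha}$.

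It remains to bound $m(\alpha)$. By symmetry of $X_i$ about $0$, the law of $|X_i|$ given $|X_i|>\alpha$ equals the law of $X_i$ given $X_i>\alpha$, so $m(\alpha)=\E{X_i\mid X_i\ge\alpha}$, and Lemma~\ref{lem:gaus_exp_bound} bounds this by $\alpha+\sigma_i^2/\alpha$: directly when $\alpha>\sigma_i$, and when $\alpha\le\sigma_i$ the bound $2\sigma_i$ there is $\le\alpha+\sigma_i^2/\alpha$ by AM--GM. Plugging this into the coordinate sum of the first paragraph finishes the proof. I expect the only genuinely non‑routine point to be the conditioning step in the second paragraph: the trick is to condition on the radial contribution of the other coordinates so that $A$ turns into a one‑sided event on $X_i$ with threshold $\tau\le\alpha$, after which monotonicity of the conditional mean does the work; a cruder argument that merely splits $\mathbf{1}_A$ according to whether $|X_i|\lessgtr\alpha$ would instead lose an extra factor of $2$ in the leading $\alpha$ term.
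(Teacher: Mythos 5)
Your proof is correct, and it takes a genuinely different route through the one non-trivial step. Both you and the paper diagonalize $\Sigma$, reduce to a coordinatewise bound, and finish with Lemma~\ref{lem:gaus_exp_bound} (summing $\alpha+\sigma_i^2/\alpha$ over coordinates to get $d\alpha+\mathrm{tr}(\Sigma)/\alpha\le d(\alpha+\lambda_{\max}(\Sigma)/\alpha)$). The difference is in how the joint conditioning on $\{\|X\|_2>\alpha\}$ is handled. The paper asserts that $\|X\|_2$ conditioned on $\{\forall i:\,X_i>\alpha\}$ stochastically dominates $\|X\|_2$ conditioned on $\{\|X\|_2>\alpha\}$, justified only by a one-line geometric remark, and then uses $\|X\|_2\le\sum_i X_i$ together with independence to decouple the coordinates under the all-coordinates conditioning. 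You instead keep the original event, bound $\|X\|_2\le\sum_i|X_i|$, and for each $i$ condition on the remaining coordinates so that the event becomes $\{|X_i|>\tau\}$ for an $\mathcal F$-measurable threshold $\tau\le\alpha$; monotonicity of the mean-exceedance function $s\mapsto\E{|W|\mid |W|>s}$ then lets you replace $\tau$ by $\alpha$. Your version buys rigor: it replaces the paper's unproved (and not entirely obvious) stochastic-dominance claim with an elementary and fully justified argument, and it also handles the $\alpha\le\sigma_i$ case of Lemma~\ref{lem:gaus_exp_bound} explicitly via AM--GM, which the paper glosses over. The paper's version is shorter if one is willing to take the dominance claim on faith. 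Both in fact establish the slightly stronger bound with $\mathrm{tr}(\Sigma)$ in place of $d\,\lambda_{\max}(\Sigma)$.
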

\begin{proof}
  Assume without loss of generality that $\Sigma$ is diagonal, since the norm is
  rotationally invariant. Observe that
    $\|X\|_2 \given \forall i ~ X_i > \alpha$
  stochastically dominates $\|X\|_2 \given \|X\|_2 > \alpha$.
  (Geometrically, the latter conditioning shifts the probability mass away from the origin.)
  Therefore,
  \begin{align*}
    \E{\; \|X\|_2 \given \|X\|_2 > \alpha \;}
    &\le \E{\; \|X\|_2 \given \forall i ~ X_i > \alpha \;} \\
    &= \textstyle  \E{\sum_{i=1}^d X_i \given \forall i ~ X_i > \alpha}
    \le \sum_{i=1}^d \p{t+ \frac{\lambda_i(\Sigma)}{\alpha}}
  \end{align*}
  by Lemma~\ref{lem:gaus_exp_bound}, where
    $\lambda_i(\Sigma) \leq \lambda_{\max}(\Sigma)$ is the $i$th
  eigenvalue of $\Sigma$.
\end{proof}

\begin{fact}
  If $X$ is a $\sigma$-subgaussian random variable, then
  \[
    \Pr[|X-\E{X}| > t] \le 2e^{-t^2/(2\sigma^2)}.
  \]
  \label{fact:subg_def}
\end{fact}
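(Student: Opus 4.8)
This is the textbook sub-Gaussian tail bound, and the plan is to prove it by the Chernoff method. First I would note that the inequality we actually need is the moment-generating-function form of $\sigma$-sub-Gaussianity: for a mean-zero random variable $X$, $\E{e^{\lambda X}} \le e^{\lambda^2\sigma^2/2}$ for every $\lambda\in\R$ (this is the property of $\mc N(0,\sigma^2)$, and it is this form that is invoked whenever the Fact is applied to Gaussian perturbations and rewards elsewhere in the paper). Replacing $X$ by $X-\E{X}$ — which changes neither $|X-\E{X}|$ nor the sub-Gaussian constant — lets us assume $\E{X}=0$.

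Next, for the upper tail I would fix $\lambda>0$ and apply Markov's inequality to the nonnegative random variable $e^{\lambda X}$:
\[
  \Pr[X>t] = \Pr[e^{\lambda X} > e^{\lambda t}]
  \le e^{-\lambda t}\,\E{e^{\lambda X}}
  \le \exp\p{\tfrac12\lambda^2\sigma^2 - \lambda t}.
\]
Then I would optimize the exponent over $\lambda>0$: the quadratic $\tfrac12\sigma^2\lambda^2 - t\lambda$ attains its minimum $-t^2/(2\sigma^2)$ at $\lambda = t/\sigma^2$, so $\Pr[X>t]\le e^{-t^2/(2\sigma^2)}$. Running the identical argument on $-X$ (also mean-zero and $\sigma$-sub-Gaussian) gives $\Pr[X<-t]\le e^{-t^2/(2\sigma^2)}$, and a union bound over $\{X>t\}\cup\{X<-t\}$ yields the claimed $\Pr[|X-\E{X}|>t]\le 2e^{-t^2/(2\sigma^2)}$.

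There is no genuine difficulty here; the only thing to be careful about is that the ``$\E{e^{\sigma X^2}}<\infty$'' phrasing in the paper's informal definition must be read as the standard MGF bound $\E{e^{\lambda(X-\E X)}}\le e^{\lambda^2\sigma^2/2}$, which is precisely what makes the Chernoff step go through and what holds with the stated constant for the Gaussian random variables the Fact is used on.
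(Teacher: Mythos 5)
Your proof is correct and is the standard Chernoff/MGF argument; the paper itself states this as a Fact without any proof, treating it as known, so there is nothing to compare against. One point worth keeping from your write-up: the paper's stated definition of $\sigma$-subgaussian ($\E{e^{\sigma X^2}}<\infty$) is too weak as literally written to yield the constant $2e^{-t^2/(2\sigma^2)}$, and your reading of it as the moment-generating-function bound $\E{e^{\lambda(X-\E{X})}}\le e^{\lambda^2\sigma^2/2}$ is exactly the property needed both for the Chernoff step here and for the places the Fact is invoked (Gaussian perturbations and reward noise), so your proof correctly identifies and resolves the only subtlety.
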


\begin{lemma}
  If $X_1, \dots, X_n$ are independent $\sigma$-subgaussian random variables, then
  \begin{align*}
    \Pr\b{\max_i |X_i-\E{X_i}| > \sigma\sqrt{2\log\frac{2n}{\delta}}} \le \delta.
  \end{align*}
  \label{lem:subg_union_bound}
\end{lemma}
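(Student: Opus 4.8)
The plan is a one-line union bound built on Fact~\ref{fact:subg_def}. Set the threshold $t := \sigma\sqrt{2\log(2n/\delta)}$. For each fixed $i$, Fact~\ref{fact:subg_def} applied to the $\sigma$-subgaussian variable $X_i$ gives
\[
  \Pr\b{|X_i - \E{X_i}| > t} \le 2 e^{-t^2/(2\sigma^2)}
    = 2 e^{-\log(2n/\delta)} = \frac{\delta}{n}.
\]
Then I would apply the union bound over $i = 1, \dots, n$:
\[
  \Pr\b{\max_i |X_i - \E{X_i}| > t}
    \le \sum_{i=1}^n \Pr\b{|X_i - \E{X_i}| > t}
    \le n \cdot \frac{\delta}{n} = \delta,
\]
which is exactly the claimed bound.

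There is essentially no obstacle here — the only point worth noting is that the independence hypothesis is not actually used: the union bound works for arbitrarily dependent $X_i$'s, and independence is stated only because that is the setting in which the lemma is invoked (e.g.\ for the coordinates of the perturbation vectors, cf.\ \eqref{eq:bg-proofs-highprob-R-hat}). If one wanted a two-sided-in-$n$ or sharper constant one could instead control $\E{\max_i (X_i - \E{X_i})}$ directly via a moment-generating-function/Jensen argument, but that is unnecessary for the stated high-probability form, so I would not pursue it.
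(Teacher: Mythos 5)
Your proof is correct and is essentially identical to the paper's: both apply Fact~\ref{fact:subg_def} with $t = \sigma\sqrt{2\log(2n/\delta)}$ to get a per-variable failure probability of $\delta/n$ and then take a union bound. Your side remark that independence is not actually needed for the union bound is also accurate.
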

\begin{proof}
  For any $X_i$, we know from Fact~\ref{fact:subg_def} that
  \[
    \Pr\b{|X_i - \E{X_i}| > \sigma \sqrt{2 \log \frac{2n}{\delta}}}
    \le 2\exp\p{-\frac{2\sigma^2 \log \frac{2n}{\delta}}{2\sigma^2}}
    = 2\exp\p{-\log \frac{2n}{\delta}}
    = \frac{\delta}{n}.
  \]
  A union bound completes the proof.
\end{proof}
\begin{lemma}
  If $X_1, \dots, X_K$ are independent zero-mean $\sigma$-subgaussian random variables, then
  \[
    \textstyle \E{\max_i X_i} \le \sigma \sqrt{2 \log K}.
  \]
  \label{lem:subgaussian_max}
\end{lemma}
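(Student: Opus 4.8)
The plan is to use the standard moment-generating-function (MGF) bound for the maximum of subgaussian random variables. The only distributional input I would invoke is that a zero-mean $\sigma$-subgaussian random variable $X$ satisfies $\E{e^{\lambda X}} \le e^{\lambda^2\sigma^2/2}$ for every $\lambda \in \R$; this is the form of subgaussianity consistent with the constant appearing in Fact~\ref{fact:subg_def}, and for $X \sim \mc N(0,\sigma^2)$ it holds with equality. Note that independence of the $X_i$ plays no role in the argument: all that is used is the per-variable MGF bound together with the trivial step $\max_i y_i \le \sum_i y_i$ for nonnegative reals $y_i$.

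First I would fix a parameter $\lambda > 0$, apply Jensen's inequality to the convex map $z \mapsto e^{\lambda z}$, and then bound the maximum of the exponentials by their sum:
\[
  e^{\lambda\,\E{\max_i X_i}}
  \;\le\; \E{e^{\lambda \max_i X_i}}
  \;=\; \E{\max_i e^{\lambda X_i}}
  \;\le\; \sum_{i=1}^K \E{e^{\lambda X_i}}
  \;\le\; K\, e^{\lambda^2\sigma^2/2}.
\]
Taking logarithms and dividing through by $\lambda$ yields
\[
  \E{\max_i X_i} \;\le\; \frac{\log K}{\lambda} + \frac{\lambda\sigma^2}{2}.
\]
It then remains to optimize the right-hand side over $\lambda > 0$. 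The minimizer is $\lambda = \sqrt{2\log K}/\sigma$, and substituting it gives
\[
  \E{\max_i X_i} \;\le\; \frac{\sigma\log K}{\sqrt{2\log K}} + \frac{\sigma\sqrt{2\log K}}{2} \;=\; \sigma\sqrt{2\log K},
\]
which is the claimed bound.

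I do not expect any real obstacle here; the computation is entirely routine. The only point warranting a moment of care is to fix the subgaussian parameter in the MGF bound so that the leading constant comes out as exactly $\sqrt{2}$ rather than a larger absolute constant. If one wished to start only from the tail estimate of Fact~\ref{fact:subg_def}, one could integrate that tail to obtain an MGF bound of the form $\E{e^{\lambda X}} \le e^{c\lambda^2\sigma^2}$ for an absolute constant $c$ and rerun the same optimization, at the cost of a slightly worse leading constant; since the intended application is to Gaussian perturbations, the sharp MGF bound is available directly and yields $\sqrt{2}$.
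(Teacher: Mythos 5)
Your proof is correct and is essentially identical to the paper's: both apply Jensen's inequality to $e^{\lambda \max_i X_i}$, bound the maximum of exponentials by their sum using the MGF bound $\E{e^{\lambda X_i}} \le e^{\lambda^2\sigma^2/2}$, and optimize at $\lambda = \sqrt{2\log K}/\sigma$. No differences worth noting.
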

\begin{proof}
Let $X = \max X_i$. Since each $X_i$ is $\sigma$-subgaussian, it follows that
  \[
    \E{e^{\lambda X_i}} \le \exp\p{\frac{\lambda^2 \sigma^2}{2}}.
  \]
  Using Jensen's inequality, we have
  \begin{align*}
    \exp\p{\lambda\E{X}} 
    \le \E{\exp\p{\lambda X}} 
    = \E{\max_i \exp\p{\lambda X_i}} 
    \le \sum_{i} \E{\exp\p{\lambda X_i}} 
    \le K e^{\lambda^2\sigma^2/2}.
  \end{align*}
  Rearranging, we have
  \[
    \E{X} \le \frac{\log K}{\lambda} + \frac{\lambda \sigma^2}{2}.
  \]
  Setting $\lambda = \frac{\sqrt{2 \log K}}{\sigma}$, we have
  $  \E{X} \le \sigma \sqrt{2 \log K}$ as needed
\end{proof}

\begin{lemma}
  If $\theta \sim \mc N(\pmt, \pvt)$ where $\pmt\in \R^d$ and $\pvt \in \R^{d \times d}$, then
$\E{\; \|\theta - \pmt\|_2\; } \le \sqrt{d \lambda_{\max}(\pvt)}$.
  \label{lem:gaus_norm}
\end{lemma}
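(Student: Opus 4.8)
The plan is to bound the first moment by the square root of the second moment via Jensen's inequality, and then evaluate that second moment as the trace of the covariance matrix, which is at most $d\,\lambda_{\max}(\pvt)$.

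First I would set $Y = \theta - \pmt$, so that $Y\sim\mc N(0,\pvt)$. Since the map $z\mapsto\sqrt{z}$ is concave on $[0,\infty)$, Jensen's inequality gives $\E{\|Y\|_2} = \E{\sqrt{\|Y\|_2^2}} \le \sqrt{\E{\|Y\|_2^2}}$. Next I would expand $\|Y\|_2^2 = \sum_{i=1}^d Y_i^2$ and use linearity of expectation together with $\E{Y_i^2} = \pvt_{ii}$ (as $Y$ is zero-mean), so that $\E{\|Y\|_2^2} = \sum_{i=1}^d \pvt_{ii} = \operatorname{tr}(\pvt)$. Since $\pvt$ is positive semidefinite with eigenvalues $\lambda_1(\pvt)\LDOTS \lambda_d(\pvt)$, each at most $\lambda_{\max}(\pvt)$, we get $\operatorname{tr}(\pvt) = \sum_{i=1}^d \lambda_i(\pvt) \le d\,\lambda_{\max}(\pvt)$. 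Chaining these inequalities yields $\E{\|\theta - \pmt\|_2} \le \sqrt{d\,\lambda_{\max}(\pvt)}$, as claimed.

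There is no real obstacle here: this is a routine second-moment computation. The only point requiring a little care is invoking Jensen's inequality in the correct direction --- using concavity of the square root, so that the expectation moves \emph{inside} the radical with a ``$\le$''; the rest is linearity of expectation and the standard fact that the trace of a symmetric matrix equals the sum of its eigenvalues.
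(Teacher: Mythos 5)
Your proof is correct, but it takes a different route from the paper's. The paper whitens the Gaussian: it writes $\theta - \pmt = \pvt^{1/2}\,Z$ with $Z = \pvt^{-1/2}(\theta-\pmt) \sim \mc N(0,I)$, invokes a cited fact that $\E{\|Z\|_2}\le\sqrt{d}$ for a standard $d$-dimensional Gaussian, and then pays a factor $\|\pvt^{1/2}\|_2 = \sqrt{\lambda_{\max}(\pvt)}$ via the operator norm. You instead apply Jensen's inequality directly to $\|\theta-\pmt\|_2$ and compute the second moment as $\operatorname{tr}(\pvt)$, which you bound by $d\,\lambda_{\max}(\pvt)$. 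Your argument is self-contained (the cited fact in the paper is itself usually proved by exactly your Jensen step, so you have effectively inlined it), it uses nothing about Gaussianity beyond the existence of the covariance matrix, and along the way it establishes the sharper intermediate bound $\E{\|\theta-\pmt\|_2}\le\sqrt{\operatorname{tr}(\pvt)}$, which can be much smaller than $\sqrt{d\,\lambda_{\max}(\pvt)}$ when the eigenvalues of $\pvt$ are unbalanced. The paper's factorized form is marginally more convenient when one wants to reuse the whitened vector $\pvt^{-1/2}(\theta-\pmt)$ elsewhere (as the surrounding proofs do), but for the stated inequality both approaches are equally valid.
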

\begin{proof}
  From~\cite{chandrasekaran2012convex}, the expected norm of a standard normal
  $d$-dimensional Gaussian is at most $\sqrt{d}$. Using the fact that
$\pvt^{-1/2} (\theta - \pmt) \sim \mc N(0, I)$,
  we have
  \begin{align*}
    \E{\|\theta - \pmt\|_2} 
    &= \E{\|\pvt^{1/2} \pvt^{-1/2}(\theta - \pmt)\|_2} \\
    &\le \|\pvt^{1/2}\|_2 \E{\|\pvt^{-1/2}(\theta - \pmt)\|_2} 
    \le \sqrt{d\lambda_{\max}(\pvt)}.
  \end{align*}
\end{proof}

\begin{lemma}[Lemma 2.2 in \citet{dasgupta2003elementary}]
  If $X \sim \chi^2(d)$, \ie $X = \sum_{i=1}^d X_i^2$, where $X_1 \LDOTS X_d$ are independent standard Normal random variables, then
  \begin{align*}
    \Pr\b{X \le \beta d} &\le (\beta e^{1-\beta})^{d/2} & \text{for any $\beta\in (0,1)$}, \\
    \Pr\b{X \ge \beta d} &\le (\beta e^{1-\beta})^{d/2} & \text{for any $\beta>1$}.
  \end{align*}
  \label{lem:chi_sq_conc}
\end{lemma}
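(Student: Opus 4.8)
The plan is a standard Chernoff bound based on the moment generating function of a chi-squared random variable. First I would record the identity $\E{e^{tX}} = (1-2t)^{-d/2}$, valid for any $t<\tfrac12$ when $X\sim\chi^2(d)$; this follows by independence from the one-dimensional fact $\E{e^{tX_i^2}} = (1-2t)^{-1/2}$ for a standard normal $X_i$. With the MGF in hand, both tail bounds are obtained by applying Markov's inequality to an exponential and optimizing the free parameter.

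For the lower-tail bound, fix $\beta\in(0,1)$. For any $t>0$, Markov's inequality applied to $e^{-tX}$ gives
\[
  \Pr\b{X \le \beta d} = \Pr\b{e^{-tX} \ge e^{-t\beta d}} \le e^{t\beta d}\,(1+2t)^{-d/2}.
\]
Minimizing the exponent $t\beta d - \tfrac{d}{2}\log(1+2t)$ over $t>0$ (differentiate and set to zero) yields $t^\star = \tfrac{1-\beta}{2\beta}>0$, at which $1+2t^\star = 1/\beta$; substituting gives the bound $e^{(1-\beta)d/2}\beta^{d/2} = (\beta e^{1-\beta})^{d/2}$, as claimed.

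The upper-tail bound is entirely symmetric: for $\beta>1$ and $t\in(0,\tfrac12)$, Markov applied to $e^{tX}$ gives $\Pr\b{X \ge \beta d} \le e^{-t\beta d}(1-2t)^{-d/2}$, and minimizing the exponent over $t$ produces $t^\star = \tfrac{\beta-1}{2\beta}\in(0,\tfrac12)$, so that $1-2t^\star = 1/\beta$ and the bound again simplifies to $(\beta e^{1-\beta})^{d/2}$.

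There is no genuine obstacle here: the only things to verify are the closed form of the MGF (immediate by factoring over coordinates and the substitution $u = (1-2t)^{1/2}x$ in the one-dimensional Gaussian integral) and that the optimal $t^\star$ lies in the admissible range in each case, which holds precisely because $\beta\in(0,1)$ in the first bound and $\beta>1$ in the second. Since this is a textbook fact, one could equally just invoke \citet{dasgupta2003elementary} and omit the derivation; we include it only for completeness.
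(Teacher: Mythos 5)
Your proof is correct: the paper states this lemma as an imported result from \citet{dasgupta2003elementary} and gives no proof of its own, and your Chernoff-bound derivation via the chi-squared moment generating function (with the optimal $t^\star=\tfrac{1-\beta}{2\beta}$, resp.\ $\tfrac{\beta-1}{2\beta}$, landing in the admissible range) is exactly the standard argument underlying the cited source. Nothing to fix.
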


\begin{lemma}[Hoeffding bound]
  If $\bar{X} = \frac{1}{n} \sum_{i=1}^n X_i$, where the $X_i$'s are independent
  $\sigma$-subgaussian random variables with zero mean, then
  \begin{align*}
    \max\left(\Pr\b{\bar{X} \ge t},\;\Pr\b{\bar{X} \le -t}\right)
     \le \exp\p{-\frac{nt^2}{2\sigma^2}}
        \quad\text{for all $t>0$}, \\
  \max\left(
    \Pr\b{\overline{X} \le -\sigma
    \sqrt{\tfrac{2}{n}\log \tfrac{1}{\delta}}},\quad
    \Pr\b{\overline{X} \ge \sigma
    \sqrt{\tfrac{2}{n}\log \tfrac{1}{\delta}}}  \right) \le \delta
    \quad\text{for all $\delta>0$}.
  \end{align*}
  \label{lem:hoeffding}
\end{lemma}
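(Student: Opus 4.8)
The plan is the standard Chernoff-bound argument via moment generating functions. The only property of $\sigma$-subgaussianity that I will use is the bound $\E{e^{\lambda X_i}} \le \exp\p{\lambda^2\sigma^2/2}$, valid for every $\lambda \in \R$; this is exactly the inequality invoked in the proof of Lemma~\ref{lem:subgaussian_max}, and it holds with equality when $X_i \sim \mc N(0,\sigma^2)$, which is the only case actually needed elsewhere in the paper.

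Fix $\lambda > 0$. Since the $X_i$ have zero mean, so does $\bar X$, and by Markov's inequality applied to $e^{\lambda \sum_i X_i}$ followed by independence of the $X_i$,
\[
  \Pr\b{\bar X \ge t}
  = \Pr\b{\textstyle\sum_{i=1}^n X_i \ge nt}
  \le e^{-\lambda n t}\,\textstyle\prod_{i=1}^n \E{e^{\lambda X_i}}
  \le \exp\p{-\lambda n t + \tfrac{n\lambda^2\sigma^2}{2}}.
\]
The exponent is a convex quadratic in $\lambda$, minimized at $\lambda = t/\sigma^2 > 0$ with value $-nt^2/(2\sigma^2)$, which gives $\Pr[\bar X \ge t] \le \exp(-nt^2/(2\sigma^2))$. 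For the lower tail I would run the identical argument on $-X_1,\dots,-X_n$, which are again independent, zero-mean, and $\sigma$-subgaussian, yielding $\Pr[\bar X \le -t] = \Pr[\tfrac1n\sum_i(-X_i) \ge t] \le \exp(-nt^2/(2\sigma^2))$. The second ($\delta$-)form is then a change of variables: taking $t = \sigma\sqrt{(2/n)\log(1/\delta)}$ makes $nt^2/(2\sigma^2) = \log(1/\delta)$, so each tail is at most $e^{-\log(1/\delta)} = \delta$.

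I do not expect any genuine obstacle here — this is a textbook computation and the bookkeeping above is all that is needed. The one point worth a clause of care is that the nominal definition of $\sigma$-subgaussianity stated in the paper ($\E{e^{\sigma X^2}} < \infty$) is used here, as in the rest of the paper, only through the sub-Gaussian MGF inequality, so I would record that inequality explicitly as the working hypothesis rather than re-derive it; note also that it is the \emph{one-sided} MGF argument that lets us drop the factor of $2$ which the two-sided bound of Fact~\ref{fact:subg_def} would carry.
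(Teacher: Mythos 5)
Your proof is correct: it is the standard Chernoff--MGF argument, the optimization over $\lambda$ is carried out correctly, and the change of variables for the $\delta$-form checks out. The paper itself states this lemma as a known Hoeffding bound and provides no proof of its own, so there is nothing to compare against; your derivation simply supplies the standard argument, and your remark that subgaussianity enters only through the one-sided MGF inequality $\E{e^{\lambda X_i}} \le \exp\p{\lambda^2\sigma^2/2}$ (the same inequality used in the proof of Lemma~\ref{lem:subgaussian_max}) is an accurate reading of how the paper uses the notion.
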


\OMIT{ 
\subsection{KL-divergence}

We use some basic facts about KL-divergence. Let us recap the definition: given two distributions $P,Q$ on the same finite outcome space $\Omega$, KL-divergence from $P$ to $Q$ is
\[ \kl{P}{Q} := - \sum_{\omega \in \Omega} P(\omega) \log \tfrac{Q(\omega)}{P(\omega)} .\]

\begin{lemma}[High-probability Pinsker Inequality~\citep{T09}]
\label{lem:pinkser}
For any probability distributions $P$ and $Q$ over the same sample space and any arbitrary event $E$,
\[
P(E) + Q(\overline{E}) \ge \tfrac{1}{2}\, e^{-\kl{P}{Q}}.
\]
\end{lemma}

\begin{lemma}
\label{lem:bern_kl}
Let $P$ and $Q$ be Bernoulli distributions with means $p \in [1/2-\eps,
1/2+\eps]$ and $q \in [1/2-\eps, 1/2+\eps]$ respectively, with $\eps \le 1/4$. Then
  $\kl{P}{Q} \le \frac{7}{3}\,\eps^2$.
\end{lemma}
\begin{proof}
For any $\eps \le 1/4$,

\begin{align*}
\log \p{\frac{p(1-p)}{q(1-q)}} &\le
\log\p{\frac{1/4}{1/4-\eps^2}} \le
\log\p{\frac{1}{1-4\eps^2}}    \le
 \frac{14\, \eps^2}{3} \tag {By
    Lemma~\ref{lem:log_expansion}}
\\
  \kl{P}{Q} &= p \log \p{\frac{p}{q}} + (1-p) \log\p{\frac{1-p}{1-q}} \\
  &\le \p{\frac{1}{2} + \eps} \log \p{\frac{p(1-p)}{q(1-q)}}
    = \p{\frac{1}{2} + \eps} \frac{14 \eps^2}{3}
   \le \frac{7\eps^2}{2}.
\end{align*}
\end{proof}

} 

\subsection{Linear Algebra}

We use several facts from linear algebra. In what follows, recall that
$\lambda_{\min}(M)$ and $\lambda_{\max}(M)$ denote the minimal and the maximal eigenvalues of matrix $M$, resp.
For two matrices $A,B$, let us write $B \succeq A$ to mean that $B-A$ is positive semidefinite.

\begin{lemma}
    $\lambda_{\max}(vv\tran) = \|v\|_2^2$ ~~ for any $v \in \R^d$.
  \label{lem:norm_eigen}
\end{lemma}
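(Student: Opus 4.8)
The plan is to use the variational (Rayleigh quotient) characterization of the maximal eigenvalue of a symmetric matrix. Since $vv\tran$ is symmetric, we have
\[
  \lambda_{\max}(vv\tran) = \max_{\|u\|_2 = 1} u\tran (vv\tran) u = \max_{\|u\|_2=1} (v\tran u)^2.
\]
First I would dispose of the trivial case $v = 0$, where both sides equal $0$. For $v \neq 0$, by Cauchy--Schwarz $(v\tran u)^2 \le \|v\|_2^2 \|u\|_2^2 = \|v\|_2^2$ for every unit vector $u$, and equality is attained at $u = v/\|v\|_2$, which gives $(v\tran u)^2 = \|v\|_2^2$. Hence the maximum is exactly $\|v\|_2^2$.

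As an alternative (in case a self-contained eigenvector argument is preferred), one can observe directly that $(vv\tran)v = v(v\tran v) = \|v\|_2^2\, v$, so $\|v\|_2^2$ is an eigenvalue of $vv\tran$ with eigenvector $v$; moreover any $u \perp v$ satisfies $(vv\tran)u = 0$, so the remaining $d-1$ eigenvalues are $0$. Since $vv\tran \succeq 0$, all eigenvalues are nonnegative, and therefore $\|v\|_2^2$ is the largest. There is essentially no obstacle here; the only thing to be careful about is the degenerate case $v=0$, which must be handled separately (or simply absorbed, since then every eigenvalue is $0 = \|v\|_2^2$).
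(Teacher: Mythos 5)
Your proposal is correct. Your primary argument (the Rayleigh quotient characterization $\lambda_{\max}(vv\tran) = \max_{\|u\|_2=1}(v\tran u)^2$ combined with Cauchy--Schwarz) is a genuinely different route from the paper's, which instead argues directly that $vv\tran$ has rank one, that $v$ is an eigenvector with eigenvalue $v\tran v = \|v\|_2^2$, and that this is therefore the only nonzero eigenvalue --- i.e., exactly the ``alternative'' you sketch in your second paragraph. The variational argument buys a cleaner treatment of optimality (it never needs to enumerate the spectrum), while the paper's eigenvector computation is more elementary and self-contained, requiring no appeal to the variational characterization of eigenvalues of symmetric matrices. One small point in your favor: you explicitly handle the degenerate case $v=0$, which the paper's phrasing (``it has one eigenvector with nonzero eigenvalue'') silently skips, though the lemma remains trivially true there since every eigenvalue is $0 = \|v\|_2^2$.
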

\begin{proof}
  $vv\tran$ has rank one, so it has one eigenvector with nonzero eigenvalue. $v$
  is an eigenvector since $(vv\tran)v = (v\tran v) v$, and it has eigenvalue
  $v\tran v = \|v\|_2^2$. This is the only nonzero eigenvalue, so
  $\lambda_{\max}(vv\tran) = \|v\|_2^2$.
\end{proof}

\begin{lemma} \label{lem:conj_succ}
  For symmetric matrices $A$, $B$ with $B$ invertible,
  \[
    B \succeq A \Longleftrightarrow I \succeq B^{-1/2} A B^{-1/2}
  \]
\end{lemma}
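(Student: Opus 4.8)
The plan is to unwind both sides of each implication through the definition of the positive semidefinite ordering, using the change of variables induced by the (symmetric, invertible) matrix $B^{-1/2}$. Recall that $B \succeq A$ means precisely that $v\tran (B-A) v \ge 0$ for every $v \in \R^d$, and similarly $I \succeq B^{-1/2} A B^{-1/2}$ means $w\tran (I - B^{-1/2} A B^{-1/2}) w \ge 0$ for every $w \in \R^d$. Since $B$ is symmetric and invertible and (implicitly, so that $B^{-1/2}$ is well-defined) positive definite, $B^{-1/2}$ exists, is symmetric, and is itself invertible; hence the linear map $v \mapsto B^{1/2} v$ is a bijection of $\R^d$ onto itself.

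First I would prove the forward direction. Assume $B \succeq A$. Fix an arbitrary $w \in \R^d$ and set $v = B^{-1/2} w$. Then
\[
  w\tran \left( I - B^{-1/2} A B^{-1/2} \right) w
  = w\tran B^{-1/2} (B - A) B^{-1/2} w
  = v\tran (B - A) v \ge 0,
\]
using symmetry of $B^{-1/2}$ in the first equality and the hypothesis in the last. Since $w$ was arbitrary, $I \succeq B^{-1/2} A B^{-1/2}$.

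For the converse, assume $I \succeq B^{-1/2} A B^{-1/2}$, fix an arbitrary $v \in \R^d$, and set $w = B^{1/2} v$ (legitimate because $B^{1/2}$ is defined and symmetric). Then
\[
  v\tran (B - A) v
  = v\tran B^{1/2} \left( I - B^{-1/2} A B^{-1/2} \right) B^{1/2} v
  = w\tran \left( I - B^{-1/2} A B^{-1/2} \right) w \ge 0,
\]
so $B \succeq A$. I do not anticipate any real obstacle here; the only point requiring a word of care is the tacit assumption that $B$ is positive definite (not merely invertible), which is exactly what makes $B^{\pm 1/2}$ meaningful and symmetric, and the observation that these square roots are invertible so that the substitutions range over all of $\R^d$.
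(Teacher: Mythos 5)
Your proof is correct and follows essentially the same route as the paper's: both arguments reduce the claim to the quadratic-form definition of $\succeq$ and use the bijective substitution $x \mapsto B^{1/2}x$ (the paper writes it as a single chain of equivalences, you split it into two directions). Your side remark that $B$ must implicitly be positive definite, not merely invertible, for $B^{\pm 1/2}$ to be well-defined and symmetric is a fair observation that the paper leaves unstated.
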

\begin{proof}
  \begin{align*}
    B \succeq A &\Longleftrightarrow x\tran B x \ge x\tran A x \tag{$\forall x$} \\
    &\Longleftrightarrow x\tran(B-A) x \ge 0 \tag{$\forall x$} \\
    &\Longleftrightarrow x\tran B^{1/2} (I-B^{-1/2}AB^{-1/2}) B^{1/2} x \ge 0 \tag{$\forall x$} \\
    &\Longleftrightarrow x\tran (I-B^{-1/2}AB^{-1/2}) x \ge 0 \tag{$\forall x$} \\
    &\Longleftrightarrow I \succeq B^{-1/2}AB^{-1/2}.
  \end{align*}
\end{proof}

\begin{lemma}
  If $A \succeq 0$ and $B \succeq 0$, then
$\lambda_{\min}(A + B) \ge \lambda_{\min}(A)$.
  \label{lem:min_ev_sum}
\end{lemma}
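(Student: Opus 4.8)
The plan is to invoke the Rayleigh–Ritz variational characterization of the minimal eigenvalue: for any symmetric matrix $M \in \R^{d\times d}$,
\[
  \lambda_{\min}(M) = \min_{x \in \R^d:\; \|x\|_2 = 1} x\tran M x.
\]
Both $A+B$ and $A$ are symmetric (being positive semidefinite), so this applies to each of them.

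First I would fix an arbitrary unit vector $x$ and expand $x\tran (A+B) x = x\tran A x + x\tran B x$. Since $B \succeq 0$, the second term satisfies $x\tran B x \ge 0$, hence $x\tran (A+B) x \ge x\tran A x$. Next I would lower-bound the right-hand side by $\lambda_{\min}(A)$, which holds because $x\tran A x \ge \min_{\|y\|_2=1} y\tran A y = \lambda_{\min}(A)$. Combining, $x\tran(A+B)x \ge \lambda_{\min}(A)$ for every unit vector $x$. Taking the minimum of the left-hand side over all unit $x$ and using the variational characterization once more yields $\lambda_{\min}(A+B) \ge \lambda_{\min}(A)$, as desired.

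There is essentially no obstacle here; the only thing to be slightly careful about is to note that positive semidefiniteness of $A+B$ and $A$ guarantees they are symmetric, so the spectral/variational characterization of $\lambda_{\min}$ is legitimate. (Alternatively, one could phrase the same argument entirely in terms of the Loewner order: $A+B \succeq A$ because $B \succeq 0$, and $M \mapsto \lambda_{\min}(M)$ is monotone with respect to $\succeq$ on symmetric matrices — but unpacking that monotonicity is exactly the Rayleigh-quotient computation above, so I would just present the direct argument.)
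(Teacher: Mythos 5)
Your proposal is correct and follows exactly the same route as the paper: both use the Rayleigh--Ritz characterization $\lambda_{\min}(M) = \min_{\|x\|_2=1} x\tran M x$, expand $x\tran(A+B)x = x\tran A x + x\tran B x$, and drop the nonnegative term $x\tran B x$. Your extra remark that positive semidefiniteness entails symmetry (so the variational characterization applies) is a harmless and reasonable piece of bookkeeping that the paper leaves implicit.
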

\begin{proof}
  \begin{align*}
    \lambda_{\min}(A + B) &= \min_{\|x\|_2 = 1} x\tran (A+B) x \\
    &= \min_{\|x\|_2 = 1} x\tran A x + x\tran B x \\
    &\ge \min_{\|x\|_2 = 1} x\tran A x \tag{because $x\tran B x \ge 0$} \\
    &= \lambda_{\min}(A)
  \end{align*}
\end{proof}

\subsection{Logarithms}

We use several variants of standard inequalities about logarithms.

\begin{lemma}
    $x \ge \log(ex)$ for all $x > 0$.
  \label{lem:xex}
\end{lemma}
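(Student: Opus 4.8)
The plan is to reduce the claim to the standard inequality $\log x \le x-1$. First I would rewrite $\log(ex) = \log e + \log x = 1 + \log x$, so that the assertion $x \ge \log(ex)$ is equivalent to $x - 1 \ge \log x$ for every $x > 0$.

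To establish $\log x \le x - 1$ on $(0,\infty)$, I would consider $f(x) = x - 1 - \log x$. Its derivative is $f'(x) = 1 - 1/x$, which is negative on $(0,1)$ and positive on $(1,\infty)$; hence $f$ is strictly decreasing then strictly increasing, so it attains a global minimum at $x = 1$, where $f(1) = 1 - 1 - 0 = 0$. Therefore $f(x) \ge 0$ for all $x > 0$, which is exactly the desired inequality. An alternative route, if one prefers to avoid calculus, is to use concavity of $\log$: the line $y = x - 1$ is tangent to the graph of $\log$ at $x = 1$, and a concave function lies below each of its tangent lines, giving $\log x \le x - 1$ directly.

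There is no genuine obstacle here; the only point to keep in mind is that the domain is $x > 0$, which is exactly where $\log x$ (and $f'$) is defined, so no edge cases arise. I would present the $f(x) = x - 1 - \log x$ argument since it is the most self-contained.
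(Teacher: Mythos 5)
Your proof is correct and takes essentially the same route as the paper: both analyze the auxiliary function $x - \log(ex) = x - 1 - \log x$ and show it attains its global minimum value $0$ at $x=1$, the paper via convexity plus a vanishing derivative at $x=1$, you via the sign change of $f'$. The reduction to the standard tangent-line inequality $\log x \le x-1$ is a clean way to phrase it but not a substantively different argument.
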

\begin{proof}
Equivalently, $x - \log(ex) \ge 0$ for $x > 0$. To show this,
  observe that
  \begin{enumerate}
    \item At $x=1$, this holds with equality.
    \item At $x = 1$, the derivative is
      \[
        \frac{d}{dx} x - \log(ex) \bigg|_{x=1} = 1 - \frac{1}{x} \bigg|_{x=1} =
        0.
      \]
    \item The entire function is convex for $x > 0$, since
      \[
        \frac{d^2}{dx^2} x - \log(ex) = \frac{d}{dx} 1 - \frac{1}{x} =
        \frac{1}{x^2} > 0.
      \]
  \end{enumerate}
  This proves the lemma.
\end{proof}
\begin{corollary}
  $  x - \log x \ge \frac{e-1}{e} x$.
  \label{cor:e1ex}
\end{corollary}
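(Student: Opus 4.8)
\textbf{Proof proposal for Corollary~\ref{cor:e1ex}.} The plan is to reduce the inequality $x - \log x \ge \tfrac{e-1}{e}\,x$ to Lemma~\ref{lem:xex} by an elementary substitution. Subtracting $\tfrac{e-1}{e}\,x$ from both sides, the claim is equivalent to $\tfrac{1}{e}\,x - \log x \ge 0$, i.e. to $\tfrac{x}{e} \ge \log x$ for all $x > 0$. So the entire content of the corollary is this one rescaled inequality.

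Now I would invoke Lemma~\ref{lem:xex}, which states $y \ge \log(ey) = 1 + \log y$ for all $y > 0$. Since $x > 0$ implies $x/e > 0$, I may apply the lemma with $y = x/e$, obtaining
\[
    \frac{x}{e} \;\ge\; \log\!\left(e \cdot \frac{x}{e}\right) \;=\; \log x .
\]
Rearranging gives $x - \log x \ge x - \tfrac{x}{e} = \tfrac{e-1}{e}\,x$, which is the desired bound. (As a sanity check on the constant, equality holds at $x = e$, where both sides equal $e-1$; this matches the equality case $y = 1$ of Lemma~\ref{lem:xex}.)

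I do not anticipate any real obstacle here: the only thing to be careful about is that the substitution $y = x/e$ stays in the domain $(0,\infty)$, which is immediate. If one preferred a self-contained argument instead of citing Lemma~\ref{lem:xex}, the same conclusion follows by one line of calculus: the function $\varphi(x) = \tfrac{x}{e} - \log x$ has $\varphi'(x) = \tfrac{1}{e} - \tfrac{1}{x}$, which is negative for $x < e$ and positive for $x > e$, so $\varphi$ attains its global minimum on $(0,\infty)$ at $x = e$, where $\varphi(e) = 1 - 1 = 0$; hence $\varphi(x) \ge 0$ everywhere. Either route is a couple of lines, so I would present the substitution version as the main proof.
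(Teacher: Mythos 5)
Your proof is correct and takes essentially the same route as the paper: both reduce the claim to $\tfrac{x}{e} - \log x \ge 0$ and dispatch it by applying Lemma~\ref{lem:xex} with the substitution $y = x/e$ (the paper calls it $z$). The added calculus sanity check is fine but not needed.
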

\begin{proof}
  Using Lemma~\ref{lem:xex} and letting $z = x/e$,
  \[
    x - \log x = \frac{e-1}{e} x + \frac{1}{e}x - \log x = \frac{e-1}{e} x + z -
    \log(ez) \ge \frac{e-1}{e} x
  \]
\end{proof}
\begin{lemma}
$\log\p{\frac{1}{1-x}} \le \frac{7x}{6}$ for any $x \in [0,1/4]$.
  \label{lem:log_expansion}
\end{lemma}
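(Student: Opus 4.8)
The plan is to bound $\log\frac{1}{1-x} = -\log(1-x)$ by its Taylor series and control the tail with a geometric sum. Concretely, for $x \in [0,1)$ we have the absolutely convergent expansion $-\log(1-x) = \sum_{k\ge 1} x^k/k$. First I would separate the linear term and use $1/k \le 1/2$ for all $k \ge 2$:
\[
  \log\frac{1}{1-x} \;=\; x + \sum_{k\ge 2}\frac{x^k}{k}
  \;\le\; x + \frac12\sum_{k\ge 2} x^k
  \;=\; x + \frac12\cdot\frac{x^2}{1-x},
\]
where the last step sums the geometric series (legitimate since $x < 1$). Now restrict to $x \in [0,1/4]$: then $\frac{x}{1-x} \le \frac{1/4}{3/4} = \frac13$, so the right-hand side is at most $x\bigl(1 + \tfrac12\cdot\tfrac13\bigr) = \tfrac{7}{6}x$, which is exactly the claimed bound. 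The case $x = 0$ is trivial (both sides vanish), and for $x \in (0,1/4]$ every manipulation above is valid, so we are done.

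I do not anticipate a genuine obstacle; the only points needing care are the termwise bound $x^k/k \le x^k/2$ for $k \ge 2$ and the convergence/value of the geometric series, both immediate for $0 \le x < 1$. An alternative would be to set $f(x) = \tfrac{7x}{6} + \log(1-x)$, observe that $f$ is concave on $[0,1)$ with $f(0)=0$, hence $f$ attains its minimum over $[0,1/4]$ at an endpoint, and then verify $f(1/4) = \tfrac{7}{24} + \log\tfrac34 \ge 0$; this works too but requires establishing the numerical inequality $\log(4/3) \le 7/24$ (e.g.\ via $e^{7/24} \ge 1 + \tfrac{7}{24} + \tfrac12(\tfrac{7}{24})^2 > \tfrac43$), which is messier than the clean series bound above. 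So I would go with the series argument.
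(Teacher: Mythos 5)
Your series argument is correct and is essentially the same as the paper's proof: both expand $\log\frac{1}{1-x}=\sum_{k\ge 1}x^k/k$, bound the tail by $\frac{x^2}{2(1-x)}$ via the geometric series, and finish with $\frac{x}{2(1-x)}\le\frac16$ for $x\le 1/4$. The only cosmetic difference is that the paper rederives the Taylor expansion by differentiating and integrating, whereas you cite it directly.
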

\begin{proof}
  First, we note that
  \[
    \tfrac{d}{dx} \log\p{\tfrac{1}{1-x}} = 1-x (-(1-x)^{-2}) \cdot (-1)
    = \tfrac{1}{1-x}
    = \sum_{i=0}^\infty x^i.
  \]
  Integrating both sides, we have
  \[
    \log\p{\tfrac{1}{1-x}} = C + \sum_{i=0}^\infty \frac{x^i}{i},
  \]
  for some constant $C$ that does not depend on $x$. Taking $x = 0$ yields $C = 0$.
  Therefore,
  \[
    \log\p{\frac{1}{1-x}} \le x + \frac{x^2}{2} \sum_{i=0}^\infty x^i = x +
    \frac{x^2}{2(1-x)} = x\p{1 + \frac{x}{2(1-x)}} \le \frac{7x}{6}.
  \]
\end{proof}

\end{document}